\definecolor{orange}{rgb}{1,0.5,0}
\definecolor{darkgreen}{rgb}{0,204,0}
\newif\ifuseboldmathops
\newif\ifuseittextabbrevs
	\newcommand{\ie}{{\it i.e.}}
	\newcommand{\etal}{{et~al.}}
	\newcommand{\ie}{i.e.}
\newcommand{\undefined}{{\varnothing}}
\newcommand{\rankS}{O_S}
\newcommand{\abs}[1]{\lvert#1\rvert}
\newcommand{\bigO}{\mathcal{O}}
\acrodef{mdp}[MDP]{Markov Decision Process}
\acrodef{pomdp}[POMDP]{Partially Observable Markov Decision Process}
\acrodef{momdp}[MOMDP]{Multi-objective MDP}
\acrodef{ltl}[LTL]{Linear Temporal Logic}
\acrodef{ltlf}[LTL$_f$]{Linear Temporal Logic on Finite Traces}
\acrodef{dfa}[DFA]{Deterministic Finite Automaton}
\acrodef{nfa}[NFA]{Nondeterministic Finite Automaton}
\acrodef{nma}[NMA]{Nondeterministic Moore Automaton}
\acrodef{dma}[DMA]{Deterministic Moore Automaton}
\acrodef{tlmdp}[TLMDP]{Terminating Labeled Markov Decision Process}
\acrodef{pdfa}[PDFA]{Preference Deterministic Finite Automaton}
\acrodef{pnfa}[PNFA]{Preference Nondeterministic Finite Automaton}
\acrodef{ma}[MA]{Moore Automaton}
\theoremstyle{definition}
\newtheorem{definition}{Definition}
 \newtheorem{example}{Example}
\newtheorem{lemma}{Lemma}
\newtheorem{proposition}{Proposition}
\newtheorem{corollary}{Corollary}
\newtheorem{theorem}{Theorem}
\newenvironment{proof}{\textbf{Proof:}}{\hfill$\square\\$}
\newcommand{\calA}{\mathcal{A}}
\acrodef{smdp}[Semi-MDP]{Semi-Markov decision process}
\acrodef{mcts}[MCTS]{Monte Carlo tree search}
\acrodef{uct}[UCT]{Upper Confidence Bound 1 applied to trees}
\acrodef{scltl}[scLTL]{syntactically co-safe LTL}
\acrodef{ssp}[SSP]{Stochastic Shortest Path}
\acrodef{p2sg}[SG(2)]{Two-player Stochastic Game}
\acrodef{mc}[MC]{Markov chain}
\acrodef{prefltl}[TPL]{ Temporal Preference Logic}
\acrodef{tld}[TLwD]{Temporal Logic with Distributions}
\acrodef{mtl}[Metric TL]{Metric Temporal Logic}
\acrodef{sta}[STA]{Stochastic Timed Automaton}
\acrodef{gpf}[GPF]{generalized preference formula}
\acrodef{cp}[CP]{ceteris paribus}
\acrodef{milp}[MILP]{Mixed-Integer Linear Programming}
\acrodef{dfa}[DFA]{Deterministic Finite Automaton}
\newcommand{\nfa}{\mathcal{A}}
\newcommand{\pdfa}{\mathcal{A}}
\newcommand{\pnfa}{\mathcal{A}}
\newcommand{\SPref}{\operatorname{SP}}
\newcommand{\NU}{\operatorname{NU}}
\newcommand{\LPM}{{\rm LPM}\xspace}
\newcommand{\MCPDFA}{{\rm MCPDFA}\xspace}
\newcommand{\MCPDFAChar}{{\rm MCPDFA-Char}\xspace}
\newcommand{\MCDFA}{{\rm MCDFA}\xspace}
\newcommand{\Pref}{{\rm Pref}\xspace}
\newcommand{\PTree}{{\rm PT}\xspace}
\newcommand{\SCCs}{{\rm SCCs}\xspace}
\newcommand{\Merged}{{\rm Merged}\xspace}
\newcommand{\Yes}{{\rm Yes}\xspace}
\newcommand{\No}{{\rm No}\xspace}
\newcommand{\True}{{\rm True}\xspace}
\newcommand{\False}{{\rm False}\xspace}
\renewcommand{\NP}{{\rm NP}\xspace}
\newcommand*{\probleminternal}[4]{
	\par
	\medskip
	\noindent\fbox{\parbox{0.98\columnwidth}{
			\textbf{#4: #1} \\[0.05in]
			\renewcommand{\tabcolsep}{2pt}
			\begin{tabularx}{\linewidth}{rX}
				\emph{Input:} & #2 \\
				\emph{Output:} & #3
			\end{tabularx}
		}}
		\par
		\medskip
		\par
	}
\newcommand*{\problembox}[3]{\probleminternal{#1}{#2}{#3}{Problem}}
\newcommand*{\decproblembox}[3]{\probleminternal{#1}{#2}{#3}{Decision Problem}}
\acrodef{ltlf}[LTL$_f$]{Linear temporal logic over finite words}
\acrodef{pltlf}[PLTLf]{Preference over linear temporal logic over finite words}
\begin{document}
\tikzset{estate/.style={draw,ellipse,minimum width=1cm,minimum height=0.5cm}}

\begin{frontmatter}

\title{  Automata Learning  of Preferences  over Temporal Logic Formulas from Pairwise Comparisons\thanksref{footnoteinfo}} 

\thanks[footnoteinfo]{This paper was not presented at any IFAC 
meeting. Corresponding author J. FU. }

\author[Hazhar]{Hazhar Rahmani}\ead{hrahmani@missouristate.edu},    
\author[Jie]{Jie Fu}\ead{fujie@ufl.edu}               

\address[Hazhar]{Department of Computer Science, Missouri State University, USA}  
\address[Jie]{Department Electrical and Computer Engineering , University of Florida, USA}             

\begin{keyword}                           
Automata learning; Temporal logic inference; Preference elicitation               
\end{keyword}                             

\begin{abstract}                          
Many preference elicitation algorithms consider preference over propositional logic formulas or items with different attributes. In sequential decision making, a user's preference can be a preorder over possible outcomes, each of which is a temporal sequence of events. This paper considers a class of preference inference problems where the user's unknown preference is represented by a preorder over regular languages (sets of temporal sequences), referred to as temporal goals. Given a finite set of pairwise comparisons between finite words, the objective is to learn both the set of temporal goals and the preorder over these goals. We first show that a preference relation over temporal goals can be modeled by a Preference Deterministic Finite Automaton (PDFA), which is a deterministic finite automaton augmented with a preorder over acceptance conditions. The problem of preference inference reduces to learning the PDFA. This problem is shown to be computationally challenging, with the problem of determining whether there exists a PDFA  of size smaller than a given integer $k$, consistent with the sample, being NP-Complete. We formalize the properties of characteristic samples   and develop an algorithm that guarantees to learn, given a characteristic sample, the minimal PDFA equivalent to the true PDFA from which the sample is drawn. We present the method through a running example and provide detailed analysis using a robotic motion planning problem.
\end{abstract}

\end{frontmatter}

\section{Introduction}
  \label{sec:intro}


%
%
\begin{figure}[h!]
		\centering
		\begin{subfigure}[b]{0.5\linewidth}
			\centering
			\includegraphics[scale=0.44]{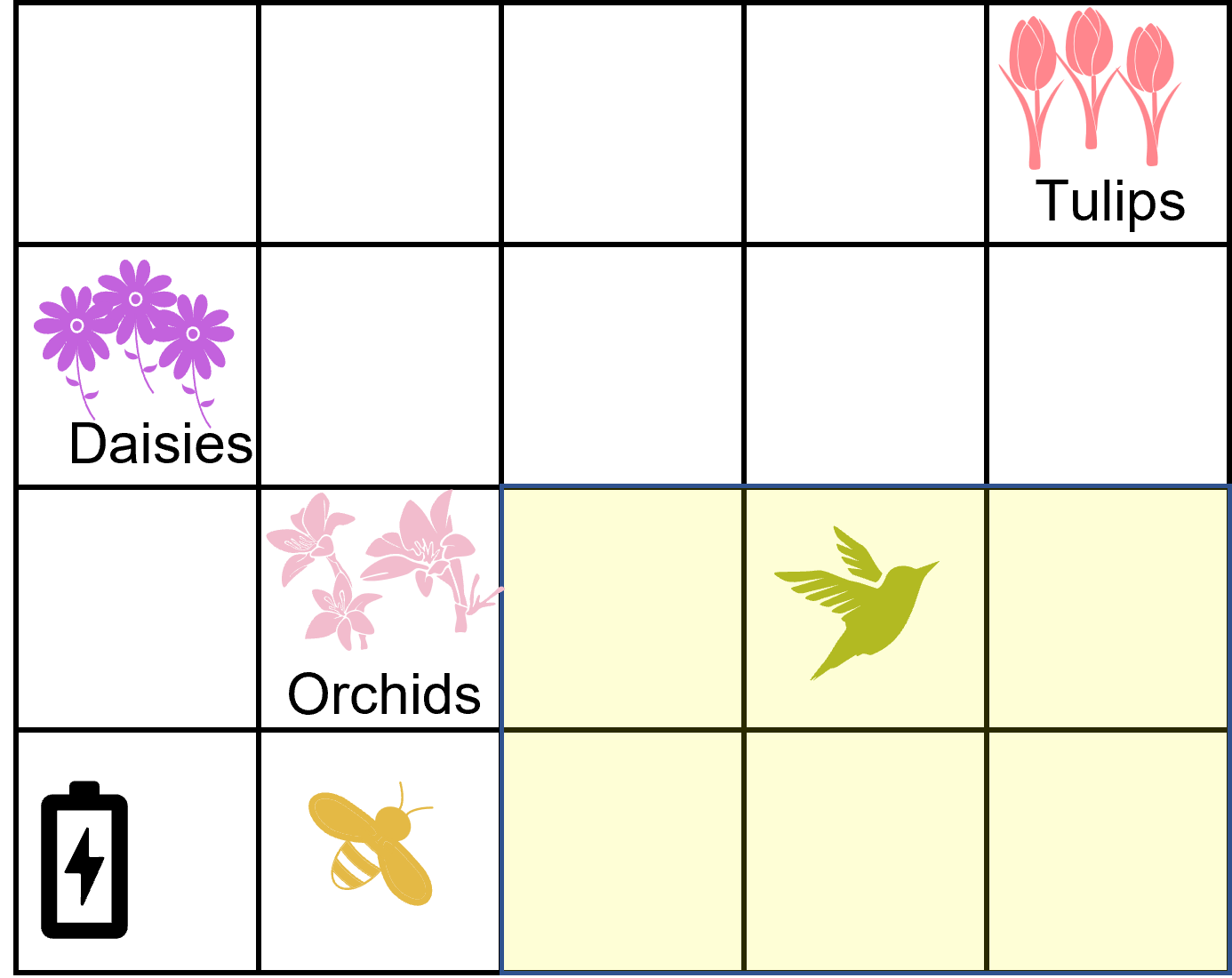}
   \caption{Garden and bee robot.}
   \label{fig:garden}
		\end{subfigure}
		\hfill
  \centering
		\begin{subfigure}[b]{0.48\linewidth}
			\centering
			\includegraphics[scale=0.44]{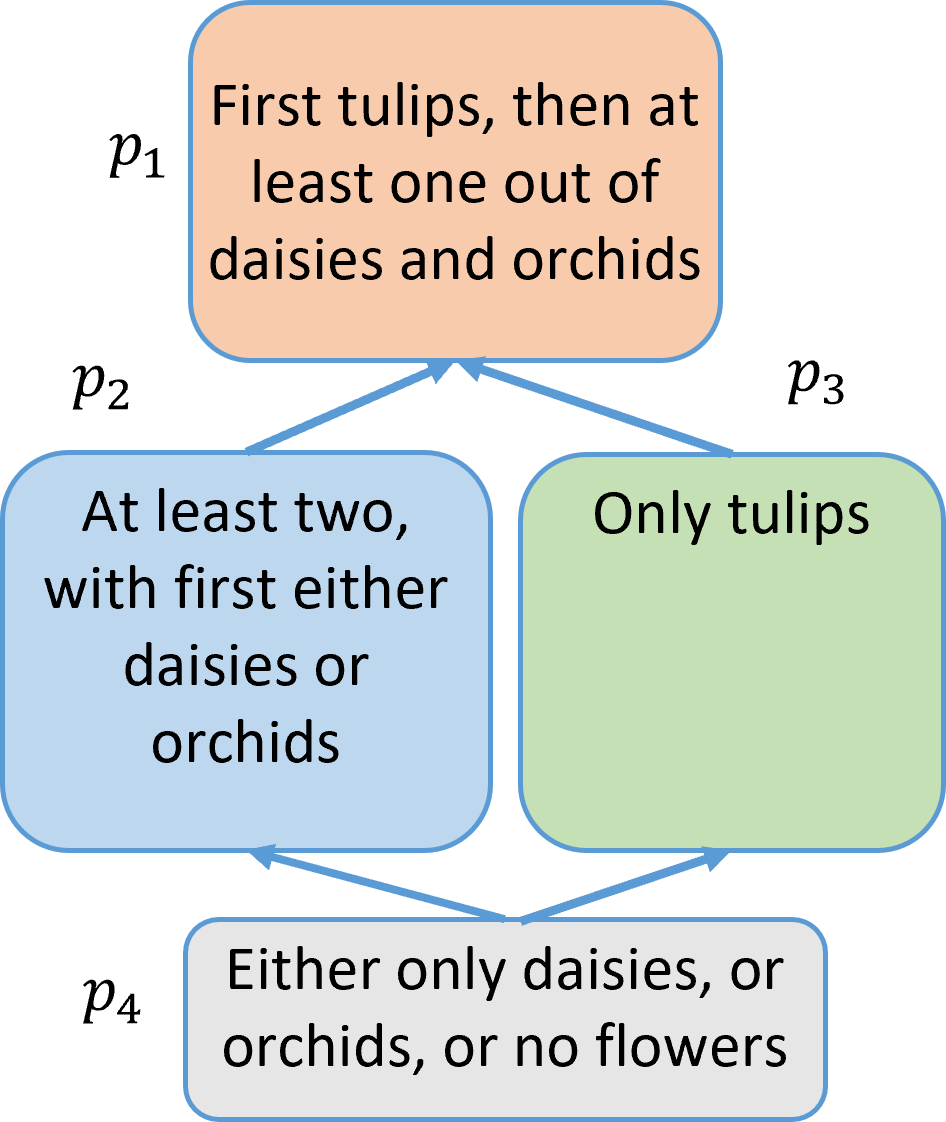}
   \caption{User preferences.}
     \label{fig:garden_preferences}
		\end{subfigure}
		\hfill
  \centering
		\begin{subfigure}[b]{0.95\linewidth}
			\centering
			\includegraphics[scale=0.54]{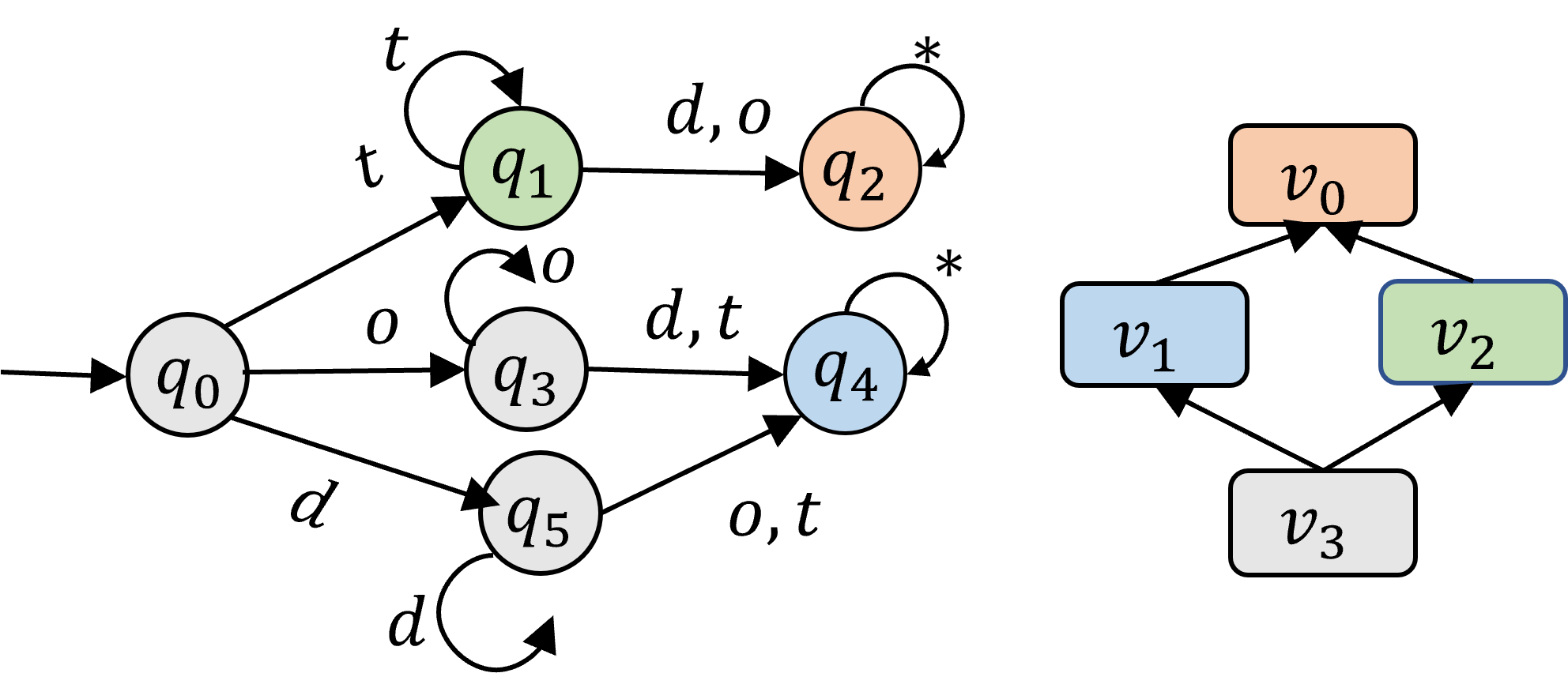}
   \caption{\ac{pdfa} encoding the preferences. For each state $q\in Q$, $\delta(q,\varnothing)=q$ and the selfloops with $\varnothing$ are omitted.}
     \label{fig:garden_pdfa}
		\end{subfigure}

        \caption{A Garden example in which the robot is tasked with pollinating the flowers of the garden.}
  \label{fig:garden_example}
	\end{figure}

As robotics and artificial intelligence (AI) continue to play an increasingly prominent role in society, the integration of user preferences into autonomous decision-making is becoming essential. This integration not only enhances human-AI alignment but also enables more personalized and effective decision support. The study of preference representation, planning, and learning has long been an active area of research in AI (see comprehensive surveys  in \cite{pigozzi2016preferences,jorge2008planning}).
%
%
%

In recent years, the focus of the community expanded from classical planning problems with simple goal-reaching objectives to a richer class of planning problems with temporally extended goals, specified in temporal logic.  Simultaneously, research on preference-based planning (PBP) methods has evolved to address user preferences specifically over these temporal goals.

Several studies~\cite{tumova2013least,wongpiromsarn2021,rahmani2020what,rahmani2019optimal} have explored preference-based planning (PBP) in the context of \emph{minimum violation} planning problems, which aim to compute policies that prioritize higher-priority goals while minimally violating lower-priority ones in the presence of conflicting temporal goals.
Other works~\cite{mehdipourSpecifyingUserPreferences2021,cardona2023mixed,li2023probabilistic} have proposed new formal specifications to succinctly represent preferences over temporal goals and developed algorithms for synthesizing optimal plans   respecting these preferences. While most PBP methods assume that the user's preference induces a total (complete) order, where all outcomes are pairwise comparable, recent research~\cite{fu2021probabilistic,kulkarni2022opportunistic} has addressed incomplete preferences, allowing  some goals to be incomparable. 
In recent work ~\cite{rahmani2023probabilistic}, we introduced a novel automaton variant, called the preference deterministic finite automaton, which can be constructed from a preorder over temporal goals specified in syntactically co-safe linear temporal logic. The procedure for constructing a preference automaton is detailed in \cite{rahmani2023probabilistic}\footnote{An online tool is available at \textcolor{blue}{\url{https://akulkarni.me/prefltlf2pdfa.html}}}. This preference automaton offers a compact representation of the user's preorder over temporal goals. Furthermore, we developed probabilistic planning algorithms to compute the most preferred, non-comparable policies, based on the given preference specification.

 In this work, we consider the problem of   learning preferences over temporally   goals from human feedback. 
As a motivation,  consider a robotic motion planning problem (Figure~\ref{fig:garden}, introduced in~\cite{rahmani2023probabilistic}) with a human supervisor in the loop:
A bee robot is tasked with pollinating a garden with three types of flowers---tulips, daisies, and orchids.
The robot operates under a limited battery, and faces uncertain weather and the presence of a bird.
%
The robot cannot pollinate a flower when it is raining, and when it meets the bird, it has to stay put until the bird goes away.
The human's preference on the possible robot behaviors is modeled by Figure~\ref{fig:garden_preferences}, which is a preorder over four temporal goals:
    \begin{itemize}
        \item[($p_1$)] pollinating tulips first, followed by at least one other type of flower;
        \item[($p_2$)] pollinate two types of flowers, starting with either daisies or orchids; 
        \item[($p_3$)] pollinate only tulips; and
        \item[($p_4$)] either pollinate no flower or pollinate one type between daisies and orchids.
    \end{itemize}
Among these temporal goals,  $p_1$ is the most preferred and $p_4$ is the least preferred outcome, while $p_2$ and $p_3$ are incomparable with each other. The human's preference is encoded using the preference DFA in Figure~\ref{fig:garden_pdfa}. The alphabet is $\Sigma=\{t, d, o\}$, with $t$, $d$, and $o$ representing pollinating tulips, daisies, and orchids, respectively. The emptyset is denoted $\varnothing$.


  Assume the robot has no prior knowledge about human preference but can ask the supervisor to compare a set of trajectories, each of which translates to a temporal sequence: For example, $\varnothing \varnothing t \varnothing d$ means that the tulip was pollinated first and then   daisy is pollinated next. The robot asks human simple questions, is $\varnothing \varnothing t \varnothing d$ preferred to $\varnothing d \varnothing o$? It receives one of three answers (yes, no, or incomparable) from human.  
 %
The goal of the robot, is to learn the preference DFA from those comparisons: Not only the robot  should learn what are the temporal goals but also what   the preorder on them is. We ask the following problems: (1) What is the hardness of the learning problem?   (2)   Under what conditions it is possible to learn the correct preference, or an equivalent PDFA generated by ground-truth preference?

 \paragraph*{Related work}
 
 Temporal logic is often employed to describe temporal behavior of a system or time series data. Motivated by the need to construct a succinct specification of system behaviors,  learning temporal logic formulas from a sample, consisting of    positive and negative
examples,  
 has been   extensively studied, for Linear Temporal Logic (LTL)~\cite{neider2018learning,ghiorzi2023learning,raha2022scalable,fijalkow2021complexity,camacho2019learning,kim2019bayesian,gaglione2021learning,ielo2023towards,arif2020syslite}, Computational Tree Logic (CTL)~\cite{roy2023inferring,pommellet2024sat}, Metric Temporal Logic (MTL)\cite{raha2023synthesizing}, and Signal Temporal Logic (STL)\cite{nenzi2018robust,mohammadinejad2020interpretable}.
These learning methods use a variety of techniques, including SAT-based methods, integer linear programming, and the search for optimal parameters using formula templates. 
Shah \etal~\cite{shah2023learning} proposes an active approach of learning formal specifications that uses membership and comparison operations, where the first operation tells whether a word belongs to the specification language or not, and the second operation tells which one of two chosen words is preferred to the other. The approach assumes the oracle produces a preference relation that is \emph{membership-respecting}, i.e., words outside the language cannot be preferred to words within the language. Our learning problem differs from traditional temporal logic inference in that, while the goal of temporal logic inference is typically to infer a single formula, our objective is to learn an automaton that represents the user's preference over a set of temporal logic formulas, which are also unknown.

%
Learning user preferences through human feedback employs active preference learning with online planning~\cite{wilde2018learning,wilde2020active,Sadigh2017ActivePL,akrour2012april}. In this approach, the autonomous system iteratively presents the user with different alternatives, and after receiving feedback, it learns about the user's preferences and improves its plan based on the current preference model.
Another closely related line of work is reinforcement learning from human feedback (RLHF)(see a recent survey \cite{kaufmann2023survey}), which has attracted significant research interests due to its application in human-AI-alignment. RLHF includes two classes of methods: One class of methods directly learns a reward function from pairwise comparisons, and another class of methods directly learns a policy from the feedback. Most RLHF methods consider the  Bradley-Terry-Luce (BTL) preference model, which assumes the preference is a total order and can be captured by a reward function. These methods  do not extend to learning a preorder over temporally evolving outcomes.

\emph{Another related class of methods is preference-based reinforcement learning} (PbRL)   (see a survey \cite{wirth2017survey}), which aim to solve the \emph{reinforcement learning} (RL) in situations where it is challenging to design a numerical reward function.
The robot learns the policy via learning the user preference over states, actions, or trajectories.
Overall, these methods make RL to be more accessible to non-expert users.  In comparison to RLHF and PbRL, the proposed method focuses on learning preferences over temporal goals, each of which can be satisfied by a subset of trajectories.

The contributions in this work are summarized as follows:
\begin{itemize}
    \item We prove that it is 
    $\mbox{NP}$-Complete to decide if there exists a \ac{pdfa} that is consistent with a given sample and its size, measured by the number of states, is at most $k$ ( Section~\ref{sec:hardness}). This shows that the preference learning problem is computationally challenging. 
    \item We present an algorithm for learning a \ac{pdfa} from a given  sample (Section~\ref{sec:algothim}). The algorithm begins by learning a partial order from the sample and then applies a variant of state merging algorithm on a prefix tree automaton made from the words within the sample.
    \item We define the properties of a  characteristic sample for learning a \ac{pdfa} and prove that, if the sample is characteristic,   our algorithm learns the minimal \ac{pdfa} equivalent to the original \ac{pdfa} from which the sample was generated (Section~\ref{sec:iden_in_limit}). 
\end{itemize}
In Section~\ref{sec:case_studies}, we experimentally demonstrate the correctness and efficacy of our algorithm.
Finally, in Section~\ref{sec:conc}, we conclude our paper and discuss future directions.
 
\section{Preliminaries}
\label{sec:prel}
\textbf{Notations} Let $\Sigma$ be a finite set of symbols called \emph{alphabet}. A finite word is a finite sequence of symbols in $\Sigma$.
The set of all finite words over $\Sigma$ is denoted $\Sigma^\ast$. The set of all finite words over $\Sigma$ with length $k$ is denoted $\Sigma^k$. The empty string,  the string with length $0$, is   denoted $\epsilon$.  
    A language over $\Sigma$ is a set $L \subseteq \Sigma^\ast$. 
     Given $L \subseteq \Sigma^\ast$, $\Pref(L)= \{u \in \Sigma^\ast \mid \exists v \in \Sigma^\ast, uv\in L\}$ is a set of prefixes of words in the set $L$. Given  $\Sigma^\ast$, $<_L$  is the lexicographical ordering over $\Sigma^\ast$.

    In this work, we are interested in preferences over temporally extended goals. Each temporally extended goal is equivalent to a regular language, which is a subset of $\Sigma^\ast$ that can be represented by   a   \ac{nfa}, defined next. The temporally extended goal can also be translated from a subclass of linear temporal logic formulas \cite{pnueli1977temporal}. We refer the reader to \cite{kupferman2001model} for more details in the specification formal languages and to \cite{rahmani2023probabilistic,kulkarni2022opportunistic} the specification of preference over (a subclass of) temporal logic formulas.

    %

    \begin{definition}
    \label{def:nfa}
        An \ac{nfa} is a tuple $\nfa = (Q, \Sigma,\delta, q_0, F)$ in which $Q$ is a finite set of states, $\Sigma$ is the alphabet, $\delta: Q \times \Sigma \rightarrow 2^Q$ is the  transition function, which outputs a set of possible next states given an input symbol from a given state. $q_0$ is the initial state, and $F \subseteq Q$ is the set of accepting (final) states.
    \end{definition}
 
     For an NFA,  the transition function $\delta: Q \times \Sigma \rightarrow 2^Q$ can be viewed as a relation $\delta \subseteq Q \times \Sigma \times Q$ where $(q, a, q') \in \delta$ iff $q' \in \delta(q, a)$.
     The extended transition function is defined as usual: $\delta: Q \times \Sigma^\ast  \rightarrow 2^Q$, that is, for each $q \in Q$, $a \in \Sigma$, and $w \in \Sigma^*$, $\delta(q, aw) = \bigcup_{q' \in \delta(q, a)} \delta(q', w)$, and that $\delta(q, \epsilon) = \{q\}$.
    A word $w \in \Sigma^*$ is accepted by the \ac{nfa} iff $\delta(q_0, w) \cap F \neq \emptyset$.
    The language of $\nfa$, denoted $L(\nfa)$, is the set of all words accepted by $\nfa$, i.e., $L(\nfa) = \{ w \in \Sigma^* \mid \delta(q_0, w) \cap F \neq \emptyset \}$.

        A \ac{dfa} is a special case of an NFA  $\nfa = (Q, \Sigma,\delta, q_0, F)$ in which for each $q \in Q$ and $a \in \Sigma$, $|\delta(q, a)| = 1$. For simplicity, the transition system of the \ac{dfa} written as a complete function $\delta: Q \times \Sigma \rightarrow Q$ and extended as $\delta(q, aw)=\delta(\delta(q,a), w)$ for $q \in Q$, $a \in \Sigma$, and $w\in \Sigma^\ast$. 


    %
       
  Next, we present a formal model for specifying preferences and then draw the connection between this model to a \ac{dma}.

    \begin{definition}
		\label{def:model_preference}
  Given a countable set $U$ of outcomes, a preference model over $U$ is a tuple $\langle U, \succeq \rangle$ where $\succeq$ is a \emph{preorder}, \ie, a reflexive and transitive   relation, on $U$.
	\end{definition}
     Given $u_1, u_2 \in U$, we write $u_1 \succeq u_2$ if $u_1$ is \emph{weakly preferred to} (\ie,  is at least as good as) $u_2$; and $u_1\sim u_2$ if $u_1\succeq u_2$ and $u_2\succeq u_1$,
    that is,  $u_1$ and $u_2$ are \emph{indifferent}.
    We write
    $u_1 \succ u_2$ to mean $u_1$ is \emph{strictly preferred} to $u_2$, \ie, $u_1\succeq u_2$ and $u_1\not \sim u_2$, and use
	$u_1 \nparallel u_2$ to indicate $u_1$ and $u_2$ are \emph{incomparable}, i.e., $u_1 \not\succeq u_2$ and $u_2 \not\succeq u_1$.

    %
   A preference over finite words, with $U\coloneq \Sigma^\ast$, can be specified using a discrete structure defined as follows. 
    %

     \begin{definition}[\ac{pdfa} extended from \cite{rahmani2023probabilistic}]
		\label{def:pdfa}
		A \ac{pdfa} over an alphabet $\Sigma$ is a tuple $\pdfa= \langle Q, \Sigma, \delta, q_0, O, \succeq, \lambda \rangle$ in which 
        $Q$ is a finite set of states, $\Sigma$ is the alphabet, $\delta: Q \times \Sigma \rightarrow Q$ is the deterministic transition function, $q_0$ is the initial state,
        $O$ is a finite set of \emph{ranks}, and $\succeq\subseteq O^2$ is a \emph{partial order}, which is a reflexive, transitive, and anti-symmetric relation, on $O$, and $\lambda: Q \rightarrow O$ is the \emph{ranking} function, which maps each state $q\in Q$ to a unique rank $o \in O$ and is a complete and surjective function.
	\end{definition} 
        Note that this definition replaces the set of accepting (final) states of a DFA with the partial order relation on accepting conditions.
        A \ac{pdfa} is readily visualized as a finite automaton along with a directed acyclic graph showing the partial order.

		The \ac{pdfa} encodes a preference model $\succeq$ for $\Sigma^\ast$ as follows. 
		Consider two words $w, w' \in \Sigma^\ast$, and let $q = \delta(q_0, w)$ and $q' = \delta(q_0, w')$.
        There are four cases:
        (1) if $\lambda(q) = \lambda(q')$, then $w \sim w'$; (2) if $\lambda(q)  \succ  \lambda(q')$, then $w \succ w'$; (3) if $\lambda(q') \succ \lambda(q)$, then $w' \succ w$; and (4) otherwise, $w \nparallel w'$.
        A preference model $\langle \Sigma^\ast, \succeq \rangle$ is called \emph{regular} if and only if it can be represented by a \ac{pdfa}.

        Two \ac{pdfa}s $\pdfa= \langle Q, \Sigma, \delta, q_0, O, \succeq, \lambda \rangle$ and 
        $\pdfa'= \langle Q', \Sigma, \delta', q_0', O', \succeq', \lambda' \rangle$ are \emph{equivalent} if they encode the same preorder over $\Sigma^*$.
        Formally, they are equivalent if for any $w_1, w_2 \in \Sigma^*$, assuming $q_1 = \delta(q_0, w_1)$, $q_2 = \delta(q_0, w_2)$, $q_1' = \delta'(q_0', w_1')$, and $q_2' = \delta'(q_0', w_2')$, (1) $\lambda(q_1) = \lambda(q_2)$ iff $\lambda'(q_1') = '\lambda'(q_2')$, (2) $\lambda(q_1) \succ \lambda(q_2)$ iff $\lambda'(q_1') \succ' \lambda'(q_2')$, and (3) $\lambda(q_2) \succ \lambda(q_1)$ iff $\lambda'(q_2') \succ' \lambda'(q_1')$.
        
        %

A \ac{pdfa} $\pdfa$ is \emph{canonical} if for any $\pdfa'$ equivalent to $\pdfa$, the number of states in $\pdfa$ is no greater than the number of states in $\pdfa'$.
        
        A \ac{pnfa} $\pdfa= \langle Q, \Sigma, \delta, q_0, O, \succeq, \lambda \rangle$ generalizes a \ac{pdfa} by allowing its transition function to be nondeterministic, i.e., $\delta: Q \times \Sigma \rightarrow 2^Q$, and its ranking function to be incomplete.
        When the ranking of a state $q$  is undefined, it is denoted $\lambda(q) =  \undefined$.

        \begin{example}
\label{ex:pdfa1}
An example of \ac{pdfa} is shown in  Fig.~\ref{fig:pdfa}. 
The initial state is $00$, indicated with the incoming arrow.
The colors $\{ \text{\textbf{b}(lue)}, \text{\textbf{o}(range)}, \text{\textbf{g}(reen)}\}$ are ranks and the ranking function $\lambda(\cdot)$ is indicated by the coloring of states. For example, $\lambda(00) = \mathbf{b} $. 
        The directed graph on the right side represents a partial order over the ranks $\{\mathbf{b}\succ \mathbf{o}, \mathbf{g}\succ\mathbf{o}\}$. By this ordering, we have $bb \succ a$ because $\lambda(\delta(00,bb))= \lambda(00)=\mathbf{b}$ and 
$\lambda(\delta(00,a))= \lambda(10)=\mathbf{o}$ and $\mathbf{b}\succ\mathbf{o}$.
        \begin{figure}[h]
		\centering
		\includegraphics[width=0.8\linewidth]{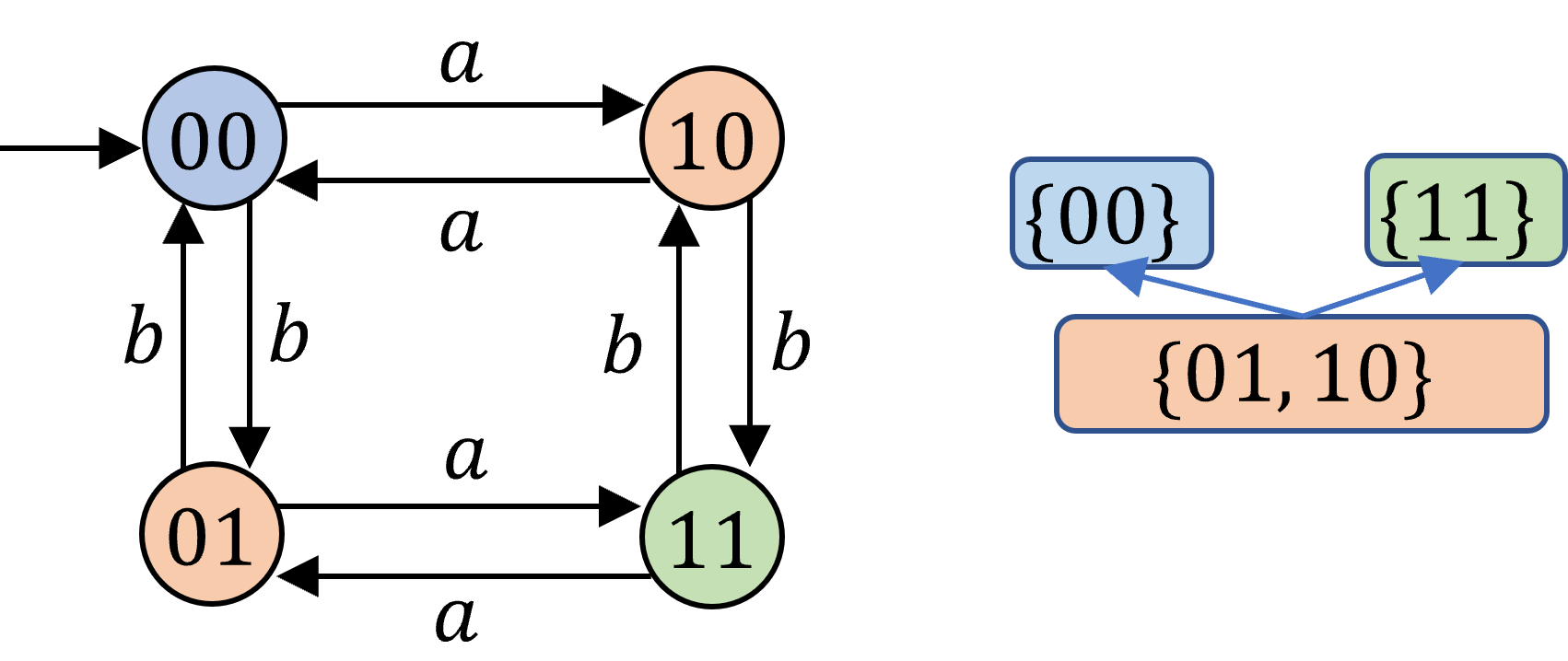}
		\caption{
		      A \ac{pdfa} with states $\{00,10,01,11\}$ and alphabet $\Sigma = \{a,b\}$  with the ranking function presented by a directed graph (right). 
		}
		\label{fig:pdfa}
	\end{figure}
 \end{example}

          \begin{definition}
        \label{def:sample}
            A finite set $S \subseteq \Sigma^\ast \times \Sigma^\ast \times \{0, 1, \perp \}$  is called a \emph{preference sample}, or simply a \emph{sample}, where for each 
        $(w, w', b) \in S$, 
        \begin{enumerate}
            \item $b  = 0$ means $w$ and $w'$ are indifferent to each other,
            \item $b   = 1$ means $w$ is strictly preferred to $w'$, and
            \item $b  = \perp$ means $w$ and $w'$ are incomparable.
        \end{enumerate}
        \end{definition}
        %
        A sample is readily visualized as a `multi-edge-sets' graph with three sets of edges, one set of solid undirected edges for representing
        the pairs of indifferent words, one set of solid directed edges for representing the strict preference relation, and 
        one set of undirected and dashed edges for representing the incomparable pairs of words.  The set of words that appear in $S$ is denoted $W_S$, formally defined as $W_S = \{w \in \Sigma^* \mid \exists w' \in \Sigma^\ast, b\in \{0,1,\perp\}: (w, w', b) \in S \text{ or } (w', w, b) \in S \}$.

        \begin{definition}
        \label{def:consistent}
           A \ac{pdfa} $\pdfa= \langle Q, \Sigma, \delta, q_0, O, \succeq, \lambda \rangle$ and a sample $S$ are \emph{consistent} with each other, if for each
           $(w, w', b) \in S$, 
            \begin{itemize}
                \item if $b  =0$, then $w \sim w'$;
                \item if $b = 1$, then $w \succ w'$; and
                \item if $b = \perp$, then  $w \nparallel w'$.
            \end{itemize}
        \end{definition}

        \begin{example}[cont.]
            
     Figure~\ref{fig:sample} shows  an example of a sample $S$ consistent with the \ac{pdfa} in Example~\ref{ex:pdfa1}. For example, the directed edge $a \rightarrow ba$ means $ba \succ a$. This is consistent with the \ac{pdfa} in Fig.~\ref{fig:pdfa} because $\lambda (\delta(00, a)) =\lambda(10)=\textbf{o}$ and $\lambda (\delta(00, ba)) =\lambda(11)=\textbf{g}$ and $\textbf{g}\succ \textbf{o}$.

    \begin{figure}[h]
		\centering
		\includegraphics[width=\linewidth]{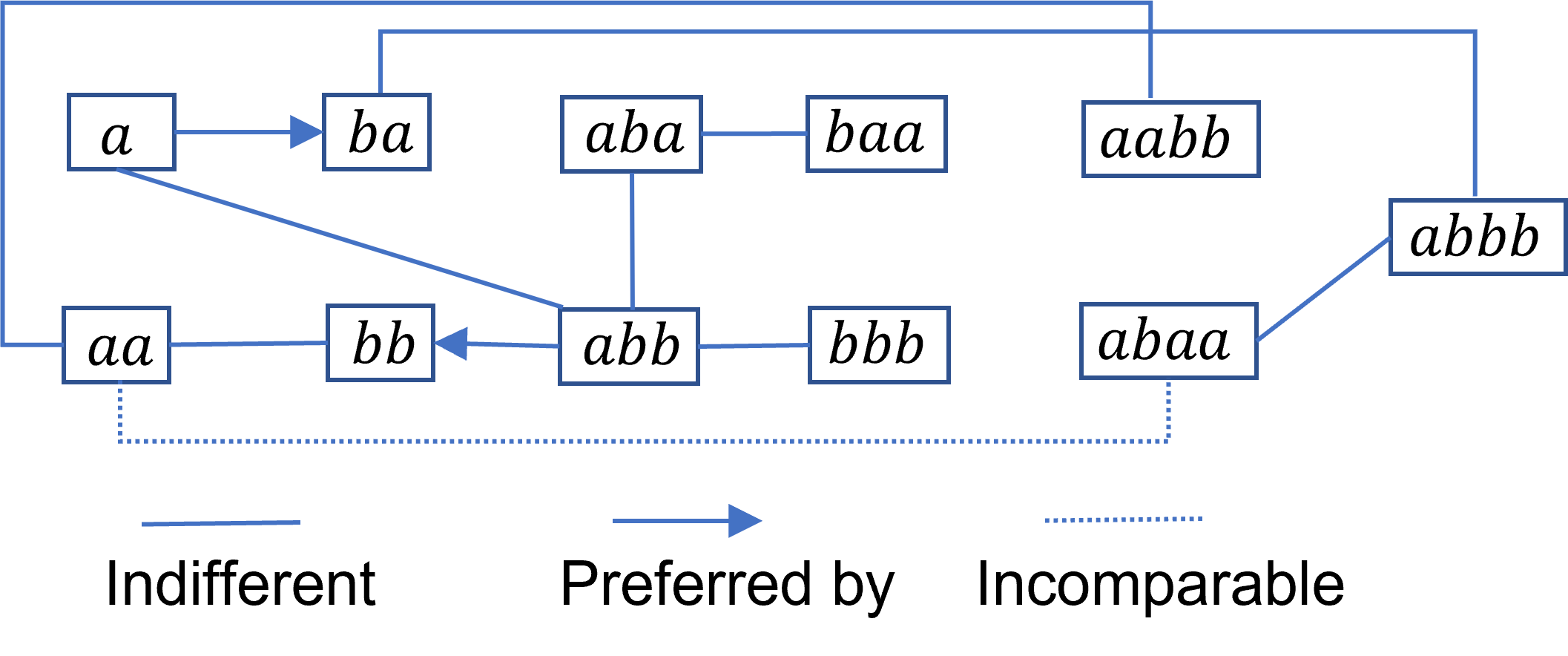}
		\caption{
		      A preference sample for alphabet $\Sigma = \{a, b\}$. 
       The lacking of an edge between two words  means the sample does not include any comparisons between those two words.
        }
		\label{fig:sample}
	\end{figure}
         \end{example}

        Having these definitions, we present our problem formulation.
        \problembox{Learning Preference Model (\LPM)}
{A sample $S=\{ (w_1, w_1', b_1), \cdots, (w_n, w_n', b_n) \}$, which is drawn from an uknown \ac{pdfa} $\pdfa$.}
{
    A \ac{pdfa} equivalent to $\pdfa$.
}

\section{Hardness of learning preference model}
\label{sec:hardness}
A regular preference model may be represented by more than one  \ac{pdfa}s.
 Among them, there is a   \ac{pdfa}  that has the smallest number of states, referred to as the \emph{canonical \ac{pdfa}}.
Ideally, for a given sample $S$, we would like to find a canonical \ac{pdfa}  consistent with $S$. 

        To understand the complexity of learning a \ac{pdfa} from a given sample, we first consider the following problem.
        %
        %
        %
        \decproblembox{Minimum Consistent PDFA (\MCPDFA)}
        {A sample $S=\{ (w_1, w_1', b_1), \cdots, (w_n, w_n', b_n) \}$ and a positive integer $k$.}
        {\Yes  if there exists a \ac{pdfa} with at most $k$ states that is consistent with $S$, otherwise \No.}
        Note if the canonical \ac{pdfa} from which the sample is drawn has at most $k$ states, then the answer to this decision problem is \Yes.
        However, if the answer to this is problem is \Yes, it is still unknown whether the canonical \ac{pdfa} has at most $k$ states or not.
        Yet, understanding the hardness of this problem sheds lights on how difficult is to learn an exact regular preference model from which a sample is drawn.
        
         Regarding this decision problem, we present the following result.
         \begin{lemma}
		\label{lem:NP}
        \MCPDFA $\in \NP$.
		\end{lemma}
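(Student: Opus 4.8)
The plan is to prove membership in \NP by the standard guess-and-verify strategy: exhibit a polynomial-size certificate and a polynomial-time verifier. For a Yes-instance $(S,k)$ the natural certificate is a \ac{pdfa} $\pdfa = \langle Q, \Sigma, \delta, q_0, O, \succeq, \lambda\rangle$ that is consistent with $S$. First I would check that the guessed object is a legal \ac{pdfa} in the sense of Definition~\ref{def:pdfa}: that $\delta$ is a total function $Q\times\Sigma\to Q$, that $\succeq$ is reflexive, antisymmetric, and transitive (the last by a cubic reachability check on $O$), and that $\lambda$ is total and surjective onto $O$. Then, for each triple $(w,w',b)\in S$, I would run the deterministic automaton to obtain $q=\delta(q_0,w)$ and $q'=\delta(q_0,w')$ in time linear in $|w|+|w'|$, read off $\lambda(q)$ and $\lambda(q')$, and verify that their relation under $\succeq$ matches $b$ exactly as prescribed by Definition~\ref{def:consistent}. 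All of this runs in time polynomial in $|S|$ and in the size of the certificate.

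The one delicate point — and the main obstacle — is the \emph{size} of the certificate: since $k$ is supplied as an integer, it may be exponential in the bit-length of the input, and a verbatim ``\ac{pdfa} with $k$ states'' would then fail to be a polynomial certificate. The hard part is therefore to argue that it always suffices to guess a \ac{pdfa} whose number of states is bounded by a polynomial in $|S|$. Concretely, I would prove the key claim: if \emph{any} \ac{pdfa} consistent with $S$ exists, then a consistent \ac{pdfa} exists with at most $|\Pref(W_S)|+1$ states.

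To establish the claim I would use the prefix-tree construction over $\Pref(W_S)$: create one state for each prefix in $\Pref(W_S)$, route transitions along prefixes, and add a single sink to make $\delta$ total. Starting from an assumed consistent \ac{pdfa} $\pdfa$, I would assign to the state of a prefix $u$ the inherited rank $\lambda(\delta(q_0,u))$, take $O'$ to be the image of this assignment with $\succeq'$ the restriction of $\succeq$ to $O'$, and give the sink the rank $\lambda(q_0)\in O'$. Because reading any sample word $w\in W_S$ traverses only the states of the prefixes of $w$ and never reaches the sink, the inherited ranks reproduce exactly the comparisons made by $\pdfa$, so the new automaton is consistent with $S$; a restriction of a partial order is again a partial order and $\lambda$ is surjective onto $O'$ by construction, so it is a legal \ac{pdfa} with at most $|\Pref(W_S)|+1 \le 1+\sum_{w\in W_S}|w|$ states, which is polynomial in $|S|$.

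Combining the two parts, a Yes-instance always admits a consistent \ac{pdfa} with at most $\min(k,\,|\Pref(W_S)|+1)$ states: if $k\ge |\Pref(W_S)|+1$ the prefix-tree automaton already witnesses this, and if $k< |\Pref(W_S)|+1$ the hypothesized $\le k$-state automaton does. I would therefore take the certificate to be a \ac{pdfa} of at most $\min(k,\,|\Pref(W_S)|+1)$ states, which is polynomial in the input size and verifiable in polynomial time as above, placing \MCPDFA in \NP.
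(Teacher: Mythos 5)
Your proposal is correct, and its verification half is the same as the paper's proof: the certificate is a consistent \ac{pdfa}, and each triple $(w,w',b)\in S$ is checked by running the automaton in time $\bigO(|w|+|w'|)$ and comparing $\lambda(\delta(q_0,w))$ and $\lambda(\delta(q_0,w'))$ under $\succeq$ against the rules of Definition~\ref{def:consistent}. The genuine difference is that you also bound the \emph{size} of the certificate. The paper's proof stops at ``guess a \ac{pdfa}, check $|Q|\le k$ and consistency,'' implicitly assuming such a witness has description size polynomial in the input; if $k$ is encoded in binary this needs justification, since $k$ can be exponential in the bit-length of $\langle S,k\rangle$ and an automaton with close to $k$ states would then be too large to serve as an \NP{} certificate. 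Your key claim --- that any consistent \ac{pdfa} can be replaced by one built on the prefix tree over $\Pref(W_S)$ with a sink state, inherited ranks, and the restricted partial order, hence with $\bigO\bigl(\sum_{w\in W_S}|w|\bigr)$ states --- supplies exactly that justification, so a yes-instance always admits a witness with at most $\min\bigl(k,\,|\Pref(W_S)|+1\bigr)$ states. In short, your route establishes the lemma robustly with respect to how $k$ is encoded, at the cost of an extra construction; the paper's shorter argument is adequate only under the (unstated) convention that one may assume $k$ is at most the prefix-tree size, a convention whose legitimacy is precisely the content of your claim. Two negligible loose ends on your side: the state bound is off by one (it is $|\Pref(W_S)|+1\le 2+\sum_{w\in W_S}|w|$), and when $S=\emptyset$ the image $O'$ is empty so the sink must receive a fresh rank; neither affects polynomiality or the argument.
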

		\begin{proof}
            We need to show that given a \ac{pdfa} $\pdfa= \langle Q, \Sigma, \delta, q_0, O, \succeq, \lambda \rangle$ as a certificate, we can verify in polynomial time to size of $S$ both (1) whether $\pdfa$ has at most $k$ states, and that (2) $\pdfa$ is consistent with $S$.
		    Trivially, in constant time we can check (1).
            Assume the transition and the ranking functions are encoded as hash tables.
            If not, one can clearly construct those hash tables in polynomial time.
            In respect with (2), for each $(w_i, w_i', b_i) \in S$, in time $\bigO(|w_i|+|w_i'|)$ we can obtain $\delta(q_0, w_i)$ and $\delta(q_0, w_i')$ and then in constant time we can check whether $(w_i, w_i', b_i)$ is consistent with $\pdfa$, using the rules in Definition~\ref{def:consistent}. 
            Accordingly, checking whether $S$ is consistent with the \ac{pdfa} takes $\bigO(\sum_{1 \leq i \leq n}(|w_i|+|w_i'|))$, which is a polynomial time to the size of $S$.
        \end{proof}

        To further study the hardness of \MCPDFA, we first consider the classical problem of learning a minimal consistent deterministic finite automaton from positive and negative data.
        \decproblembox{Minimum Consistent DFA \MCDFA}
        {A finite non-empty set $S^+ \in 2^{\Sigma^\ast}$, a finite non-empty set $S^- \in 2^{\Sigma^\ast}$, and an integer $k'$}
        {\Yes if there exists a \ac{dfa} $A=(Q, \Sigma, \delta, q_0, F)$ such that $S^+ \subseteq L(A)$, $S^- \cap L(A) = \emptyset$, and $ |Q| \leq k'$, and \No otherwise.  } 
        %
        \begin{proposition}
    		\label{pro:NP-hard}
            \MCDFA $\in \NP$-hard.
        \end{proposition}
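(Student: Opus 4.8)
The plan is to prove NP-hardness by exhibiting a polynomial-time many-one reduction from the graph $k$-colourability problem \GCOLORINGDEC, which is a classical NP-complete problem. Given an instance consisting of an undirected graph $G=(V,E)$ and an integer $k$, I will construct in polynomial time a sample $(S^+,S^-)$ over a suitable alphabet $\Sigma$ together with a bound $k'$ (set to $k$, possibly augmented by a fixed number of bookkeeping states for the initial state and a rejecting sink) such that $G$ admits a proper $k$-colouring if and only if some \ac{dfa} with at most $k'$ states accepts every word in $S^+$ and rejects every word in $S^-$. Since \GCOLORINGDEC is NP-complete, this establishes \MCDFA $\in \NP$-hard.

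The central gadget rests on the observation that two words $x$ and $y$ are forced to reach distinct states in every consistent \ac{dfa} whenever there is a common suffix $z$ with $xz\in S^+$ and $yz\in S^-$ (or vice versa): if they shared a state $q$, then $\delta(q,z)$ would have to be simultaneously accepting and rejecting. Exploiting this, I associate to each vertex $v_i\in V$ a distinct ``representative'' prefix word $x_i$, and for each edge $\{v_i,v_j\}\in E$ I insert a short distinguishing gadget---a common suffix together with its positive/negative labels---that forces $x_i$ and $x_j$ into different states. I additionally include base examples that make each $x_i$ reachable and pin down the transition structure, so that the state reached by $x_i$ plays the role of the colour of $v_i$ while adding no merge constraints between non-adjacent vertices. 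All words have length polynomial in $|V|+|E|$, and their number is $\bigO(|V|+|E|)$, so the reduction runs in polynomial time.

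For correctness I argue both directions. ($\Rightarrow$) Given a proper colouring $c\colon V\to\{1,\dots,k\}$, I build a \ac{dfa} whose states are the $k$ colour classes (plus the fixed auxiliary states), route each $x_i$ to the state for colour $c(v_i)$, and direct all remaining transitions consistently with the base examples. Because $c$ is proper, every edge gadget is satisfied (the two endpoints sit in different states), and one checks directly that all of $S^+$ is accepted and all of $S^-$ rejected, with the total number of states at most $k'$. ($\Leftarrow$) Conversely, from any consistent \ac{dfa} $A$ with at most $k'$ states I define $c(v_i)$ to be (an index for) the state $\delta(q_0,x_i)$. The edge gadgets guarantee $\delta(q_0,x_i)\neq\delta(q_0,x_j)$ whenever $\{v_i,v_j\}\in E$, so $c$ is a proper colouring, and since the $x_i$ reach at most $k'$ states minus the fixed auxiliary ones, $c$ uses at most $k$ colours.

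The delicate part, and the step I expect to be the main obstacle, is engineering the sample so that the correspondence ``states of a consistent \ac{dfa} $\leftrightarrow$ colours of $G$'' is tight. Merging states of a \ac{dfa} consistent with a sample is constrained not only by the accept/reject conflicts captured by the edge gadgets but also by transition compatibility (a state partition must be a congruence with respect to $\delta$). The construction must therefore ensure that the only genuine obstructions to merging the representatives $x_i$ are exactly the edges of $G$: the congruence constraints must introduce no spurious conflicts between non-adjacent vertices, nor allow conflicting vertices to be merged through a side path. Verifying that the induced merge-conflict structure on the representatives is isomorphic to $G$, and that the auxiliary states contribute only a fixed additive constant to the state count, is where the bulk of the technical care lies.
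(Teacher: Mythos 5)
Your proposal takes a genuinely different route from the paper: the paper does not give a construction for this proposition at all, but simply cites the classical result of Gold (1978) and a recent re-proof, since NP-hardness of minimum consistent DFA identification is a known theorem. You instead attempt a self-contained reduction from \GCOLORINGDEC. That is a reasonable idea in principle, and your distinguishing-suffix observation (if $xz\in S^+$ and $yz\in S^-$ then $x$ and $y$ cannot reach the same state in any consistent DFA) is indeed the right basic tool.

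However, as written the proposal has a genuine gap: the reduction is never actually constructed. The sample $(S^+,S^-)$ --- the representative words $x_i$, the edge suffixes, the ``base examples'' --- is left unspecified, and with it both directions of correctness. This is not a routine verification that can be deferred. The ($\Rightarrow$) direction is precisely where such reductions typically break: any DFA consistent with your sample must also process the edge-gadget suffixes from each representative state, and if each edge $\{v_i,v_j\}$ carries its own suffix $z_{ij}$, the accepting/rejecting paths for these suffixes require additional states whose number generically grows with $|E|$, not a ``fixed number of bookkeeping states.'' So a proper $k$-coloring does not obviously yield a consistent DFA with at most $k+\bigO(1)$ states; arranging for all gadget suffixes to be processed by shared machinery, while still distinguishing exactly the adjacent pairs and introducing no spurious merge obstructions between non-adjacent ones, is the entire content of the theorem --- and it is exactly the part you acknowledge but do not supply. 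Symmetrically, in the ($\Leftarrow$) direction, with $k'=k+c$ the representatives could a priori occupy up to $k+c$ states unless the construction forces them away from the auxiliary states, which again depends on details never given. Until the sample is exhibited and both counts are verified against it, this is a proof plan rather than a proof; since the result is classical, citing Gold's reduction, as the paper does, is the expedient alternative.
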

        \begin{proof}
            See the proofs in \cite{gold1978complexity,lingg2024learning}.
        \end{proof}
        We use this result to prove that the preference learning problem \MCPDFA is computationally hard.
        \begin{theorem}
        \label{thr:NP-hard}
            \MCPDFA $\in \NP$-hard.
        \end{theorem}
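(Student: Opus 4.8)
The plan is to establish \NP-hardness by a polynomial-time many-one reduction from \MCDFA, which is \NP-hard by Proposition~\ref{pro:NP-hard}. The key observation is that an ordinary DFA is exactly a \ac{pdfa} with two ranks---an ``accepting'' rank $a$ and a ``rejecting'' rank $r$ ordered by $a \succ r$---in which $L(A)$ is precisely the set of words whose run ends in a state of rank $a$. Since a preference sample records only \emph{relative} comparisons rather than absolute membership labels, the crux of the reduction is to encode the positive/negative labelling of \MCDFA through pairwise comparisons that force all positive words onto a single top rank and all negative words onto a single, strictly lower rank.

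Concretely, from an \MCDFA instance $(S^+, S^-, k')$ I would build the \MCPDFA instance $(S,k)$ with $k = k'$ and
\[
S = \{(w,w',0) \mid w,w' \in S^+\} \cup \{(v,v',0) \mid v,v' \in S^-\} \cup \{(w,v,1) \mid w \in S^+,\, v \in S^-\}.
\]
This is computable in time polynomial in $|S^+| + |S^-|$, as $|S|$ is at most quadratic in the number of sample words. The pairs labelled $0$ force (by Definition~\ref{def:consistent}) all words of $S^+$ to share one rank $a$ and all words of $S^-$ to share one rank $r$, while the pairs labelled $1$ force $a \succ r$; in particular antisymmetry of the partial order gives $a \neq r$.

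For correctness I would argue both directions. If \MCDFA answers \Yes via a DFA $A = (Q,\Sigma,\delta,q_0,F)$ with $|Q| \le k'$, I turn $A$ into a \ac{pdfa} on the same transition structure with ranks $O = \{a,r\}$, order $a \succ r$, and $\lambda(q) = a$ iff $q \in F$; because $S^+$ and $S^-$ are nonempty, both ranks lie in the image of $\lambda$, so $\lambda$ is surjective and the resulting \ac{pdfa} is a legitimate one consistent with $S$ with at most $k$ states. Conversely, if \MCPDFA answers \Yes via a \ac{pdfa} with at most $k$ states consistent with $S$, then letting $a$ be the common rank of the $S^+$ words, I set $F = \{q \in Q \mid \lambda(q) = a\}$: every $w \in S^+$ ends in rank $a$ and is accepted, while every $v \in S^-$ ends in the rank $r \neq a$ and is rejected, yielding a DFA with at most $k = k'$ states that separates $S^+$ from $S^-$.

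I expect the main obstacle to lie in getting the encoding exactly right rather than in any deep combinatorics: one must check that the constructed \ac{pdfa} meets all the structural requirements of Definition~\ref{def:pdfa}---in particular surjectivity of $\lambda$ and antisymmetry of $\succeq$---and must confirm that degenerate inputs are handled correctly, e.g.\ when $S^+ \cap S^- \neq \emptyset$ the pair $(w,w,1)$ demands $\lambda(\delta(q_0,w)) \succ \lambda(\delta(q_0,w))$, which is unsatisfiable, so \MCPDFA correctly answers \No precisely when no separating DFA exists. Combined with Lemma~\ref{lem:NP}, this reduction would additionally yield \NP-completeness of \MCPDFA.
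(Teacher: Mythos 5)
Your proposal is correct and follows essentially the same route as the paper's proof: the identical reduction from \MCDFA (indifference pairs within $S^+$ and within $S^-$, strict-preference pairs across), the same forward construction with two ordered ranks, and the same backward argument recovering $F$ from the common rank of the $S^+$ words. The only differences are cosmetic, plus your welcome extra care about surjectivity of $\lambda$ and the degenerate case $S^+ \cap S^- \neq \emptyset$, which the paper's proof glosses over.
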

        \begin{proof}
            By reduction from \MCDFA.
            Given an instance $$x = \langle S^+, S^-, k \rangle$$ we construct an instance $$y = \langle S, k \rangle,$$ where
            \begin{multline}
                S = \{ (w, w', 0) \mid w, w' \in S^+ \} \cup \{ (w, w', 0) \mid w, w' \in S^- \} \\ 
                \cup \{ (w, w', 1) \mid w \in S^+ \text{ and } w' \in S^- \} \notag.
            \end{multline}
            We need to show that (1) the reduction takes a polynomial time, and (2) the reduction is correct.
            It holds that $\abs{S} = \abs{S^+}^2+\abs{S^-}^2+\abs{S^+}\abs{S^-}$, and clearly, the reduction takes a polynomial time.

            For the correctness of the reduction, we show that \MCDFA produces \Yes for instance $x$ if and only if \MCPDFA produces $\Yes$ for instance
            $y$.

            ($\Rightarrow$) Suppose \MCDFA produces \Yes for instance $x$, i.e., there exists a DFA $A = (Q, \Sigma, \delta, q_0, F)$ such that $|Q| \leq k$, $S^+ \subseteq L(A)$, and $S^- \cap L(A) = \emptyset$.
            We construct \ac{pdfa} $\pdfa = (Q, \Sigma, \delta, q_0, O, \succeq, \lambda)$ in which $O= \{1, 2\}$, $\succeq = \{(1, 1), (2, 2), (1, 2)\}$, and
            for each $q \in Q$, let $\lambda(q) = 1$ if $q \in F$, and $\lambda(q) = 2$ otherwise.
            By the construction of $S$, for each $(w, w', b) \in S$, it holds that $b \in \{0, 1 \}$ and
            \begin{itemize}
                \item If $b=0$, then it means  $\lambda(\delta(q_0, w)) = \lambda(\delta(q_0, w'))$, and thus
                 $w \sim w'$;
                 \item If $b=1$, then it means $\lambda(\delta(q_0, w)) = 1$ and $\lambda(\delta(q_0, w'))=2$, and because $1 \succeq 2$, $w \succ w'$.
            \end{itemize}
            Therefore, $\pdfa$ is consistent with $S$, and because $\pdfa$ has $k$ states, \MCPDFA produces \Yes for $y$.

            ($\Leftarrow$) Suppose \MCPDFA produces \Yes for instance $y$, i.e., there exists a \ac{pdfa} $\pdfa = (Q, \Sigma, \delta, q_0, O, \succeq, \lambda)$ consistent with $S$ such that $|Q| \leq k$.
            Because $S^+$ and $S^-$ are non-empty, there exists a tuple $(w, w', 1) \in S$. 
            This means that $O$ contains at least two distinct ranks $o_1$ and $o_2$ such that $\lambda(\delta(q_0, w))=o_1$, $\lambda(\delta(q_0, w'))=o_2$, and  either  $o_1 \succ o_2$ or $o_2 \succ o_1$.
            %
            Additionally, because $S$ is consistent with $\pdfa$, it must be the case that for all $w \in S^+$, $\lambda(\delta(q_0, w))= o_1$,   for all
            $w' \in S^-$, $\lambda(\delta(q_0, w'))=o_2$, and that $o_1 \succ o_2$. 
            Let $F = \{q \in Q \mid \lambda(q) = o_1 \}$.
            Accordingly, for the DFA $A = (Q, \Sigma, \delta, q_0, F)$, $S^+ \subseteq L(A)$ and that $S^-  \cap  L(A) = \emptyset$.
            Because DFA $A$ is consistent with $S^+$ and $S^-$ and that $|Q| \leq k$, the answer of \MCDFA given $x$ is \Yes if the answer of \MCPDFA to $y$ is
            \Yes.
            This completes the proof.
        \end{proof}

        Accordingly, we the following result holds.
        \begin{theorem}
        \label{thr:NP-Complete}
            \MCPDFA $\in \NP$-Complete.
        \end{theorem}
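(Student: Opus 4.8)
The plan is to observe that this statement is an immediate consequence of the two results already in hand, since membership in $\NP$-Complete is \emph{by definition} the conjunction of two properties: membership in $\NP$ and $\NP$-hardness. Both halves have already been established for \MCPDFA, so the proof consists of nothing more than invoking them and appealing to the definition.

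Concretely, I would first cite Lemma~\ref{lem:NP}, which gives \MCPDFA~$\in \NP$ by exhibiting a \ac{pdfa} of size at most $k$ as a polynomial-size certificate and checking consistency in time $\bigO(\sum_i (|w_i| + |w_i'|))$. I would then cite Theorem~\ref{thr:NP-hard}, which gives \MCPDFA~$\in \NP$-hard via the reduction from \MCDFA. Combining these two facts with the definition of $\NP$-Complete yields the claim directly.

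Since the two ingredients are already proved, there is no genuine technical obstacle here; the theorem is essentially a corollary. The one point worth keeping an eye on is that the hardness direction must rest on a genuine polynomial-time (Karp) reduction rather than a mere Turing reduction, so that hardness combines correctly with $\NP$-membership to give $\NP$-completeness in the usual sense. That requirement is already discharged inside the proof of Theorem~\ref{thr:NP-hard}, where it is shown that constructing the instance $y = \langle S, k \rangle$ from $x = \langle S^+, S^-, k \rangle$ takes polynomial time, with $|S| = |S^+|^2 + |S^-|^2 + |S^+||S^-|$. With that verification noted, the statement follows, and I would phrase the proof as a one-line appeal to Lemma~\ref{lem:NP} and Theorem~\ref{thr:NP-hard} together with the definition of $\NP$-Complete.
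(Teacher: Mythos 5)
Your proposal is correct and matches the paper's proof exactly: the paper also derives Theorem~\ref{thr:NP-Complete} as an immediate consequence of Lemma~\ref{lem:NP} and Theorem~\ref{thr:NP-hard}. Your extra remark confirming that the hardness reduction is a polynomial-time Karp reduction is a sound (if unstated in the paper) point of care, but it does not change the argument.
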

        \begin{proof}
            It follows from Lemma~\ref{lem:NP} and Theorem~\ref{thr:NP-hard}.
        \end{proof}
        
        The impact of this result is that assuming $\P \neq \NP$, there exists no algorithm that  can efficiently compute the minimum \ac{pdfa} consistent with an arbitrary sample $S$. 

        Theorem~\ref{thr:NP-Complete} shows that our automata inference problem, the \MCPDFA problem, is generally hard. 
        In the next section, we present an efficient algorithm that  learns the canonical presentation of the true \ac{pdfa} from which the sample is generated, provided the sample satisfies certain conditions. 

\section{Algorithm Description}
\label{sec:algothim}
        %
        In this section, we present an algorithm to learn a \ac{pdfa} from a given sample.
        This algorithm first partitions the words within the sample 
        and derives a partial order   between the subsets, referred to as blocks, within the partition.
        %
        %
        The algorithm then constructs a prefix tree automaton (defined next) from those words appeared in the sample, and extends the RPNI algorithm~\cite{oncina1992identifying}, 
        to iteratively merge the states of the prefix tree automaton until all the states of the tree automaton are processed.
        %
        

We introduce some notations: Let $W \subseteq \Sigma^\ast $ be a set and $\pi =\{B_1,\ldots, B_{\abs{\pi}}\}$ be a partition of the set $W$, that is, $B_i \neq \emptyset$ for all $i$'s, $\cup_{i=1}^{\abs{\pi}} B_i = W$, and $B_i\cap B_j=\emptyset$ if $i\ne j$.

\begin{definition}
  Given a partition $\pi$ over a set $W$,  for each $w\in W$, the \emph{block} in partition $\pi$ that contains $w$ is denoted by $B(w, \pi)$.
\end{definition}

%
        
        \begin{definition}
        \label{def:eq_graph}
  Given a sample $S \subset \Sigma^* \times \Sigma^* \times \{0, 1, \perp\}$,   the \emph{indifference graph} for   sample  $S$ is an undirected graph $G_{\sim}=(W_S, E_{\sim})$ where for each $w, u \in W_S$, 
        $\{w, u\} \in E_\sim$ iff $\{(w, u, 0), (u, w, 0)\} \cap S \neq \emptyset$.
        \end{definition}
      Note that even though the sample might not directly compare two words $u,w$, we may derive these two words to be indifferent  using the transitive closure of this relation.
        %
        %
        %
        %
        We let $\pi_S$ to be the set of strongly connected components (SCCs) of $G_\sim$, i.e., $\pi_S=\SCCs(G_\sim)$. 
        Clearly, $\pi_S$ is partition of $W_S$. Also, for each $w\in W_S$,  any $ w' \in B(w, \pi_S )$ is indifferent to $w$ by construction. 
        Using these, we create a partial order in the following.

 Let $\rankS$ be the index set of $\pi_S$, \ie,  $\rankS =\{1,\ldots,|\pi_S| \}$. Let $C_i$ be the $i$-th SCC in $\pi_S$.
        We construct a relation $R_S$ over the set of indices $\rankS$ as follows:
        \begin{equation}
        \label{eq:relation-rank}
            R_S = \{(i, j) \mid \exists w \in C_i, w' \in C_j : (w, w', 1) \in S \}
        \end{equation}
          
The partial order, denoted $\succeq_S$, is constructed as the union of the reflexive closure and the transitive closure of $R_S$.

    \begin{definition}
    \label{def:pt} Given a sample $S$, its indifference graph $G_{\sim}$, the index set $\rankS$ of SCCs and relation $R_S$ constructed from $G_\sim$,  a prefix tree \ac{pdfa} $\PTree_S = (\Pref(W_S)=\{u_0,\ldots, u_r\}, \Sigma, \delta_S, u_0 \coloneqq \epsilon, O\coloneqq \rankS, \succeq_S, \lambda_S)$ 
    in which 
    \begin{itemize}
    \item $\Pref(W_S)$ is a set of states in the prefix tree \ac{pnfa}, which is the set of all prefixes of the words appeared in the sample. In particular, let  $u_0, u_1, \cdots, u_r$ where $r = |\Pref(W_S)|$ be the  states within $  \PTree_S$ indexed with the lexicographical ordering.
    \item $\Sigma$ is the alphabet.
    \item The transition function $\delta_S$ is defined as follows:
      For each $w \in \Pref(W_S)$, $a \in \Sigma$ and $wa \in \Pref(W_S)$, $\delta_S(w, a)=wa$. 
    \item $u_0 = \epsilon$, the empty string, is the initial state.
    \item $\rankS$ is the set of ranks.   
    \item $\succeq_S$ is the partial order relation on $\rankS$.
    \item $\lambda_S: \Pref(W_S) \rightarrow \rankS \cup \{\varnothing\}$ is defined such that 
each prefix $u \in \Pref(W_S)  \cap W_S $ is mapped to the index $i$ of the SCC that contains $u$. That is, if $C_i\in \pi_S$ contains $u$, then $\lambda_S(u)=i$.  For each $u \in \Pref(W_S) \setminus W_S$, $\lambda_S(u)=\varnothing$, meaning   $\lambda_S$ is undefined for $u$. 
     \end{itemize}
     \end{definition}
     By construction of SCCs, the rank of each state in the prefix-tree automaton is unique.
     
Slightly abusing notation, we use each subset (block) of the partition $\rankS$ to denote a rank. It can be understood as each subset is associated with a unique index and the ranks are the set of indices.

        \begin{example}[cont.]%
        The top graph in 
        Figure~\ref{fig:learn_pref_graph} illustrates the indifference graph  $G_\sim$ constructed for the sample in Figure~\ref{fig:sample}. We let words with the same colors be in the same strongly connected component. The bottom directed graph shows the partition $\pi_S$ constructed using the SCCs and the relation $R_S$ is represented by directed edges. For clarity, we omitted the index set and just directly use the SCCs.  For example,  $ \{a,aba, abb, baa, bbb\} \rightarrow \{aa,bb, aabb\} $, because $bb \in   \{aa,bb, aabb\}$, $abb \in \{a,aba, abb, baa, bbb\}$, and  $(bb, abb, 1)\in S$.
      
        %
        %
         %
        \begin{figure}[h]
		\centering
\includegraphics[width=\linewidth]{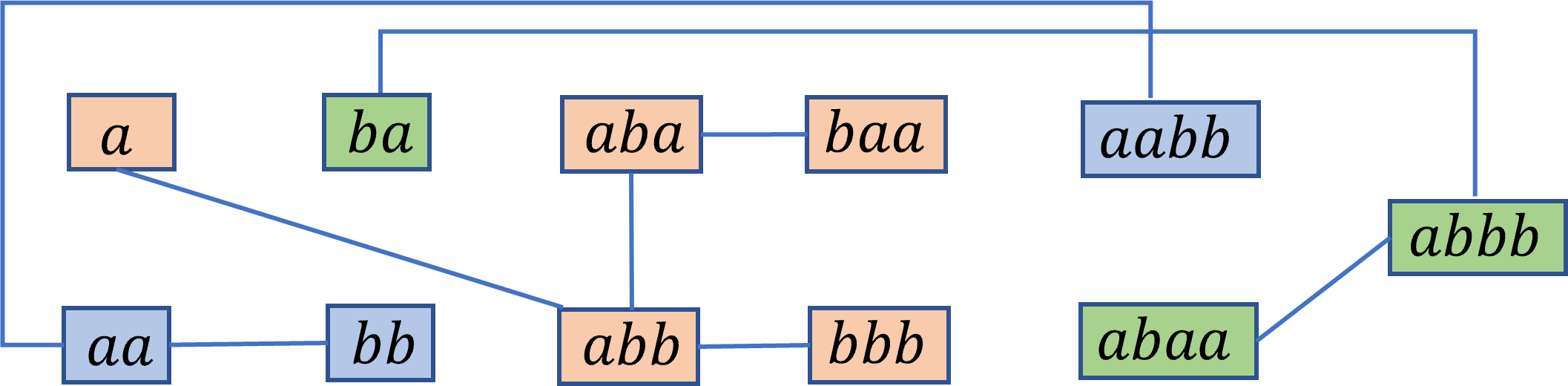}
          \vspace{2ex}
          
          \includegraphics[width=0.6\linewidth]{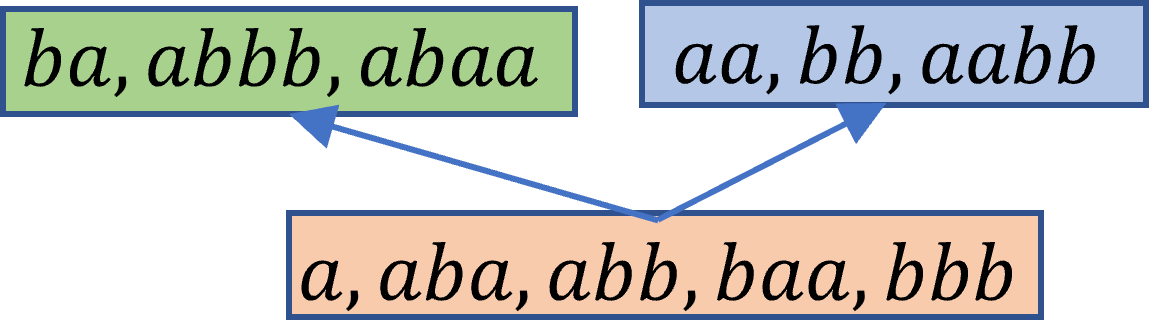}
		\caption{
	\textbf{Top)} The indifference graph for the sample in Figure~\ref{fig:sample}. \textbf{Bottom)} The partition $\rankS$ and the relation $R_s$.
.}
\label{fig:learn_pref_graph}
	\end{figure}
        \end{example}

    Figure~\ref{fig:A0} shows the prefix tree \ac{pnfa} constructed for the sample in Figure~\ref{fig:sample}.
    The figure uses colors to differentiate the ranks of the states, \ie, the indices of SCCs.
    The uncolored states are those who do not appear in the sample. The ranking function is undefined for those states. 

   To learn a \ac{pdfa} from sample $S$, we develop a state-merging operation similar to  the RPNI algorithm~\cite{oncina1992polynomial}, which is introduced for learning \ac{dfa}. 
The algorithm employs a \emph{quotient} operation on automata. 
\begin{definition}[Quotient of a \ac{pnfa}]
Given a \ac{pnfa} $\pnfa = (Q, \Sigma, \delta, q_0, O, \succeq, \lambda)$, a partition $\pi$ of state set $Q$   is \emph{consistent} with the \emph{ranking} $\lambda$ of $\pnfa$ if for any $B\in \pi$, for any $q, q'\in B$,  either 1) $\lambda(q) = \lambda(q')$  or 2) $\lambda(q) = \undefined$ or $\lambda(q') = \undefined$. Such a partition $\pi$ is said to be  \emph{ranking-consistent}.
The quotient of $\pnfa$ under a ranking-consistent partition $\pi$, denoted $\pnfa / \pi $, is a \ac{pnfa} $$\pnfa/\pi = (\pi, \Sigma, \delta', B(q_0, \pi), O,  \succeq', \lambda')$$ where \begin{itemize}
    \item $\pi$ is the set of states. Each state $B\in \pi$ is a subset of $Q$. 
    \item $\Sigma$ is the alphabet.
    \item the transition relation $\delta'$ is defined by:
   Given $B, B' \in \pi$, and $a \in \Sigma$, $(B, a, B') \in \delta'$ iff there are states $q \in B$ and $q' \in B'$ such that $\delta(q, a) = q'$. 
   That is, if there is a state $q$ in the block $B$, and the transition given input $a$ from $q$ is defined,  then a  transition from $B$  to $B'$ with input $a$ is defined, where $B'$ includes all states in $Q$ that can be reached with input $a$ from some state in $B$. 
    \item $O$ is the set of ranks. 
    \item $\succeq' \coloneqq \succeq$ is the partial order over ranks and is the same as in $\pnfa$.
 \item   $\lambda'$ is the ranking function, defined by: For each $B\in \pi$,  if there exists $q\in B$ such that $\lambda(q)\ne \undefined$, then $\lambda'(B) = \lambda(q)$,  otherwise, $\lambda'(B) = \undefined$.
\end{itemize}
\end{definition}
It is noted that for a given $B\in \pi$ and an input $a\in \Sigma$, there may exist more than one subset $B'$ to be reached, \ie, it is possible that $(B,a,B')\in \delta'$ and $(B,a,B'')\in \delta'$ where $B'\ne B''$. Thus, even if $\calA$ is deterministic, the quotient of $\calA$ under $\pi$ may be nondeterministic. 

It is also noted that due to $\pi$ being ranking-consistent, the ranking function $\lambda'$ of the quotient of $\calA$ is well-defined.

\begin{definition}
Given a partition $\pi$ of $Q$, and two blocks $B_i, B_j \in \pi$,  the \emph{join} of $B_i$ and $B_j$, denoted $J(\pi, B_i, B_j)$, is the partition obtained from $\pi$ by combining $B_i$ and $B_j$, i.e.,
        \begin{multline*}
            J(\pi, B_i, B_j) =  \left( \pi\setminus \{B_i, B_j\} \right) \cup 
                \{B_i \cup B_j\}. \notag
        \end{multline*}
\end{definition}
        This `join' operation is also called \emph{merging} in the context of automata learning.

Given a quotient  $\pnfa/\pi$, the \emph{determinization} of  $\pnfa/\pi$, denoted as $D(\pnfa/\pi)$, is computed \emph{recursively} given three different cases:
Case 1: Suppose there exists $B_k \in \pi$ and $a\in \Sigma$ such that   $B_i, B_j \in \delta(B_k,a)$ but $B_i\ne B_j$,  and either  1) $\lambda(B_i)= \undefined$ or 2) $\lambda(B_j)= \undefined$ or 3) $\lambda(B_i)=\lambda(B_j)$, then
\begin{align*}
& D(\pnfa /\pi) \coloneqq \\
&\begin{cases} D(\pnfa /J(\pi,B_i, B_j)) & \text{ if  $D(\pnfa /J(\pi,B_i, B_j)) \ne\varnothing$,}\\
\varnothing &\text{otherwise},
\end{cases}
\end{align*}
where $J(\pi,B_i, B_j)$ is the join of $B_i$ and $B_j$ in partition $\pi$ and $\varnothing$ means that the output is undefined.

Case 2:   Suppose there exist $B_k \in \pi$ and $a\in \Sigma$ such that   $B_i, B_j \in \delta(B_k,a)$ but $B_i\ne B_j$,   $\lambda(B_i)\ne  \undefined$, $\lambda(B_j)\ne \undefined$, and $\lambda(B_i)\ne \lambda (B_j)$, then 
\[
D(\pnfa /\pi) =  \undefined.
\]
Case 3: if there is no non-deterministic transitions in $\pnfa /\pi$, that is, $|\delta(B, a)| \le 1$ for any $B\in \pi$, $a\in \Sigma$, then 
\[
D(\pnfa /\pi) =  \pnfa /\pi.
\]

It is observed that the function $D(\cdot)$ operates on a \ac{pnfa} and is a recursive function (self-referential). It is similar to determinization of an \ac{nfa}, except that each state here is a block and the join of two blocks is carried out when the ranking consistency condition is satisfied. That is, If $B_i$ is joined with $B_j$ in a ranking-consistent partition $\pi$ in this determinization operation, then the new partition  $J(\pi, B_i, B_j)$ is ranking-consistent.

The joining and determinizing operations repeat until no more joins can be made or the input to function is a \ac{pdfa}.

\normalcolor


 Accordingly, we define a new operation $DJ$, called the \emph{deterministic join}, which produces a quotient automaton:
 $$
 \pnfa_0 / DJ(\pi, B_i, B_j) = D(\pnfa_0 / J(\pi, B_i, B_j)).
 $$
 In words, this operation first joins two blocks $B_i$ and $B_j$, and then may  join more blocks recursively as the result of the determinization of the quotient of $\pdfa_0$ under the new partition $J(\pi, B_i, B_j)$. After the operation, the obtained partition is denoted by $DJ(\pi_i,B_i, B_j)$.


Finally,  the \ac{pdfa} learning algorithm is described in Alg.~\ref{alg:euclid}. It uses a function $\eta: 2^{W_S} \setminus \emptyset \rightarrow \Sigma^*$, referred to as  the \emph{block-name} function, such that for each $U \in 2^{W_S} \setminus \emptyset$,
$\eta(U)$ is the smallest word within $U$ given the lexicographical ordering, i.e., $\eta(U)=\min_{<_L}\{w \in U\}$.
%
%
%
We lift the lexicographical ordering $<_L$ on to the blocks of a given partition such that given $B_1, B_2 \in \pi$,
$B_1 <_L B_2$ iff $\eta(B_1) <_L \eta(B_2)$.

Algorithm~\ref{alg:euclid} begins with an initial partition $\pi_0$ (line~\ref{line:initial_parition}), and then, in lexicographical ordering, iteratively performs joining of blocks and determinization of the automaton to update the partition for $n$ steps where $n$ is the number of states in the $\PTree_S$. At the end, it returns the partition $\pi_r\coloneqq \pi_n$ and the quotient \ac{pdfa} $\calA_r \coloneqq D(\calA_0/\pi_r)$.

\begin{algorithm}
\caption{LearnPDFA}\label{alg:euclid}
\begin{algorithmic}[1]
\Procedure{LearnPDFA}{$\PTree_S$} 
\State $\pi_0 \gets \{ \{u_0\}, \{u_1\}, \cdots, \{u_n\} \}$ 
\label{line:initial_parition}
\State $\calA_0 \gets \PTree_S$.
\For{$i \gets 1$ \textbf{to} $n$} \label{line:outer_loop_start}
\State $\Merged \gets \False$
\If{$\text{exists } B_i \in \pi_{i-1} \text{ s.t. } u_i = \eta(B_i)$}
\For{$B <_L B_i$ in the ascending order} \label{line:for_loop_start} 
\If{$\lambda_S(B)=\lambda_S(B_i) \text{ or } \lambda_S(B) = \undefined \text{ or } \lambda_S(B_i) = \undefined$}
\If{$ D(\pnfa_0 / J(\pi_{i-1}, B, B_i)) \neq \undefined$} \label{line:check_determinization}
\State $\pi_i = DJ(\pi_{i-1}, B, B_i)$ 
\label{line:join_deter}
\State $\calA_i \gets D(A_0/\pi_i)$,
 \State  $\Merged \gets \True$
\State $\textbf{break}$
\EndIf
\EndIf
\EndFor\label{line:for_loop_end}
\EndIf
\If{$\Merged == \False$}
\State $\pi_i \gets \pi_{i-1}$
\State $\calA_i \gets \calA_{i-1}$

\EndIf
\EndFor \label{line:outer_loop_end}
\State \textbf{return} $\pi_r\gets \pi_n$, $\calA_r  \gets \calA_n $ 
\EndProcedure
\end{algorithmic}
\end{algorithm}

%

\normalcolor
\begin{example}[cont.]
 We describe the algorithm using the running example. 
The prefix tree \ac{pnfa} for this sample is shown in Figure~\ref{fig:A0}. The set of prefixes in the lexicographical order is 
\begin{multline*}
\epsilon, a, b, aa, ab, ba, bb, aab, aba, abb, \\
baa, bbb, aabb, abaa, abbb.
\end{multline*}

In the first iteration, $i=1$,  $u_1 = a$, $\{B\in \pi_0 |B<_L\{a\}\} = \{\epsilon\}\} $.  We try to merge state $a$ with $\epsilon$, \ie,  $J(\pi_0, \{\epsilon\}, \{a\})$. The quotient automaton $\calA / J(\pi_0, \{\epsilon\}, \{a\})$ is shown in Fig.~\ref{fig:u1_u0}. 
These two states can be merged because only one of them is colored (has rank). 

After performing determinization of this automaton in Fig.~\ref{fig:u1_u0}, $D(\pnfa_0 / J(\pi_0, B, \{u_1\}))$ returns $\undefined$ because both states $ \epsilon$ and $ aa$   can be reached by input $a$ from $\epsilon$ but they are assigned different rankings (orange and blue).
Hence, merging $a$ and $\epsilon$ is rejected, and $\pi_1 = \pi_0$.

For iteration $2$, $u_2=b$ and $\{B\in \pi_1  \mid B<_L \{b\}\} = \{\{\epsilon\}, \{a\}\} $. We first try to merge $  b$ with 
  $  \epsilon $.
Those two can be merged because both are uncolored.
The automaton obtained from  merging is   shown in Figure~\ref{fig:u2_u0}.
It is noted that the automaton is non-deterministic and since both  $ a$ and $ ba$ can be reached by $a$ from state $\epsilon$ but have different colors, this merging is   rejected, 
$D(\pnfa_0 / J(\pi_1, \{\epsilon\}, \{b\})) = \undefined$.

Next, we try to merge $\{b\}$ with $\{a\}$.  The automaton $\pnfa_0 / J(\pi_1, \{b\}, \{a\})$, shown in Figure~\ref{fig:u2_u1}, cannot be determinized   due to non-deterministic transitions with $a$ to both $aa$ and $ba$ which have different colors. 
Thus, $\pi_2 = \pi_1=\pi_0$. That is, till this step, all attempted merges are rejected. We return to  the prefix tree automaton.

Next, $i=3$, $u_3=aa$. $\{B\in \pi_2 \mid B<_L \{u_3\}\} = \{\{\epsilon\}, \{a\}, \{b\}\}$. We try to merge $u_3$ with $\epsilon$, $a$, $b$, in this order.
Merging $aa$ with $\epsilon$ results in the automaton in Figure~\ref{fig:u3_u0}. 
In the determinization, $aab$ is merged with $b$, and $aabb$ is merged with $bb$, shown     in Figure~\ref{fig:u7_u2} and Figure~\ref{fig:u12_u6}, respectively.

Next $i=4$ and $u_4 = ab$. $\{B \in \pi_3\mid B<_L \{ab\} \}= \{\{\epsilon\}, \{a\},\{b\}\}$. We try to merge $ab$ with each block respectively, but have to reject every merge operation because the determinization after merging fails. 
Hence, $\pi_4 = \pi_3$. For example, merging $ab$ with $\epsilon$ fails because $\epsilon \xrightarrow{aa} \epsilon$ and $ab\xrightarrow{aa} abaa $ but $\epsilon$ and $abaa$ have different colors.

For $i=5$, $u_5 = ba$,   $\{B \in \pi_4 \mid B<_L \{ba\} \}= \{\{\epsilon\}, \{a\},\{b\}, \{ab\}\}$. $ba$ cannot merge with either $\epsilon$ or $a$ due to different colors. It cannot be merged with $b$ because $b\xrightarrow{a} ba$ and $ba\xrightarrow{a} baa$ reach two states with different colors. Yet $ba$ can be merged with $ab$.  The resulting automaton is Figure~\ref{fig:u5_u4}. After the determinization, $baa$ is merged with $aba$.

Figures~\ref{fig:u6_u0}-\ref{fig:u9_u1} shows the resulting automaton for the next three iterations.  
Finally, the automaton in Figure~\ref{fig:u9_u1} coupled with the sample partial order in Figure~\ref{fig:learn_pref_graph} is obtained from this algorithm. %
Table~\ref{tbl:final-ex} summarizes the states in $\PTree_S$ being merged:
\begin{table}[h]
\centering
\begin{tabular}{c|c}
      State   shown in Figure~\ref{fig:u9_u1}     &          Merged states         \\
      \hline
$\epsilon$ & $\epsilon, aa,bb,aabb$        \\
$a$        & $a, abb $             \\
$b$        & $b, aba,baa, bbb,aab$   \\
$ab$       & $ab, ba,abaa, abbb$     
\end{tabular}
\caption{The final partition $\pi_r$.}
\label{tbl:final-ex}
\end{table}

\begin{figure*}[h!]
		\centering
		\begin{subfigure}[b]{0.326\textwidth}
			\centering
\tikzset{estate/.style={draw,ellipse,minimum width=1cm,minimum height=0.5cm}}
\begin{tikzpicture}[shorten >=1pt,node distance=1.5cm, on grid, auto, transform shape, scale=0.7]

  \node[estate,    initial] (eps) {$\varepsilon$};
  \node[estate, fill=orange!20,  above right=of eps] (a) {$a$};
    \node[estate,      right=of a] (ab) {$ab$};
      \node[estate,fill=orange!20,  above right=of ab] (aba) {$aba$};
      \node[estate,  fill=orange!20,   right=of ab] (abb) {$abb$};
    \node[estate,fill=blue!20,   above right=of a] (aa) {$aa$};
  \node[estate,   above right=of aa] (aab) {$aab$};
  \node[estate,  fill=blue!20,   right=of aab] (aabb) {$aabb$};
  \node[estate,   below right =of eps] (b) {$b$};
  \node[estate,  fill=darkgreen!20, above right=of b] (ba) {$ba$};  \node[estate,    fill=orange!20, right=of ba] (baa) {$baa$};
  \node[estate,  fill=blue!20,  right=of b] (bb) {$bb$};
  \node[estate,   fill=orange!20,  right=of bb] (bbb) {$bbb$};
  \node[estate,   fill=darkgreen!20, right=of aba] (abaa) {$abaa$};
  \node[estate,  fill=darkgreen!20,  right=of abb] (abbb) {$abbb$};

  \path[->]
    (eps) edge node {a} (a)
          edge node {b} (b)
    (a) edge node {a} (aa)
         edge node {b} (ab)
    (b) edge node {a} (ba)
         edge node {b} (bb)
    (aa) edge node {b} (aab)
    (ab) edge node {a} (aba)
         edge node {b} (abb)
    (ba) edge node {a} (baa)
    (bb) edge node {b} (bbb)
    (aab) edge node {b} (aabb)
    (abb) edge node {b} (abbb)
    (aba) edge node {a} (abaa);

\end{tikzpicture}			
   \caption{The prefix tree \ac{pnfa} $\pnfa_0$.}
   \label{fig:A0}
		\end{subfigure}
		\hfill
  \centering
		\begin{subfigure}[b]{0.326\textwidth}
			\centering
\tikzset{estate/.style={draw,ellipse,minimum width=1cm,minimum height=0.5cm}}
\begin{tikzpicture}[shorten >=1pt,node distance=1.5cm, on grid, auto, transform shape, scale=0.7]

  \node[estate, fill=orange!20,   initial] (eps) {$\varepsilon$}; 
    \node[estate,      right=of a] (ab) {$ab$};
      \node[estate,fill=orange!20,  above right=of ab] (aba) {$aba$};
      \node[estate,  fill=orange!20,   right=of ab] (abb) {$abb$};
    \node[estate,fill=blue!20,   above right=of a] (aa) {$aa$};
  \node[estate,   above right=of aa] (aab) {$aab$};
  \node[estate,  fill=blue!20,   right=of aab] (aabb) {$aabb$};
  \node[estate,   below right =of eps] (b) {$b$};
  \node[estate,  fill=darkgreen!20, above right=of b] (ba) {$ba$};  \node[estate,    fill=orange!20, right=of ba] (baa) {$baa$};
  \node[estate,  fill=blue!20,  right=of b] (bb) {$bb$};
  \node[estate,   fill=orange!20,  right=of bb] (bbb) {$bbb$};
  \node[estate,   fill=darkgreen!20, right=of aba] (abaa) {$abaa$};
  \node[estate,  fill=darkgreen!20,  right=of abb] (abbb) {$abbb$};

  \path[->]
    (eps) edge [loop above] node {a} (eps)
          edge node {b} (b)
    (eps) edge node {a} (aa)
         edge node {b} (ab)
    (b) edge node {a} (ba)
         edge node {b} (bb)
    (aa) edge node {b} (aab)
    (ab) edge node {a} (aba)
         edge node {b} (abb)
    (ba) edge node {a} (baa)
    (bb) edge node {b} (bbb)
    (aab) edge node {b} (aabb)
    (abb) edge node {b} (abbb)
    (aba) edge node {a} (abaa);

\end{tikzpicture}			
   \caption{ $\calA_0/J(\pi_0, \{a\},\{\epsilon\})$ [rejected merge].}
     \label{fig:u1_u0}
		\end{subfigure}
		\hfill
\begin{subfigure}[b]{0.326\textwidth}
			\centering
\begin{tikzpicture}[shorten >=1pt,node distance=1.5cm, on grid, auto, transform shape, scale=0.7]

  \node[estate,    initial] (eps) {$\varepsilon$};
  \node[estate, fill=orange!20,  above right=of eps] (a) {$a$};
    \node[estate,      right=of a] (ab) {$ab$};
      \node[estate,fill=orange!20,  above right=of ab] (aba) {$aba$};
      \node[estate,  fill=orange!20,   right=of ab] (abb) {$abb$};
    \node[estate,fill=blue!20,   above right=of a] (aa) {$aa$};
  \node[estate,   above right=of aa] (aab) {$aab$};
  \node[estate,  fill=blue!20,   right=of aab] (aabb) {$aabb$};
   \node[estate,  fill=darkgreen!20, above right=of b] (ba) {$ba$};  \node[estate,    fill=orange!20, right=of ba] (baa) {$baa$};
  \node[estate,  fill=blue!20,  right=of b] (bb) {$bb$};
  \node[estate,   fill=orange!20,  right=of bb] (bbb) {$bbb$};
  \node[estate,   fill=darkgreen!20, right=of aba] (abaa) {$abaa$};
  \node[estate,  fill=darkgreen!20,  right=of abb] (abbb) {$abbb$};

  \path[->]
    (eps) edge node {a} (a)
            edge [loop above] node {b} (eps)
    (a) edge node {a} (aa)
         edge node {b} (ab)
    (eps) edge node {a} (ba)
         edge node {b} (bb)
    (aa) edge node {b} (aab)
    (ab) edge node {a} (aba)
         edge node {b} (abb)
    (ba) edge node {a} (baa)
    (bb) edge node {b} (bbb)
    (aab) edge node {b} (aabb)
    (abb) edge node {b} (abbb)
    (aba) edge node {a} (abaa);

\end{tikzpicture}			
   \caption{ $\calA_0/J(\pi_0, \{b\},\{\epsilon\})$ [rejected merge], $\pi_1 = \pi_0$.}
   \label{fig:u2_u0}
		\end{subfigure}
		\hfill
  \begin{subfigure}[b]{0.326\textwidth}
			\centering
			\begin{tikzpicture}[shorten >=1pt,node distance=1.5cm, on grid, auto, transform shape, scale=0.7]

  \node[estate,    initial] (eps) {$\varepsilon$};
  \node[estate, fill=orange!20,    right=of eps] (a) {$a$};
    \node[estate,      right=of a] (ab) {$ab$};
      \node[estate,fill=orange!20,  above right=of ab] (aba) {$aba$};
      \node[estate,  fill=orange!20,   right=of ab] (abb) {$abb$};
    \node[estate,fill=blue!20,   above right=of a] (aa) {$aa$};
  \node[estate,   above right=of aa] (aab) {$aab$};
  \node[estate,  fill=blue!20,   right=of aab] (aabb) {$aabb$};
   \node[estate,  fill=darkgreen!20, below right=of a] (ba) {$ba$};  \node[estate,    fill=orange!20, right=of ba] (baa) {$baa$};
  \node[estate,  fill=blue!20,  below=of ba] (bb) {$bb$};
  \node[estate,   fill=orange!20,  right=of bb] (bbb) {$bbb$};
  \node[estate,   fill=darkgreen!20, right=of aba] (abaa) {$abaa$};
  \node[estate,  fill=darkgreen!20,  right=of abb] (abbb) {$abbb$};

  \path[->]
    (eps) edge node {a,b} (a)
          
    (a) edge node {a} (aa)
         edge node {b} (ab)
      edge node {a} (ba)
         edge [bend right] node [right] {b} (bb)
    (aa) edge node {b} (aab)
    (ab) edge node {a} (aba)
         edge node {b} (abb)
    (ba) edge node {a} (baa)
    (bb) edge node {b} (bbb)
    (aab) edge node {b} (aabb)
    (abb) edge node {b} (abbb)
    (aba) edge node {a} (abaa);

\end{tikzpicture}
   \caption{ $\pnfa_0 / J(\pi_1, \{a\}, \{b\})$ [rejected merge].   $\pi_2 = \pi_1$.}
     \label{fig:u2_u1}
		\end{subfigure}
		\hfill
\begin{subfigure}[b]{0.326\textwidth}
			\centering
\tikzset{estate/.style={draw,ellipse,minimum width=1cm,minimum height=0.5cm}}
\begin{tikzpicture}[shorten >=1pt,node distance=1.5cm, on grid, auto, transform shape, scale=0.7]
  \node[estate, fill=blue!20,    initial] (eps) {$\varepsilon$};
  \node[estate, fill=orange!20,  above right=of eps] (a) {$a$};
    \node[estate,      right=of a] (ab) {$ab$};
      \node[estate,fill=orange!20,  above right=of ab] (aba) {$aba$};
      \node[estate,  fill=orange!20,   right=of ab] (abb) {$abb$};
   \node[estate,   above right =of a] (aab) {$aab$};
  \node[estate,  fill=blue!20,  above right=of aab] (aabb) {$aabb$};
  \node[estate,   below right =of eps] (b) {$b$};
  \node[estate,  fill=darkgreen!20, above right=of b] (ba) {$ba$};  \node[estate,    fill=orange!20, right=of ba] (baa) {$baa$};
  \node[estate,  fill=blue!20,  right=of b] (bb) {$bb$};
  \node[estate,   fill=orange!20,  right=of bb] (bbb) {$bbb$};
  \node[estate,   fill=darkgreen!20, right=of aba] (abaa) {$abaa$};
  \node[estate,  fill=darkgreen!20,  right=of abb] (abbb) {$abbb$};

  \path[->]
    (eps) edge  node {a} (a)
          edge node {b} (b)
           edge[bend left =120] node {b} (aab)
    (a) edge [bend left] node {a} (eps)
         edge node {b} (ab)
    (b) edge node {a} (ba)
         edge node {b} (bb)
    (ab) edge node {a} (aba)
         edge node {b} (abb)
    (ba) edge node {a} (baa)
    (bb) edge node {b} (bbb)
    (aab) edge node {b} (aabb)
    (abb) edge node {b} (abbb)
    (aba) edge node {a} (abaa);

\end{tikzpicture}			
   \caption{$\pnfa_0 / J(\pi_2, \{\epsilon\}, \{aa\})$ [accepted merge].}
     \label{fig:u3_u0}
		\end{subfigure}
		\hfill  
\begin{subfigure}[b]{0.326\textwidth}
			\centering
			 \tikzset{estate/.style={draw,ellipse,minimum width=1cm,minimum height=0.5cm}}
\begin{tikzpicture}[shorten >=1pt,node distance=1.5cm, on grid, auto, transform shape, scale=0.7]
  \node[estate, fill=blue!20,    initial] (eps) {$\varepsilon$};
  \node[estate, fill=orange!20,  above right=of eps] (a) {$a$};
    \node[estate,      right=of a] (ab) {$ab$};
      \node[estate,fill=orange!20,  above right=of ab] (aba) {$aba$};
    \node[  above right =of a] (aab) { };
          \node[estate,  fill=orange!20,   right=of ab] (abb) {$abb$};

  \node[estate,  fill=blue!20,  above right=of aab] (aabb) {$aabb$};
  \node[estate,   below right =of eps] (b) {$b$};
  \node[estate,  fill=darkgreen!20, above right=of b] (ba) {$ba$};  \node[estate,    fill=orange!20, right=of ba] (baa) {$baa$};
  \node[estate,  fill=blue!20,  right=of b] (bb) {$bb$};
  \node[estate,   fill=orange!20,  right=of bb] (bbb) {$bbb$};
  \node[estate,   fill=darkgreen!20, right=of aba] (abaa) {$abaa$};
  \node[estate,  fill=darkgreen!20,  right=of abb] (abbb) {$abbb$};

  \path[->]
    (eps) edge  node {a} (a)
          edge node {b} (b)
     (a) edge [bend left] node {a} (eps)
         edge node {b} (ab)
    (b) edge node {a} (ba)
         edge node {b} (bb)
         edge [bend right=240] node {b} (aabb)
    (ab) edge node {a} (aba)
         edge node {b} (abb)
    (ba) edge node {a} (baa)
    (bb) edge node {b} (bbb)
   
    (abb) edge node {b} (abbb)
    (aba) edge node {a} (abaa);

\end{tikzpicture}
   \caption{Determinization, merging $\{ aab\}$ with $     \{ b\}$.}
     \label{fig:u7_u2}
		\end{subfigure}
    \begin{subfigure}[b]{0.326\textwidth}
			\centering
			\tikzset{estate/.style={draw,ellipse,minimum width=1cm,minimum height=0.5cm}}
\begin{tikzpicture}[shorten >=1pt,node distance=1.5cm, on grid, auto, transform shape, scale=0.7]
  \node[estate, fill=blue!20,    initial] (eps) {$\varepsilon$};
  \node[estate, fill=orange!20,  above right=of eps] (a) {$a$};
    \node[estate,      right=of a] (ab) {$ab$};
      \node[estate,fill=orange!20,  above right=of ab] (aba) {$aba$};
    \node[  above right =of a] (aab) { };
          \node[estate,  fill=orange!20,   right=of ab] (abb) {$abb$};

   \node[estate,    below right =of eps] (b) {$b$};
  \node[estate,  fill=darkgreen!20, above right=of b] (ba) {$ba$};  \node[estate,    fill=orange!20, right=of ba] (baa) {$baa$};
  \node[estate,  fill=blue!20,  right=of b] (bb) {$bb$};
  \node[estate,   fill=orange!20,  right=of bb] (bbb) {$bbb$};
  \node[estate,   fill=darkgreen!20, right=of aba] (abaa) {$abaa$};
  \node[estate,  fill=darkgreen!20,  right=of abb] (abbb) {$abbb$};

  \path[->]
    (eps) edge  node {a} (a)
          edge node {b} (b)
     (a) edge [bend left] node {a} (eps)
         edge node {b} (ab)
    (b) edge node {a} (ba)
         edge node {b} (bb)
     (ab) edge node {a} (aba)
         edge node {b} (abb)
    (ba) edge node {a} (baa)
    (bb) edge node {b} (bbb)
   
    (abb) edge node {b} (abbb)
    (aba) edge node {a} (abaa);

\end{tikzpicture}
   \caption{Determinization, merging  $  aabb$ with $  bb$, obtain $\pi_3$ and $\pi_4=\pi_3$.}
     \label{fig:u12_u6}
		\end{subfigure}
		\hfill
\begin{subfigure}[b]{0.326\textwidth}
			\centering
\tikzset{estate/.style={draw,ellipse,minimum width=1cm,minimum height=0.5cm}}
\begin{tikzpicture}[shorten >=1pt,node distance=1.5cm, on grid, auto, transform shape, scale=0.7]
  \node[estate, fill=blue!20,    initial] (eps) {$\varepsilon$};
  \node[estate, fill=orange!20,  above right=of eps] (a) {$a$};
    \node[estate,  fill=darkgreen!20,  below  right=of a] (ab) {$ab$};
      \node[estate,fill=orange!20,  above right=of ab] (aba) {$aba$};
    \node[  above right =of a] (aab) { };
          \node[estate,  fill=orange!20,   right=of ab] (abb) {$abb$};

   \node[estate,    below right =of eps] (b) {$b$};
  
  \node[estate,  fill=blue!20,  right=of b] (bb) {$bb$};
  \node[estate,   fill=orange!20,  right=of bb] (bbb) {$bbb$};
  \node[estate,   fill=darkgreen!20, right=of aba] (abaa) {$abaa$};
  \node[estate,  fill=darkgreen!20,  right=of abb] (abbb) {$abbb$};

  \path[->]
    (eps) edge  node {a} (a)
          edge node {b} (b)
     (a) edge [bend left] node {a} (eps)
         edge node {b} (ab)
    (b) edge node {a} (ab)
         edge node {b} (bb)
     (ab) edge node {a} (aba)
         edge node {b} (abb)
    (bb) edge node {b} (bbb)
   
    (abb) edge node {b} (abbb)
    (aba) edge node {a} (abaa);

\end{tikzpicture}			
   \caption{$ \pnfa_0 / J(\pi_4, \{ab\}, \{ba\}) $ and then merging $ab$ with $ba$, $baa$ with $aba$ due to determinization. }
     \label{fig:u5_u4}
		\end{subfigure}
		\hfill  
\begin{subfigure}[b]{0.326\textwidth}
			\centering
			\tikzset{estate/.style={draw,ellipse,minimum width=1cm,minimum height=0.5cm}}
\begin{tikzpicture}[shorten >=1pt,node distance=1.5cm, on grid, auto, transform shape, scale=0.7]
  \node[estate, fill=blue!20,    initial] (eps) {$\varepsilon$};
  \node[estate, fill=orange!20,  above right=of eps] (a) {$a$};
    \node[estate,  fill=darkgreen!20,   below right=of a] (ab) {$ab$};
      \node[estate,fill=orange!20,  above right=of ab] (aba) {$aba$};
    \node[  above right =of a] (aab) { };
          \node[estate,  fill=orange!20,   right=of ab] (abb) {$abb$};

   \node[estate,    fill=orange!20,   below right =of eps] (b) {$b$};
  
  \node[estate,   fill=darkgreen!20, right=of aba] (abaa) {$abaa$};
  \node[estate,  fill=darkgreen!20,  right=of abb] (abbb) {$abbb$};

  \path[->]
    (eps) edge  node {a} (a)
          edge node {b} (b)
     (a) edge [bend left] node {a} (eps)
         edge node {b} (ab)
    (b) edge node {a} (ab)
         edge [bend left] node {b} (eps)
     (ab) edge node {a} (aba)
         edge node {b} (abb)
   
    (abb) edge node {b} (abbb)
    (aba) edge node {a} (abaa);

\end{tikzpicture}
   \caption{ $\pnfa_0 /  J(\pi_5, \{bb\}, \{\epsilon\})$ and then merging $bbb$ with $b$ due to determinization, obtain $\pi_6$.}
     \label{fig:u6_u0}
		\end{subfigure}
\begin{subfigure}[b]{0.326\textwidth}
			\centering
\tikzset{estate/.style={draw,ellipse,minimum width=1cm,minimum height=0.5cm}}
\begin{tikzpicture}[shorten >=1pt,node distance=1.5cm, on grid, auto, transform shape, scale=0.7]
  \node[estate, fill=blue!20,    initial] (eps) {$\varepsilon$};
  \node[estate, fill=orange!20,  above right=of eps] (a) {$a$};
    \node[estate,  fill=darkgreen!20,   below right=of a] (ab) {$ab$};
    \node[  above right =of a] (aab) { };
        \node[estate,  fill=orange!20,   right=of ab] (abb) {$abb$};

   \node[estate,    fill=orange!20,   below right =of eps] (b) {$b$};
  
   \node[estate,  fill=darkgreen!20,  right=of abb] (abbb) {$abbb$};

  \path[->]
    (eps) edge  node {a} (a)
          edge node {b} (b)
     (a) edge [bend left] node {a} (eps)
         edge node {b} (ab)
    (b) edge node {a} (ab)
         edge [bend left] node {b} (eps)
     (ab)   edge [bend left] node {a} (b)
          edge node {b} (abb)
   
   (abb) edge node {b} (abbb);

\end{tikzpicture}			
   \caption{Merging $ aba$ with $ b$, $\pnfa_0 / J(\pi_6, \{aba  \},\{b\})$ and subsequently $abaa$ with $ab$.}
     \label{fig:u8_u2}
		\end{subfigure}  \quad 
\begin{subfigure}[b]{0.326\textwidth}
			\centering
\tikzset{estate/.style={draw,ellipse,minimum width=1cm,minimum height=0.5cm}}
\begin{tikzpicture}[shorten >=1pt,node distance=1.5cm, on grid, auto, transform shape, scale=0.7]
  \node[estate, fill=blue!20,    initial] (eps) {$\varepsilon$};
  \node[estate, fill=orange!20,  above right=of eps] (a) {$a$};
    \node[estate,  fill=darkgreen!20,   below right=of a] (ab) {$ab$};
    \node[  above right =of a] (aab) { };

   \node[estate,    fill=orange!20,   below right =of eps] (b) {$b$};
  

  \path[->]
    (eps) edge  node {a} (a)
          edge node {b} (b)
     (a) edge [bend left] node {a} (eps)
         edge [bend left] node {b} (ab)
    (b) edge node {a} (ab)
         edge [bend left] node {b} (eps)
     (ab)   edge [bend left] node {a} (b)
          
          edge   node {b} (a);
   

\end{tikzpicture}			
   \caption{Merging $ abb$ with $ a$ and then $abbb$ with $ab$ due to determinization.}
     \label{fig:u9_u1}
		\end{subfigure}    

        \caption{Steps of our algorithm for \ac{pnfa} inferring for the sample in Figure~\ref{fig:sample}. The partial order learned by our algorithm is shown in Figure~\ref{fig:learn_pref_graph}.}
  \label{fig:pref_tree_and_rpni}
	\end{figure*}

\end{example}


\section{Identification in the limit}
\label{sec:iden_in_limit}
Though Algorithm~\ref{alg:euclid}
learns a \ac{pdfa}, its correctness  in learning a \ac{pdfa} equivalent to the \ac{pdfa} from which the sample is generated,  is only ensured when the sample satisfies a specific condition. 
In this section, we define the condition of sample and prove that if a given sample satisfies this condition, then the proposed algorithm accurately learns the canonical \ac{pdfa} of the   preference model from the sample is drawn from. 

We first introduce two definitions, extending the shortest prefix and nucleus of a \ac{dfa}~\cite{oncina1992identifying} to those of a \ac{pdfa}.
\begin{definition}
\label{def:short_nucl}
Let $\pdfa= \langle Q, \Sigma, \delta, q_0, O, \succeq, \lambda  \rangle$ be the minimal \ac{pdfa} the sample $S$ is drawn from.
Given a state $q \in Q$, the \emph{shortest prefix} of $q$, denoted $\SPref(q)$, is the shortest word in the lexicographical ordering of $\Sigma^*$ that reaches state $q$, \ie, 
\begin{multline*}
\SPref(q)  \in \Sigma^\ast 
\text{ such that, } \delta (q_0, \SPref(q) )=q \text{ and } \\
\forall w' \in \Sigma^\ast,\delta (q_0, w')=q \implies w' >_L \SPref(q)   .
  \end{multline*}
Note that by construction $\SPref(q)$ is a singleton. Thus, we use  $\SPref(q)$ to refer to the shortest prefix for state $q$.
 The \emph{shortest prefixes of $\pdfa$}, denoted $\SPref(\pdfa)$, is the set of all the short prefixes of the states within $\pdfa$, i.e.,
\[
\SPref(\pdfa) =  \{\SPref(q) \mid q \in Q \}.
\]
\end{definition}
The \emph{nucleus} of $\pdfa$, denoted $\NU(\pdfa)$, is the set of all one-letter extensions of the words within $\SPref(\pdfa)$ and the empty string,
\[
\NU(\pdfa) = \{ ua \mid u \in \SPref(\pdfa) \text{ and } a \in \Sigma \} \cup \{\epsilon\}.
\]
While the shortest prefixes set contains the set of minimal words required to reach all the states of $\pdfa$, the nucleus contains a set of words that cover all the transitions in the automaton, including the transition on `empty string'   that reaches the initial state.

\begin{example} Consider the \ac{pdfa} in Figure~\ref{fig:pdfa}.
For this \ac{pdfa}, $\SPref(00) = \{ \epsilon \}$, $\SPref(10) = \{a\}$, $\SPref(01) = \{b\}$, $\SPref(11) = \{ab\}$.
Accordingly,
\[
\SPref(\pdfa) = \{\epsilon, a, b, ab \}
\]
and
\[
\NU(\pdfa) = \{\epsilon, a, b, aa, ab, ba, bb, aba, abb \}.
\]
\end{example}

\begin{definition}
        \label{def:closureS}
            Given a sample $S \subset \Sigma^* \times \Sigma^* \times \{0, 1, \perp\}$, the \emph{transitive closure} of $S$, denoted $S^\circ$, is the minimal
            subset  $S' \subset \Sigma^* \times \Sigma^* \times \{0, 1, \perp \}$ 
            such that 
            \begin{enumerate}
                \item $S \subseteq S'$;
                \item \label{cond:eq_reflexive} $\{(w, w, 0) \mid w \in W_S \} \subseteq S'$;
                \item \label{cond:eq_incomp_symmetric} for $b \in \{0, \perp\}$, if $(w, w', b) \in S'$, then $(w', w, b) \in S'$;
                \item \label{cond:eq_succ_transitive} for $b\in \{0,1\}$, if $(w, w', b) \in S'$ and $(w', w'', b) \in S'$, then $(w, w'', b) \in S'$;
                \item \label{cond:eq_succ_incomp} for $b \in \{1, \perp \}$, if $(w, w', 0) \in S'$ and $(w', w'', b) \in S'$, then $(w, w'', b) \in S'$; and
                \item \label{cond:succ_incomp_eq} for $b \in \{1, \perp\}$, if $(w, w', b) \in S'$ and $(w', w'', 0) \in S'$, then $(w, w'', b) \in S'$.
            \end{enumerate}
        \end{definition}
        %

    Notice that the size of $S^\circ$ is polynomial to the size of $S$. 
    More precisely, $|S^\circ| = \bigO(|S|^2)$. 
    This is because between any two words $w, w' \in W_S$, at most one tuple $(w, w', b)$ could exist in $S^\circ$, and that $|W_S| = O(|S|)$.
    The time required to construct $S^\circ$ is polynomial to the size of $S$.
    To construct $S^\circ$, one can extend common algorithms used for making the transitive closure of a relation.
    For example, we can extend the Floyed-Warshall algorithm~\cite{floyd1962algorithm}, by which, the time complexity will be $\bigO(|W_S|^3)=\bigO(|S|^3)$.
    
    These above definitions are used to identify a family of samples for which our algorithm can efficiently learn the canonical \ac{pdfa} the sample is drawn from.
    %
\begin{definition}
\label{def:char_set}
Let $\pdfa= \langle Q, \Sigma, \delta, q_0, O, \succeq, \lambda \rangle$ be the minimal \ac{pdfa} where a sample $S$ is drawn from.  
This sample  $S$ is \emph{characteristic} if
\begin{enumerate}
    \item \label{itm:char_nu_subset} $\NU(\pdfa) \subseteq \Pref(W_S)$;
    \item \label{itm:char_sp_and_nu} for each $w \in \SPref(\pdfa)$ and $u \in \NU(\pdfa)$, if $\delta (q_0, w) \neq \delta (q_0, u)$, then there exists $y \in \Sigma^*$ such that
        $wy \in W_S$ and $uy \in W_S$, and $$\{(wy, uy, 1), (uy, wy, 1), (wy, uy, \perp)\} \cap S^\circ \neq \emptyset;$$
    \item \label{itm:char_pref_graph} For each $q \in Q$, there exists $w \in W_S$ such that $\delta(q_0, w) = q$;  and
    \item \label{itm:char_pref} for each $w, u \in W_S$, 
    \begin{enumerate}
        \item if $w \sim u$, then $(w, u, 0) \in S^\circ$,
        \item if $w \succ u$, then $(w, u, 1) \in S^\circ$,
        \item if $w \nparallel u$, then $\{(w, u, 0), (w, u, 1), (u, w, 1)\} \cap S^\circ = \emptyset$.
    \end{enumerate}
\end{enumerate}
\end{definition}
Condition~\ref{itm:char_nu_subset} indicates that any  transition of the canonical \ac{pdfa} can be   triggered by reading at least one prefixes of the sample.
Condition~\ref{itm:char_sp_and_nu} guarantees that the states corresponding to those two words $w$ and $u$ in the prefix tree automaton are not merged by the algorithm (see  the proofs next). 
Conditions~\ref{itm:char_pref_graph} ensures that each rank is represented by at least one word within the sample. Condition~\ref{itm:char_pref}    ensures the partial order over the set of ranks can be inferred correctly (see the proofs next). 
%

Next, we present our main results about our algorithm.
The summary of these results is as follows:
\begin{itemize}
    \item In Lemma~\ref{lem:not_merged}, we establish a condition that prevents the merging of two blocks with inconsistent ranks. This lemma is used in the proof of Lemma~\ref{lem:learned_pdfa_isomorphic}.
    \item In Lemma~\ref{lem:state_mapping}, we show if the sample is characteristic, then there is a one-to-one mapping between the states of the learned \ac{pnfa} and the canonical \ac{pdfa} from which the sample is taken.  
    \item In Lemma~\ref{lem:deterministic_trans_func}, we prove that if the sample is characteristic, then the learned \ac{pnfa} is in fact a \ac{pdfa}.
    \item In Lemma~\ref{lem:learned_pdfa_isomorphic}, we show that if the sample is characteristic, then the one-to-one mapping between states of the learned \ac{pdfa} and the canonical \ac{pdfa} produces an isomorphism between the learned \ac{pdfa} and the canonical \ac{pdfa}, and hence, our algorithm accurately learns the canonical \ac{pdfa}.
\end{itemize}

  First, we establish conditions under which two blocks are prevented from being merged.
\begin{lemma}
\label{lem:not_merged}
Consider a ranking-consistent partition $\pi$ of $W_S$, two blocks $B_1, B_2 \in \pi$ and two words $u_1 \in B_1$ and $u_2 \in B_2$ selected from two blocks. 
If there exists $y \in \Sigma^*$ such that one of the  tuples  $(u_1y,  u_2y, 1)$ or  $(u_2 y, u_1y, 1)$ or $(u_1y, u_2y, \perp)$ is included in $S^o$, then $D(\pdfa_0 / J(\pi, B_1, B_2)) = \undefined$, meaning that
  $B_1$ is not merged with $B_2$ by Algorithm~\ref{alg:euclid}.
\end{lemma}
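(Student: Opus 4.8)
The plan is to argue by contradiction: assume the determinization succeeds and derive a violation of ranking-consistency. First I would isolate the one consequence of the hypothesis that drives everything. Since one of the tuples $(u_1 y, u_2 y, 1)$, $(u_2 y, u_1 y, 1)$, $(u_1 y, u_2 y, \perp)$ lies in $S^\circ$, both $u_1 y$ and $u_2 y$ belong to $W_S$, so $\lambda_S$ is defined on them. Moreover, because $S$ is drawn from a \ac{pdfa} and the closure rules preserve the induced preorder, $S^\circ$ is consistent; thus $u_1 y$ and $u_2 y$ are strictly ordered or incomparable, in particular not indifferent, so they lie in distinct strongly connected components of $G_\sim$ and receive distinct defined ranks $\lambda_S(u_1 y) \neq \lambda_S(u_2 y)$. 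This is the rank conflict I will propagate through the automaton.

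Next I would set up the contradiction. Suppose $D(\pdfa_0 / J(\pi, B_1, B_2)) \neq \undefined$. If $J(\pi, B_1, B_2)$ is not ranking-consistent the merge is rejected outright, so I may assume it is ranking-consistent; then the determinization terminates in Case~3 at a deterministic, ranking-consistent partition $\pi^\ast$ that coarsens $J(\pi, B_1, B_2)$, using the observation that every join performed during determinization is ranking-preserving. The key structural fact about the prefix tree is that its transitions satisfy $\delta_S(v,a)=va$; hence in the deterministic quotient $\pdfa_0/\pi^\ast$, the run on any word $w \in \Pref(W_S)$ from the initial block $B(\epsilon,\pi^\ast)$ ends in the unique block containing the state $w$. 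I would prove this by a short induction on $|w|$ directly from the definition of the quotient transition relation.

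With this reading fact in hand the contradiction is immediate. Because $u_1$ and $u_2$ are placed in a common block by $J(\pi,B_1,B_2)$ and subsequent joins only coarsen the partition, $u_1$ and $u_2$ share a block $B^\ast$ in $\pi^\ast$, so both the run on $u_1$ and the run on $u_2$ end at $B^\ast$. Reading $y$ onward from $B^\ast$ leads, by determinism, to a single block $B^{\ast\ast}$, and the reading fact forces $B^{\ast\ast}$ to equal both $B(u_1y,\pi^\ast)$ and $B(u_2y,\pi^\ast)$, hence to contain both $u_1 y$ and $u_2 y$. But $u_1y$ and $u_2y$ carry distinct defined ranks, contradicting the ranking-consistency of $\pi^\ast$. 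Therefore $D(\pdfa_0 / J(\pi, B_1, B_2)) = \undefined$, and by the merge test in Algorithm~\ref{alg:euclid} the block $B_1$ is never merged with $B_2$. (The degenerate case $y=\epsilon$ is subsumed: then $B^{\ast\ast}=B^\ast$ already contains $u_1,u_2$ with conflicting ranks.)

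The main obstacle is that $D$ is defined recursively and its sequence of Case~1 joins is not obviously order-independent, so I cannot simply trace one fixed run through the intermediate nondeterministic automata. I sidestep this entirely by reasoning only about the terminal partition $\pi^\ast$: the two properties I rely on — that $\pi^\ast$ is ranking-consistent, and that $u_1$ and $u_2$ stay co-located — hold for the final partition irrespective of merge order, since each Case~1 join is both ranking-preserving and monotone (coarsening). A secondary point to get right is the consistency of $S^\circ$, which is exactly what excludes $u_1 y \sim u_2 y$ and thereby certifies the rank conflict; this is where the assumption that $S$ comes from a genuine \ac{pdfa} is used.
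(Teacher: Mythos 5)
Your proof is correct, and it starts from the same observation as the paper's — the hypothesis forces $u_1y$ and $u_2y$ into distinct connected components of $G_\sim$, hence distinct \emph{defined} ranks $\lambda_S(u_1y)\neq\lambda_S(u_2y)$ (you justify this via consistency of $S^\circ$ with the source \ac{pdfa}, which the paper merely asserts) — but the way you propagate the conflict is genuinely different. The paper argues forward through the recursion: merging $B_1$ with $B_2$ creates nondeterminism on $a_1$, forcing the blocks of $u_1a_1$ and $u_2a_1$ to merge, and so on along $y=a_1\cdots a_k$, until determinization demands joining two blocks with defined, distinct ranks, which Case~2 rejects. That traces a specific cascade of forced joins through intermediate automata and implicitly assumes the recursion performs exactly that cascade, something the order-unspecified recursive definition of $D(\cdot)$ does not pin down. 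You instead reason only about a hypothetical \emph{terminal} partition $\pi^\ast$: if $D(\pdfa_0/J(\pi,B_1,B_2))\neq\undefined$, then $\pi^\ast$ is deterministic, ranking-consistent (each Case-1 join preserves ranking-consistency), and coarsens $J(\pi,B_1,B_2)$; your reading fact (in a deterministic quotient of the prefix tree, the run on any $w\in\Pref(W_S)$ ends in the block containing $w$) then forces $u_1y$ and $u_2y$ into a single block of $\pi^\ast$, contradicting ranking-consistency. Your route buys immunity to the join-order ambiguity that the paper's step-by-step argument glosses over — the two invariants you invoke (coarsening and preservation of ranking-consistency) hold for any sequence of Case-1 joins — at the modest cost of proving the reading-fact induction; the paper's route buys a concrete picture of exactly which blocks collide (as in Figure~\ref{fig:inconsistent_merging}), but as written is less airtight about why the described chain of merges is the one the recursion actually performs.
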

\begin{proof}
    Let $B_3$ and $B_4$ be the blocks of $\pi$ that contain $u_1y$ and $u_2y$, respectively, and let $y=a_1 a_2 \cdots a_k$.
    Given that $S^\circ$ contains at least one of the tuples $(u_1y,  u_2y, 1)$, $(u_2 y, u_1y, 1)$, and $(u_1y, u_2y, \perp)$, 
    states $u_1y$ and $u_2y$ cannot belong to the same SCC of the graph $G_{\sim}$ and  in $\pdfa_0$, and thus $\lambda_{S}(u_1y) \neq \lambda_{S}(u_2y)$. 
    This  implies that $B_3$ and $B_4$ are assigned different ranks, and hence, they cannot be merged given Case 2 of the determinization operation.

    By way of contradiction, let's assume that $B_1$ is merged with $B_2$. 
    Result of this merging is the automaton $\pdfa_0 / J(\pi, B_1, B_2)$, which is not determinized yet.
    Figure~\ref{fig:inconsistent_merging} depicts this automaton. 
    As both $B_{11}$ and $B_{21}$ are $a_1$-successors of state $B_1 \cup B_2$, the determinization of $\pdfa_0 / J(\pi, B_1, B_2)$, requires $B_{11}$ to be merged with $B_{21}$, which forms a new automaton 
    $\pdfa_0 / J(J(\pi, B_1, B_2), B_{11}, B_{21})$.
    Repeating this argument, for all $1 \leq j \leq k$,  $B_{1j}$ has to be merged with $B_{2j}$ in the determinization    $D(\pdfa_0 / J(\pi, B_1, B_2))$.
    %

    %
    Because in this automaton, $B_{3} = B_{1k}$ and $B_{4} = B_{2k}$ are  $a_k$-successors of $B_{1(k-1)} $ and $ B_{2(k-1)}$, respectively, the determinzation requires them to be merged. However, because they have
    inconsistent ranks, they cannot be merged, and hence, $D(\pdfa_0 / J(\pi, B_1, B_2)) = \undefined$, meaning that our algorithm rejects merging $B_1$ with $B_2$, which is a contradiction.
   On the other hand, if there exists  $j < k$, $B_{1j}$ and $B_{2j}$ have inconsistent ranks, then  $B_{1j}$ cannot be merged with $B_{2j}$ either, which is again a contradiction.

Hence, $B_1 $ and $B_2$ cannot be merged.
\end{proof}
 \begin{figure}[t]
		\centering
		\begin{subfigure}[b]{0.27\linewidth}
			\centering
			\includegraphics[scale=0.44]{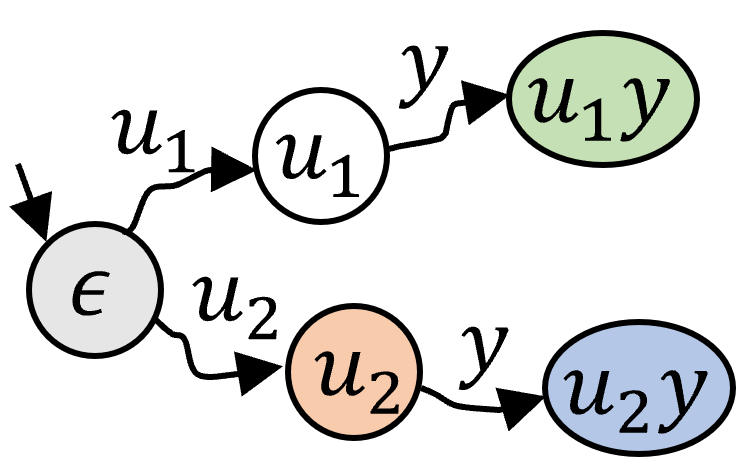}
   \caption{$\pnfa_0$.}
   \label{fig:A0}
		\end{subfigure}
		\hfill
  \centering
		\begin{subfigure}[b]{0.63\linewidth}
			\centering
			\includegraphics[scale=0.44]{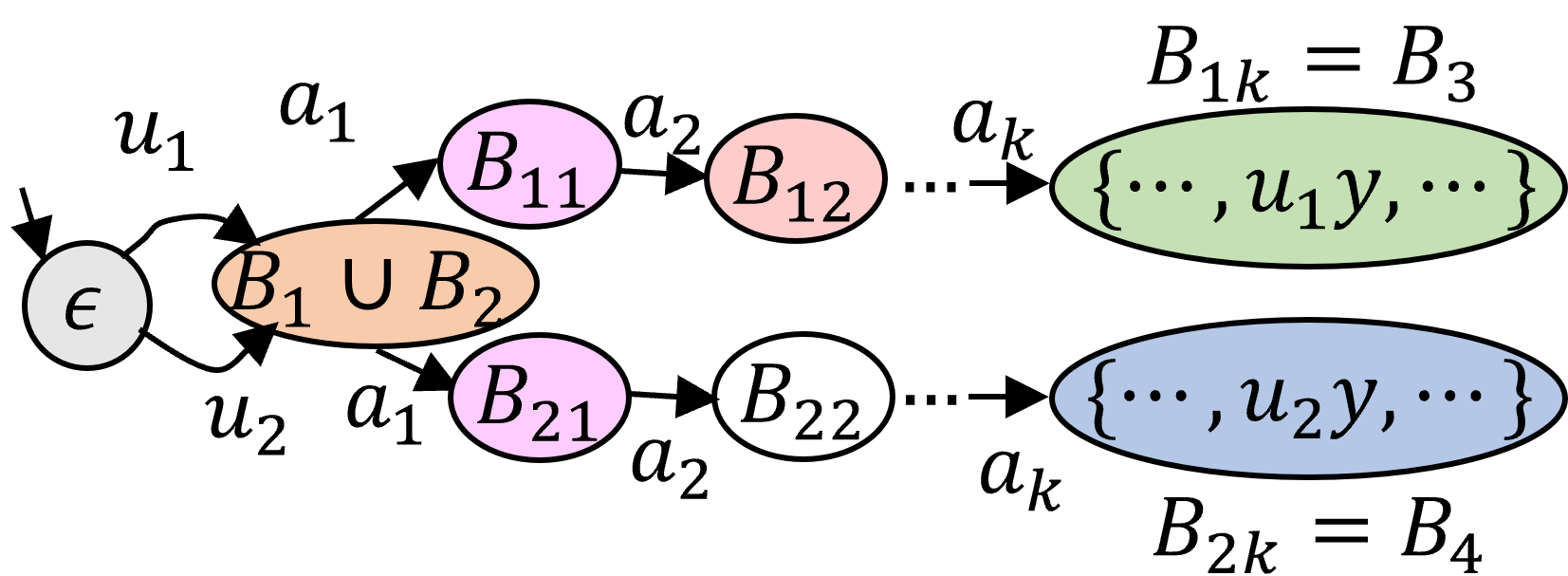}
   \caption{$\pnfa_0 / J(\pi, B_1, B_2)$}
     \label{fig:u1_u0}
		\end{subfigure}
		\hfill

        \caption{It illustrates why if two blocks that contains two words with inconsistent extensions cannot be merged.}
  \label{fig:inconsistent_merging}
	\end{figure}
The following lemma shows that there is a bijection between the state space of the learned \ac{pnfa} and the state space of the canonical \ac{pdfa}. Recall that $\eta(U)=\min_{<_L}\{w \in U\}$.

In the following results, 
  $S$ refers to  a characteristic sample drawn from a canonical \ac{pdfa}  $\pdfa= \langle Q, \Sigma, \delta, q_0, O, \succeq, \lambda \rangle$, and
 $\pi_r$ is the partition obtained by Algorithm~\ref{alg:euclid}. The   \ac{pdfa} obtained at iteration $i$ is 
 \[
 \calA_i = \langle \pi_i, \Sigma, \delta_i, O_S, \succeq_S, \lambda_i \rangle.
 \]
 And
 the learned \ac{pdfa} is 
 \[
 \calA_r= \langle \pi_r, \Sigma, \delta_r, O_S, \succeq_S, \lambda_r \rangle
 \]
 where the set of ranks $O_S$ and partial order $\succeq_S$ over ranks are the same as in the prefix tree automaton $\calA_0$. 

\begin{lemma}
\label{lem:state_mapping}
 The following properties hold:
\begin{enumerate}
    \item \label{itm:sp_in_blocks} For any $w \in \SPref(\pdfa)$, there exists $B \in \pi_r$ such that $w \in B$.    
    \item \label{itm:sp_to_block} For each $q \in Q$, there is a block $B \in \pi_r$ such that $\SPref(q) = \eta(B)$.
    \item \label{itm:block_to_sp} For each $B \in \pi_r$, there is a state $q\in Q$  such that $q = \delta (q_0, \eta(B))$ and   
    $\eta(B) = \SPref(q)$. 
\end{enumerate}
\end{lemma}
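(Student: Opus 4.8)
The plan is to prove the three properties together: \ref{itm:sp_in_blocks} as a short preliminary step, and \ref{itm:sp_to_block}--\ref{itm:block_to_sp} as the two directions of a bijection between $\pi_r$ and $Q$, both extracted from a single invariant maintained across the lexicographic sweep of Algorithm~\ref{alg:euclid}.

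First I would record the combinatorial fact that $\SPref(\pdfa)$ is prefix-closed: if $\SPref(q)=va$ then $v=\SPref(\delta(q_0,v))$, since a lexicographically smaller word reaching $\delta(q_0,v)$ would yield a smaller word reaching $q$. Hence every nonempty shortest prefix is a one-letter extension of a shortest prefix, so $\SPref(\pdfa)\subseteq\NU(\pdfa)$, and characteristic condition~\ref{itm:char_nu_subset} gives $\SPref(\pdfa)\subseteq\Pref(W_S)$. As $\pi_r$ partitions $\Pref(W_S)$, this yields \ref{itm:sp_in_blocks} immediately.

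For the bijection I would carry, by induction on the loop index $i$, the invariant: after iteration $i$, (a) every block of $\pi_i$ is \emph{pure}, i.e.\ all its words reach a common state of $\pdfa$; and (b) every prefix $w$ with $w\leq_L u_i$ lies in the block named $\SPref(\delta(q_0,w))$. Purity is preserved because each merge joins only same-state blocks, and this is where the two halves of the argument enter. \emph{Soundness}: the algorithm never joins blocks reaching distinct states. When processing $u_i$, its block contains a nucleus word $u$ with $\delta(q_0,u)=\delta(q_0,u_i)$ (namely $u_i$ itself when $u_i\in\NU(\pdfa)$, and otherwise the word $pa$ forced in by an earlier cascade, where $p=\SPref(\delta(q_0,v))$ for $u_i=va$); each smaller candidate block is named by a shortest prefix $\SPref(q')$ by the invariant, so whenever $q'\neq\delta(q_0,u_i)$, characteristic condition~\ref{itm:char_sp_and_nu} supplies a distinguishing suffix for the pair $(\SPref(q'),u)$ and Lemma~\ref{lem:not_merged} forces $D(\pdfa_0/J(\cdot))=\undefined$, rejecting the merge. \emph{Completeness}: the join with the block of $\SPref(\delta(q_0,u_i))$ is accepted, because two words reaching the same state are indifferent, hence $(\cdot,\cdot,0)$-related in $S^\circ$ by condition~\ref{itm:char_pref}(a) and therefore in one connected component of $G_\sim$, so they carry equal $\lambda_S$; every merge forced by determinization is again between same-state (hence equally ranked) blocks, so Case~2 of determinization never triggers and the cascade terminates with a defined result.

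The main obstacle I anticipate is the interaction of the determinization cascade with a lexicographic subtlety: $p<_L v$ does \emph{not} imply $pa<_L va$, so when a non-shortest prefix $v$ is absorbed into the block named $p$, the forced merge of $va$ with $pa$ may temporarily leave that block named $va$ rather than by the true shortest prefix of $\delta(q_0,va)$. The delicate step is to check that this only affects the name-word $u_i=va$ itself (the co-merged word $pa$ satisfies $pa>_L u_i$, so it is not yet constrained by clause (b)), and that exactly at iteration $i$ the block named $va$ is joined with the separate block named $\SPref(\delta(q_0,va))$---which survives as its own block by soundness, being the lexicographically least word reaching its state---repairing the name to $\SPref(\delta(q_0,va))$ and restoring clause (b). Granting the invariant at the final iteration, \ref{itm:sp_to_block} follows by applying clause (b) to $w=\SPref(q)$ for each $q\in Q$, and \ref{itm:block_to_sp} follows because, by (b), the name $\eta(B)$ of every block equals $\SPref(\delta(q_0,\eta(B)))$, which is precisely the claimed correspondence $q=\delta(q_0,\eta(B))$ with $\eta(B)=\SPref(q)$.
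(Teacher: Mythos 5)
Your proposal is correct, and it reaches the lemma by a genuinely different architecture than the paper. The paper proves items~\ref{itm:sp_to_block} and~\ref{itm:block_to_sp} as two separate contradiction arguments: for~\ref{itm:sp_to_block} it assumes the block containing $\SPref(q)$ has a smaller name $u$, introduces $u'=\SPref(\delta(q_0,u))$, and derives a clash between ``the blocks of $u$ and $u'$ are indistinguishable, hence must merge'' and ``they cannot merge because of $\SPref(q)$'' (via Lemma~\ref{lem:not_merged} and condition~\ref{itm:char_sp_and_nu} of Definition~\ref{def:char_set}); for~\ref{itm:block_to_sp} it argues that otherwise the block of the true shortest prefix would have absorbed $\eta(B)$, contradicting minimality of the name --- and to run that argument it explicitly switches to a modified implementation in which determinization is deferred to the final step. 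You instead run one forward induction over the iterations of Algorithm~\ref{alg:euclid}, carrying purity of blocks plus correct naming of all processed prefixes, split into soundness (Lemma~\ref{lem:not_merged} with condition~\ref{itm:char_sp_and_nu}, applied to a nucleus representative inside the current block, rejects every cross-state join) and completeness (same-state words force equal $\lambda_S$, so the correct join and its entire determinization cascade succeed, Case~2 never firing). The ingredients are the same two pillars the paper uses, but your packaging buys three things: it yields~\ref{itm:sp_to_block} and~\ref{itm:block_to_sp} simultaneously from a single invariant; it works with the algorithm exactly as stated (immediate determinization), with no appeal to equivalence with a deferred-determinization variant; and it explicitly resolves the cascade-naming subtlety (a block may temporarily be named $va$ rather than $\SPref(\delta(q_0,va))$ until $va$'s own iteration repairs it), which the paper's proof passes over. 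One imprecision to fix in a final write-up: your completeness step ``two words reaching the same state are $0$-related in $S^\circ$'' invokes condition~\ref{itm:char_pref}, which quantifies only over $W_S$; words in $\Pref(W_S)\setminus W_S$ have $\lambda_S=\undefined$ and are harmless for the same conclusion, so the argument stands once that case split is stated.
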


\begin{proof} 
    (\ref{itm:sp_in_blocks}) Because $S$ is characteristic, by Condition~\ref{itm:char_nu_subset} of Definition~\ref{def:char_set}, $\NU(\pdfa) \subseteq \Pref(W_S)$. This implies $\SPref(\pdfa) \subseteq \Pref(W_S)$. Accordingly, for any shortest prefix $w\in \SPref(\pdfa)$, $w$ is a state of the prefix tree automaton $\PTree_S$, which means $\{w\}$ is a block of $\pi_0$. As a result, in each step $i$ of the algorithm, there is a block in $\pi_i$ that contains $w$. 

   (\ref{itm:sp_to_block}) 
   \begin{figure}
       \centering
\includegraphics[width=0.8\linewidth]{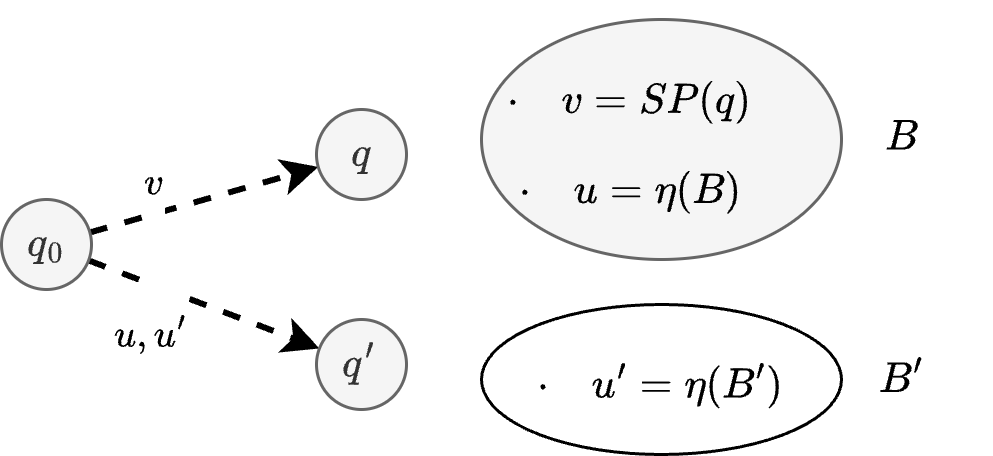}
       \caption{Illustration two blocks $B$ and $B'$ and the contained prefixes $u,v, u'$. }
       \label{fig:lem3-2}
   \end{figure}
  According to \eqref{itm:sp_in_blocks},  there exists a block $B\in \pi_r$ such that $\SPref(q) \in B$. By way of contradiction, assume that $\eta(B) \ne \SPref(q)$. That is, let $u= \eta(B)$ and $v= \SPref(q)$, it is the case that  $u\ne v$, and $u<_L v$ (by definition of $\eta$.)

Suppose $q' = \delta(q_0, u)$ in the canonical \ac{pdfa}  $\pdfa$. Clearly, $q'\ne q$ because if $q'=q$, then $v = \SPref(q)$ implies $v <_L u$, while the above analysis showed that $u<_L v$. 

Let's introduce another prefix $u' =\SPref(q')$. By \eqref{itm:sp_in_blocks}, there exists $B'$ such that $u'\in B'$. See the illustrated relation in Figure~\ref{fig:lem3-2}.
Next,   by the definition of characteristic sample, there exists $y\in \Sigma^\ast$ such that $u'y\in W_S$ and $vy \in W_S$, and 
    $$\{( u'y, vy, 1), (u'y, vy, 1), (u'y, vy, \perp)\} \cap S^\circ \neq \emptyset;$$

    Then, the block $B'$ containing $u'$ shall not be merged with the block $B$ containing $v$ by Lemma~\ref{lem:not_merged}.
    
    However, when attempting to merge $B$ and $B'$  in a deterministic  quotient automaton $\pdfa_i $, this merge operation will be rejected only if there exists $y \in \Sigma^\ast$ such that $\lambda_S (\delta_i(\epsilon, uy)) \ne \varnothing$ and $\lambda_S (\delta_i(\epsilon, u'y) ) \ne \varnothing$ and $\lambda_S (\delta_i(\epsilon, uy)) \ne  \lambda_S (\delta_i(\epsilon, u'y)) $, which is not possible because $\delta(q_0,u)=\delta(q_0,u')$ in the original \ac{pdfa}. For any $w\in \Sigma^\ast$, $b\in \{0, 1,\bot\}$, $(uy,w,b)$ if and only if $(u'y,w,b)$. Therefore, the two blocks $B'$ containing $u'$ and $B$ containing $u$  has to be merged --- a contradiction to the previous statement that $B$ and $B'$ cannot be merged since $u'=\SPref(q)$ and $v=\SPref(q)$.
    
    As a result, it is  only possible that  $u=\eta(B)=v =\SPref(q)$. 

\eqref{itm:block_to_sp} 
\textcolor{black}{To ease the proof, we assume the following implementation of Algorithm~\ref{alg:euclid}: After each merging, it will decide to whether reject the merge or not based on the determinization result. However, at iteration $i$, for any $B>_L B_i$, the singleton block will be kept in the intermediate non-deterministic automaton. The automaton is determinizied at the last iteration only.} \footnote{This is a similar procedure used in the proof of state-merging algorithm \cite{oncina1992inferring,oncina1992identifying}. The determinization is only carried out at the last step but whether the merged automaton can be determinized without raising any contradiction to the data is checked for every step. This merging-and-last-step-determinization procedure has been showin in the original method to be equivalent to the determinized merging procedure and is mainly used for the ease of proof.}

  Due to the completeness of transition function in \ac{pdfa} $\pdfa$, for each $B\in \pi_r$, there exists a state $q\in Q$ such that $q=\delta(q_0, \eta(B))$.

  For an arbitrary pair $B$ and $q=\delta(q_0, \eta(B))$,
let $v=\SPref(q)$ and $u = \eta(B)$. By way of contradiction, suppose $u\ne v$. Then, by the fact that $v=\SPref(q)$ and $\delta(q_0, u)=\delta(q_0,v)=q$, $v<_L u$.
  By the definition of a characteristic sample, both $u, v \in W_S$. A block containing $v$, say $B_v$, shall be considered to merge with $\{u\}$ given $v <_L u$, during one of the intermediate steps. This merge operation  is only rejected when there exists a word 
  %
$y \in \Sigma^\ast$ and a word $w\in \Sigma^\ast$ such that one of the following cases occurs:
\begin{itemize}
\item  $\{(uy, w, 1), (w, vy, b)\} \cap S^o\ne \emptyset$, for $b\in \{1,\bot, 0\}$,
\item $\{( w, uy, b), ( vy, w, 1)\} \cap S^o \ne \emptyset$, for $b\in \{0,\bot, 1\}$,
\end{itemize} 
This is because only if  one of these cases occurs, then $\lambda_S(\delta_i(B_\epsilon), uy) \ne \lambda_S(\delta_i(B_\epsilon), vy)  $ for any iteration $i =1,\ldots, n$, and $B_\epsilon$ is the block that contains the empty string $\epsilon$.

However, none of the   cases is possible because $\delta(q_0, u)=\delta(q_0, v)$, which implies for any suffix $y$ and $w\in \Sigma^\ast$, $(uy, w, b)$ if and only if $  (vy, w, b)$, for $b\in\{0, 1,\bot\}$. As a result, the merge cannot be rejected and $u$ is being merged with the same block that contains $v$. And $v <_L u$ implies $v = \eta(B)$. Thus, both $u=v = \eta(B)$---a contradiction to the assumption that $u\ne v$. Therefore, we conclude that $u= \eta(B)=v=\SPref(q)$ for $q=\delta(q_0, \eta(B))$, for any $B\in \pi_r$. 

   \end{proof}

Lemma~\ref{lem:state_mapping} provides the bijection function: $$g: \pi_r \rightarrow Q\text{ such that }\forall  B \in \pi_r, g(B) = \delta(q_0, \eta(B)).$$ 

So far, we have proven that if $S$ is a characteristic sample, then the proposed algorithm accurately learns the state space of the 
canonical \ac{pdfa} $\pdfa$.  Next, we     show a  bijection   can be established between the transitions in the learned \ac{pnfa} and the transitions  in the canonical \ac{pdfa}.
%
%

 \begin{figure}[t]
		\centering
		\begin{subfigure}[b]{0.49\linewidth}
			\centering
			\includegraphics[scale=0.4]{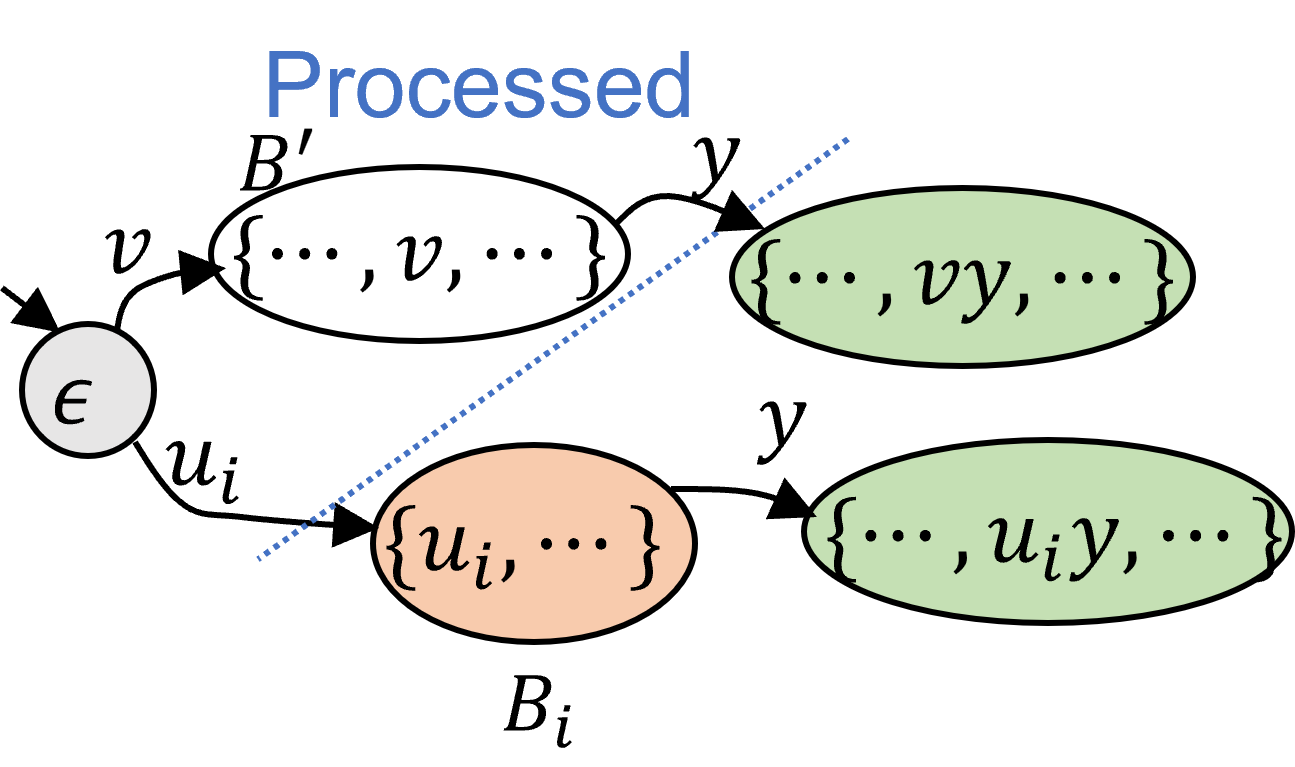}
   \caption{$\pnfa_0 / \pi_{i-1}$.}
   \label{fig:A0_pi_i_1}
		\end{subfigure}
		\hfill
  \centering
		\begin{subfigure}[b]{0.49\linewidth}
			\centering
			\includegraphics[scale=0.4]{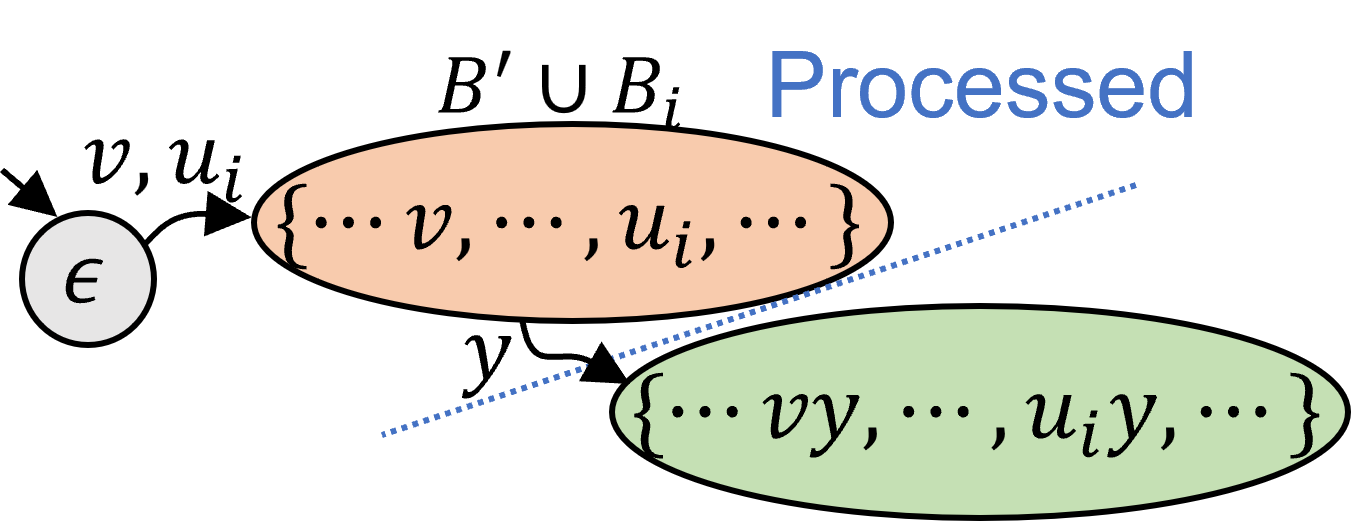}
   \caption{$\pnfa_0 / J(\pi, B', B_i)$}
     \label{fig:B1_B2_merged}
		\end{subfigure}
		\hfill

        \caption{
        \textbf{a)} It shows that at the $i$-th iteration, the algorithm processes $B_i$ to merge with a block $B' <_L B$. It holds that $u_i = \eta(B_i)$ and that $\delta(q_0, u_i) = \delta(q_0, v)$. The part that appears before the dashed line is the processed part of the automaton. For any block $B''$ in this part, it holds that $B'' <_L B_i$.
        \textbf{b)} Blocks $B_i$ and $B'$ are merged because $\delta(q_0, u_i) = \delta(q_0, v)$. Also, since $v <_L u_i$ and that $\{v, u_i\} \subseteq B_i \cup B'$, $u_i \neq \eta(B_i \cup B')$.}
  \label{fig:merging_with_sp}
	\end{figure}

We first show that the learned automaton is deterministic and its transition function is complete. 

\begin{lemma}
\label{lem:deterministic_trans_func}
%
For any $B \in \pi_r$ and $a \in \Sigma$, 
$|\delta_r(B, a)| = 1$.
\end{lemma}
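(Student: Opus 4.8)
The plan is to prove the equality by establishing the two inequalities $|\delta_r(B,a)| \le 1$ (determinism) and $|\delta_r(B,a)| \ge 1$ (completeness) separately, for every $B \in \pi_r$ and $a \in \Sigma$. The upper bound is essentially built into the algorithm: the learned automaton is $\calA_r = D(\calA_0/\pi_r)$, where $\pi_r = \pi_n$ is the partition returned after all deterministic joins have been performed. Since $\calA_r$ is a genuine \ac{pdfa} output rather than $\undefined$, the determinization $D(\cdot)$ must have terminated in its Case~3, i.e.\ the quotient $\calA_0/\pi_r$ has no nondeterministic transitions, so $|\delta_r(B,a)| \le 1$. (Under the last-step-determinization implementation described just before Lemma~\ref{lem:state_mapping}, the same conclusion holds because the single final determinization step returns a deterministic automaton.)

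The substantive part is the lower bound, and here I would lean on the bijection $g$ and on Lemma~\ref{lem:state_mapping}. Fix $B \in \pi_r$ and $a \in \Sigma$, and set $q = g(B) = \delta(q_0, \eta(B))$. By Lemma~\ref{lem:state_mapping}(\ref{itm:block_to_sp}) we have $\eta(B) = \SPref(q)$, and since $\eta(B) \in B$ this gives $\SPref(q) \in B$. Because $\pdfa$ is a \ac{pdfa} its transition function is complete, so $\delta(q,a)$ is defined and $\SPref(q)a$ is a one-letter extension of a shortest prefix, whence $\SPref(q)a \in \NU(\pdfa)$. Condition~\ref{itm:char_nu_subset} of Definition~\ref{def:char_set} (characteristicity) yields $\NU(\pdfa) \subseteq \Pref(W_S)$, so $\SPref(q)a \in \Pref(W_S)$, and by prefix-closedness $\SPref(q) \in \Pref(W_S)$ as well. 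Thus both $\SPref(q)$ and $\SPref(q)a$ are states of the prefix tree automaton $\calA_0 = \PTree_S$, and by the definition of $\delta_S$ we have $\delta_S(\SPref(q), a) = \SPref(q)a$. In the quotient $\calA_0/\pi_r$ the presence of the state $\SPref(q) \in B$ with a defined $a$-transition forces at least one $a$-successor of $B$; since joining and determinization only combine target blocks and never delete a transition, this $a$-transition survives into $\calA_r$, giving $|\delta_r(B,a)| \ge 1$. Combining the two bounds gives $|\delta_r(B,a)| = 1$.

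I expect the completeness direction to be the main obstacle: the nontrivial content is that every block–symbol pair has a witnessing transition already present in the prefix tree, and this depends crucially on the nucleus being contained in the sample prefixes (Condition~\ref{itm:char_nu_subset}); absent that condition the learned automaton could simply be missing transitions. A secondary point to argue carefully is that the quotient and determinization operations preserve an existing transition — they merge successor blocks but never remove an arrow — so that the transition exhibited in $\calA_0$ is not lost when passing to $\calA_r$.
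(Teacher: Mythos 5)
Your proposal is correct and follows essentially the same route as the paper's proof: the upper bound comes from the determinization built into the algorithm, and the lower bound uses Lemma~\ref{lem:state_mapping} to place $\eta(B)$ in $\SPref(\pdfa)$, hence $\eta(B)a \in \NU(\pdfa) \subseteq \Pref(W_S)$ by Condition~\ref{itm:char_nu_subset} of Definition~\ref{def:char_set}, so the prefix tree already contains the witnessing transition, which survives quotienting. Your added remark that joins and determinization never delete transitions only makes explicit what the paper leaves implicit in its appeal to the definition of a quotient.
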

\begin{proof}
 First, given that at each step, the algorithm uses determinization after merging, it holds that $|\delta_r(B, a)| \leq 1$. When $|\delta_r(B, a)|  =0$, $\delta_r(B, a) $ is undefined.

Thus, we only need to show that for any $B\in \pi_r$ and $a\in \Sigma$, $  \delta_r(B, a) $ is defined.
%
Firstly, state $B$ is reachable by word $\eta(B)$, and by Lemma~\ref{lem:state_mapping}, $\eta(B) \in \SPref(\pdfa)$.
Thus, $\eta(B)a \in \NU(\pdfa)$.
Because $S$ is characteristic, by Condition (\ref{itm:char_nu_subset}) of Definition~\ref{def:char_set}, $\NU(\pdfa) \subseteq \Pref(W_S)$.
Hence, $\eta(B)a$ is a state in $\PTree_S$.
This implies that for every step $1 \leq i \leq r$, there exist two blocks $B_1, B_2 \in \pi_i$ such that $\eta(B) \in B_1$, $\eta(B)a \in B_2$, and there is a transition with $a$ from block $B_1$ to  $B_2$ given the definition of a quotient.   
This completes the proof.
 %
%
 %
 %
 \end{proof}
   
  The following two lemmas can be proven:
 \begin{lemma}
 \label{lem:learned_pdfa_isomorphic}
   The transitions in \ac{pdfa}s $\pdfa$ and $\pdfa_r$ are isomorphic:  For  any $B, B' \in \pi_r$ and $a \in A$,   $\delta_r(B, a)=B'$ \emph{if and only if} $\delta(g(B), a)=g(B')$. 
 \end{lemma}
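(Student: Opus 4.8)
The plan is to reduce both directions of the biconditional to a single unconditional computation of $\delta_r(B,a)$ together with the injectivity of the bijection $g$ furnished by Lemma~\ref{lem:state_mapping}. Concretely, I would first pin down exactly which block of $\pi_r$ the successor $\delta_r(B,a)$ is, and then verify that applying $g$ to that block reproduces $\delta(g(B),a)$. Once I have an identity $\delta_r(B,a)=B^\ast$ with $g(B^\ast)=\delta(g(B),a)$, the forward implication is immediate, and the reverse follows because $g$ is a bijection: $\delta(g(B),a)=g(B')$ forces $g(B')=g(B^\ast)$, hence $B'=B^\ast=\delta_r(B,a)$.

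For the explicit computation, fix $B\in\pi_r$ and $a\in\Sigma$. By Lemma~\ref{lem:state_mapping}(\ref{itm:block_to_sp}), $\eta(B)=\SPref(g(B))\in\SPref(\pdfa)$, so $\eta(B)a\in\NU(\pdfa)$, and by the characteristic Condition~\ref{itm:char_nu_subset} of Definition~\ref{def:char_set}, $\eta(B)a\in\Pref(W_S)$; thus $\eta(B)a$ is a state of the prefix tree $\PTree_S=\calA_0$, with $\delta_S(\eta(B),a)=\eta(B)a$. Let $B^\ast\in\pi_r$ be the block containing $\eta(B)a$. Since $\eta(B)\in B$ and $\eta(B)a\in B^\ast$, the quotient transition relation contains $(B,a,B^\ast)$, and by Lemma~\ref{lem:deterministic_trans_func} this is the unique $a$-successor of $B$, so $\delta_r(B,a)=B^\ast$.

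The crux---and the step I expect to be the main obstacle---is showing $g(B^\ast)=\delta(g(B),a)$, i.e.\ that the representative $\eta(B^\ast)$ and the nucleus word $\eta(B)a$, both lying in $B^\ast$, reach the same state of $\pdfa$. I would argue by contradiction: if $\delta(q_0,\eta(B^\ast))\neq\delta(q_0,\eta(B)a)$, then, taking $w=\eta(B^\ast)\in\SPref(\pdfa)$ and $u=\eta(B)a\in\NU(\pdfa)$, the characteristic Condition~\ref{itm:char_sp_and_nu} supplies a suffix $y$ with $\eta(B^\ast)y,\eta(B)ay\in W_S$ and a distinguishing tuple of $S^\circ$ among $(\eta(B^\ast)y,\eta(B)ay,1)$, $(\eta(B)ay,\eta(B^\ast)y,1)$, $(\eta(B^\ast)y,\eta(B)ay,\perp)$. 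By Lemma~\ref{lem:not_merged} the blocks containing $\eta(B^\ast)$ and $\eta(B)a$ can never be combined by Algorithm~\ref{alg:euclid}, contradicting that both words sit in $B^\ast$. Hence $\delta(q_0,\eta(B^\ast))=\delta(q_0,\eta(B)a)$, so $g(B^\ast)=\delta(q_0,\eta(B^\ast))=\delta(q_0,\eta(B)a)=\delta(\delta(q_0,\eta(B)),a)=\delta(g(B),a)$. Combining this with $\delta_r(B,a)=B^\ast$ and the injectivity of $g$ yields both implications. The delicate point throughout is that the distinguishing argument of Lemma~\ref{lem:not_merged} is only licensed for a shortest-prefix/nucleus pair, which is precisely why it is essential to identify $\delta_r(B,a)$ with the block of the nucleus word $\eta(B)a$ and then compare against that block's shortest-prefix representative $\eta(B^\ast)$.
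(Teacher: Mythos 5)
Your proposal is correct and takes essentially the same route as the paper's proof: both identify $\delta_r(B,a)$ as the block containing the nucleus word $\eta(B)a$ and then rule out $\delta(q_0,\eta(B'))\neq\delta(q_0,\eta(B)a)$ for that block's shortest-prefix representative by invoking Condition~\ref{itm:char_sp_and_nu} of Definition~\ref{def:char_set} together with Lemma~\ref{lem:not_merged}, exactly as in the paper's second case. The only differences are presentational: the paper splits into the cases $\eta(B)a\in\SPref(\pdfa)$ (direct computation) and $\eta(B)a\notin\SPref(\pdfa)$ (the contradiction), whereas you run the contradiction argument uniformly (it is vacuous in the first case), and you make the reverse implication explicit via injectivity of $g$ and Lemma~\ref{lem:deterministic_trans_func}, which the paper leaves implicit.
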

 \begin{proof}
   As illustrated in Figure~\ref{fig:algorithm_automata_isomorphic},
  there are two cases to be considered: (1) $\eta(B)a \in \SPref(\pdfa)$, and (2) $\eta(B)a \notin \SPref(\pdfa)$.
    Case (1):  Since $\eta(B)a \in \SPref(\pdfa)$, by Lemma~\ref{lem:state_mapping}, $\eta(B') = \eta(B)a$. Hence,
     \begin{multline*} \delta(g(B), a)=\delta(\delta(q_0, \eta(B)), a)=\\
    \delta(q_0, \eta(B)a)=\delta(q_0, \eta(B'))=g(B').\end{multline*}
    %
  
  %
  %
  Case  (2): 
   Because $\eta(B)a \not\in \SPref(\pdfa)$, by Lemma~\ref{lem:state_mapping}, $\eta(B') \neq \eta(B)a$, which implies that block $B'$ contains two different words $\eta(B')$ and $\eta(B)a$. 
  
  By way of contradiction, assume $\delta(g(B), a)=q \neq g(B') $ given $g (B')=\delta(q_0, \eta(B'))$.
  %
  See Figure~\ref{fig:pdfa_isomorphic_contradiction} for illustration.
  Since $\eta(B) \in \SPref(\pdfa)$ (Lemma~\ref{lem:state_mapping}),   $\eta(B)a \in \NU(\pdfa)$. 
  Given that $S$ is characteristic, $\eta(B') \in \SPref(\pdfa)$. If $\delta(q_0, \eta(B')) \ne \delta(q_0, \eta(B)a)$, then  there exists a suffix $y$ such that either $(\eta(B')\cdot y, \eta(B)a\cdot y, 1) \in S^\circ$ or
  $(\eta(B)a \cdot y, \eta(B') \cdot y, 1) \in S^\circ$ or $(\eta(B') \cdot y, \eta(B)a \cdot y, \perp) \in S^\circ$. Thus, according to Lemma~\ref{lem:not_merged},
  $\eta(B')$ and $\eta(B)a$ cannot  belong to the same block of $\pi_r$,  which contradicts that  $\{\eta(B'), \eta(B)a\} \subseteq B' $ and $\eta(B') \ne \eta(B)a$. Hence, $\eta(B') = \eta(B) a$ even if $\eta(B)a \notin \SPref(\mathcal{A})$. As a result, $\delta(g(B),a) = \delta(\delta(q_0, \eta(B)),a) = \delta(q_0, \eta(B'))= g(B')$ also holds when $\eta(B)a\notin \SPref(\mathcal{A}).$
  \end{proof}
    \begin{lemma}
    \label{lma:reachable}
        Given a word $w \in \Pref(W_S)$, let $B_w \in \pi_r$ be the block that contains $w$, i.e., $w \in B_w$. It holds that $\delta_r(B_0, w) = B_w$, where $B_0$ is the block that contains $\epsilon$.
    \end{lemma}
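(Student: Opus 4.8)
The plan is to prove the claim by induction on the length of $w$, reading $w$ letter by letter through the learned automaton $\calA_r = \langle \pi_r, \Sigma, \delta_r, O_S, \succeq_S, \lambda_r \rangle$ and tracking the block to which each prefix belongs. For $w \in \Pref(W_S)$, every prefix of $w$ again lies in $\Pref(W_S)$ and is therefore a state of the prefix tree automaton $\pnfa_0$; I will exploit this together with the fact that the tree transition $\delta_S$ advances exactly along these prefixes, i.e. $\delta_S(v,a) = va$ whenever $va \in \Pref(W_S)$ (Definition~\ref{def:pt}).

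For the base case $w = \epsilon$, we have $B_w = B_0$ by the definition of $B_0$, and $\delta_r(B_0,\epsilon) = B_0$ by the convention for the extended transition function, so the claim holds. For the inductive step, write $w = va$ with $v \in \Pref(W_S)$, $a \in \Sigma$, and $va \in \Pref(W_S)$; let $B_v$ and $B_{va}$ denote the blocks of $\pi_r$ containing $v$ and $va$ respectively. By the induction hypothesis, $\delta_r(B_0, v) = B_v$. Since $\delta_S(v,a) = va$ in $\pnfa_0$, and $v \in B_v$, $va \in B_{va}$, the definition of the quotient transition relation yields $(B_v, a, B_{va}) \in \delta'$ in $\calA_0 / \pi_r$. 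It then remains to argue that $\delta_r(B_v, a) = B_{va}$, after which $\delta_r(B_0, w) = \delta_r(\delta_r(B_0,v), a) = \delta_r(B_v, a) = B_{va} = B_w$ follows immediately.

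The key step is pinning down this successor. Because the algorithm applies determinization after every merge, at most one $a$-successor survives, so $|\delta_r(B_v, a)| \le 1$ (as already observed in the proof of Lemma~\ref{lem:deterministic_trans_func}); moreover $\pi_r$ is produced through the deterministic-join operator $DJ$, so $\calA_0/\pi_r$ already equals its own determinization and $\delta_r$ coincides with the quotient transition relation $\delta'$ wherever the latter is defined. Hence the unique $a$-successor of $B_v$ is precisely the block collecting all $a$-successors of states in $B_v$, which contains $va$ and therefore equals $B_{va}$. The main obstacle is exactly this uniqueness bookkeeping: one must be sure that determinization does not route $B_v$ under $a$ into some other merged block, but this is ruled out by determinism together with the presence of the edge $(B_v, a, B_{va})$ guaranteed by the tree transition. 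This completes the induction and establishes $\delta_r(B_0, w) = B_w$.
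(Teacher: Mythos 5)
Your proof is correct and follows exactly the route the paper indicates: the paper omits the details, stating only that the lemma ``follows from the prefix tree construction and the deterministic join operation,'' and your induction fills in precisely those two ingredients --- the tree edge $\delta_S(v,a)=va$ guarantees $(B_v,a,B_{va})\in\delta_r$ in the quotient, and determinism of $\calA_0/\pi_r$ (ensured by the $DJ$ operation) makes $B_{va}$ the unique $a$-successor of $B_v$.
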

    \begin{proof}
    The property follows from the prefix tree construction and the deterministic join operation, and thus is omitted.

        
    \end{proof}

  \begin{corollary}
  \label{col:reachable}
For any $B \in \pi_r$, for any $w\in B$, $\delta (q_0,w) = g(B)$.
  \end{corollary}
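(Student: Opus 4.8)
The plan is to derive this corollary directly from Lemma~\ref{lma:reachable} and Lemma~\ref{lem:learned_pdfa_isomorphic}, after first lifting the transition isomorphism from single letters to arbitrary words. The only new ingredient needed is bookkeeping at the initial state, together with a short induction.

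First I would pin down the images of the initial configurations. Since every block of $\pi_r$ is a subset of $\Pref(W_S)$, any $w \in B$ lies in $\Pref(W_S)$, so Lemma~\ref{lma:reachable} applies to it. Because $\epsilon \in B_0$ and $\epsilon$ is the $<_L$-minimal word of $\Sigma^*$, we have $\eta(B_0) = \epsilon$, and hence $g(B_0) = \delta(q_0, \eta(B_0)) = \delta(q_0, \epsilon) = q_0$. This identifies the initial block of the learned automaton with the initial state of the canonical \ac{pdfa}.

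The main step is to extend Lemma~\ref{lem:learned_pdfa_isomorphic} from single letters to words by induction on word length, using that $\delta_r$ is total and deterministic (Lemma~\ref{lem:deterministic_trans_func}) so that the extended transition function $\delta_r$ is well defined. The statement to prove is that for every $B \in \pi_r$ and every $w \in \Sigma^*$, $\delta_r(B, w) = B'$ if and only if $\delta(g(B), w) = g(B')$. The base case $w = \epsilon$ is immediate, since both extended transition functions fix their argument and $g$ is a bijection. For the inductive step with $w = a w'$ and $a \in \Sigma$, I would write $\delta_r(B, a w') = \delta_r(\delta_r(B, a), w')$, apply the single-letter isomorphism of Lemma~\ref{lem:learned_pdfa_isomorphic} to the first transition to obtain a block $B_1$ with $\delta(g(B), a) = g(B_1)$, then invoke the induction hypothesis on the suffix $w'$ starting from $B_1$, and finally match the two sides using $\delta(g(B), a w') = \delta(\delta(g(B), a), w') = \delta(g(B_1), w')$.

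To conclude, I would fix any $B \in \pi_r$ and any $w \in B$. Since $w \in B$, the block $B_w$ of Lemma~\ref{lma:reachable} is exactly $B$, so $\delta_r(B_0, w) = B$. Applying the word-level isomorphism with starting block $B_0$ then gives $\delta(g(B_0), w) = g(B)$, i.e.\ $\delta(q_0, w) = g(B)$ after substituting $g(B_0) = q_0$. As this holds for every $w \in B$, the corollary follows. I expect no serious obstacle: the only care needed is confirming $\eta(B_0) = \epsilon$ (relying on $\epsilon$ being $<_L$-minimal) and that the inductive lift of the isomorphism is legitimate, which rests entirely on the totality and determinism guaranteed by Lemma~\ref{lem:deterministic_trans_func}; both are routine.
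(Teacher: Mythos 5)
Your proposal is correct and follows essentially the same route as the paper: the paper's (one-sentence) proof likewise combines Lemma~\ref{lma:reachable}, the identity $g(B_0)=q_0$, and the transition isomorphism of Lemma~\ref{lem:learned_pdfa_isomorphic} to get $\delta(q_0,w)=g(\delta_r(B_0,w))=g(B)$. The only difference is that you explicitly verify $\eta(B_0)=\epsilon$ and carry out the induction lifting the single-letter isomorphism to words, details the paper leaves implicit.
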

  \begin{proof}
This property follows from Lemma~\ref{lma:reachable} and the fact that $g(B_0)= q_0$ and $ \delta (q_0, w) = g(\delta_r(B_0,w)) = g(B)$ due to the isomorphic transitions between $\pdfa$ and $\pnfa_r$.
   \end{proof}

   After showing the isomorphic relations between the state sets and transitions between the true \ac{pdfa} $\calA$ and learned \ac{pdfa} $\calA_r$,  we only need to show that $\langle \pi_r, \succeq_S \rangle$ and $\langle O, \succeq \rangle$ are isomorphic. This is achieved by 
  constructing a mapping $h: \pi_r \rightarrow O$ and showing that this mapping induces the isomorphism.
   The construction uses the properties of $S$ being characteristic to show that for each rank $o \in O$, the sample contains a word that represents that rank, and also, for each pair of distinct ranks $o_1$ and $o_2$ such that $o_1 \succeq o_2$, the sample compares two words from which the preference between $o_1$ and $o_2$ imposed by $\succeq$, is induced. 



 \begin{lemma}
\label{lma:isomorphic-output} 
Under the assumption that $S$ is a characteristic sample,
the following property holds: $\lambda_r(B) \succ_S \lambda_r(B')$ if and only if 
$  \lambda (g(B))) \succ  \lambda (g(B'))$.
 
\end{lemma}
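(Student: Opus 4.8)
The plan is to reduce the statement to an order-isomorphism between the two rank sets. Concretely, I would build a bijection $h : O_S \to O$ that is compatible with the state bijection $g$, in the sense that $h(\lambda_r(B)) = \lambda(g(B))$ for every $B \in \pi_r$, and then show that $h$ restricts to an isomorphism between the strict orders $\succ_S$ and $\succ$. Given both facts, the claim follows at once by substituting $i = \lambda_r(B)$ and $j = \lambda_r(B')$. First I would record that every block carries a sample word: for $B \in \pi_r$, Condition~\ref{itm:char_pref_graph} of Definition~\ref{def:char_set} supplies $w \in W_S$ with $\delta(q_0, w) = g(B)$, and since $w \in \Pref(W_S)$ it lies in some block $B_w$; by Corollary~\ref{col:reachable}, $g(B_w) = \delta(q_0, w) = g(B)$, so injectivity of $g$ forces $B_w = B$ and hence $B \cap W_S \neq \emptyset$. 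By ranking-consistency all sample words in $B$ share one SCC index, so $\lambda_r(B)$ equals the SCC index of any $w \in B \cap W_S$.

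Next I would define $h(i) = \lambda(\delta(q_0, w_i))$ for an arbitrary representative $w_i \in C_i$. This is well defined because words in one SCC are indifference-connected, hence indifferent in $\pdfa$ by consistency, hence carry a common $\lambda$-value. The compatibility $h(\lambda_r(B)) = \lambda(g(B))$ is immediate from the previous paragraph, taking $w \in B \cap W_S$ and using $\delta(q_0, w) = g(B)$. For injectivity I would use Condition~\ref{itm:char_pref}(a): $h(i) = h(j)$ makes the representatives indifferent in $\pdfa$, so $(w_i, w_j, 0) \in S^\circ$ places them in one SCC and $i = j$. Surjectivity follows again from Condition~\ref{itm:char_pref_graph}, since each $o = \lambda(q) \in O$ is witnessed by some $w \in W_S$ reaching $q$. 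Thus $h$ is a bijection and the lemma reduces to showing $i \succ_S j$ if and only if $h(i) \succ h(j)$.

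For the forward direction I would unfold $i \succ_S j$ into a chain $i = i_0, \dots, i_m = j$ of $R_S$-steps, using that $\succeq_S$ is the reflexive and transitive closure of $R_S$. Each step yields a sample tuple $(v, v', 1)$ with $v \in C_{i_k}$, $v' \in C_{i_{k+1}}$, so $v \succ v'$ by consistency and therefore $h(i_k) \succ h(i_{k+1})$; transitivity of $\succ$ gives $h(i) \succ h(j)$. A useful by-product is antisymmetry of $\succeq_S$: a cycle would contradict antisymmetry of $\succ$, so $i \succ_S j$ is precisely $i \succeq_S j$ together with $i \neq j$. For the reverse direction I would pick $w \in B \cap W_S$ and $w' \in B' \cap W_S$; then $\lambda(\delta(q_0, w)) = h(i) \succ h(j) = \lambda(\delta(q_0, w'))$ forces $w \succ w'$, and Condition~\ref{itm:char_pref}(b) delivers $(w, w', 1) \in S^\circ$.

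The hard part will be turning this membership into a $\succeq_S$-chain. Here I would argue by induction on the closure rules of Definition~\ref{def:closureS} that any strict tuple in $S^\circ$ is witnessed by a finite sequence $w = v_0, \dots, v_p = w'$ whose consecutive pairs come from tuples of $S$ carrying bit $0$ or $1$, with at least one bit-$1$ step. The key observation is that no closure rule ever produces a strict tuple in the reverse orientation or from a bit-$\perp$ tuple: symmetry (rule~\ref{cond:eq_incomp_symmetric}) applies only to bits $0$ and $\perp$, while rules~\ref{cond:eq_succ_transitive}, \ref{cond:eq_succ_incomp}, and \ref{cond:succ_incomp_eq} compose indifference and strict steps monotonically. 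Mapping the sequence to SCC indices, every bit-$0$ step stays inside an SCC and every bit-$1$ step is an $R_S$-edge, so the indices are non-increasing with respect to $\succeq_S$ from $i$ to $j$, giving $i \succeq_S j$; since $h$ is injective and $h(i) \neq h(j)$ we have $i \neq j$, and antisymmetry upgrades this to $i \succ_S j$. Composing the two directions with $h(\lambda_r(B)) = \lambda(g(B))$ yields $\lambda_r(B) \succ_S \lambda_r(B')$ if and only if $\lambda(g(B)) \succ \lambda(g(B'))$, as required.
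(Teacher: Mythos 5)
Your proposal is correct, and at bottom it rests on the same pillars as the paper's proof: the block-to-state bijection $g$ together with Corollary~\ref{col:reachable}, consistency of the sample with the true \ac{pdfa}, and Condition~\ref{itm:char_pref} of Definition~\ref{def:char_set}. But your organization is genuinely different and substantially tighter. The paper argues the biconditional directly: for the forward direction it takes a single witnessing tuple $(w,w',1)\in S^\circ$ with $w\in B$, $w'\in B'$ and pushes it through $g$; for the reverse direction it invokes Condition~\ref{itm:char_pref}(b) and then asserts, somewhat informally, that the relation $\lambda_i(B(w,\pi_i)) \succ_S \lambda_i(B(w',\pi_i))$ persists across all intermediate partitions $\pi_i$ until convergence. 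You instead factor the claim through an explicit rank bijection $h:O_S\to O$ with $h(\lambda_r(B))=\lambda(g(B))$, prove injectivity and surjectivity from Conditions~\ref{itm:char_pref}(a) and~\ref{itm:char_pref_graph}, and then prove the order isomorphism on both sides of $h$. This buys two things the paper glosses over: first, your forward direction honestly unfolds $\succ_S$ as the transitive closure of $R_S$ into a chain of $R_S$-steps (the paper implicitly conflates ``$i \succ_S j$'' with ``there is a direct tuple in $S^\circ$''); second, your closure-rule induction showing that any bit-$1$ tuple of $S^\circ$ decomposes into bit-$0$ and bit-$1$ steps of $S$ in consistent orientation is exactly the missing bridge from $(w,w',1)\in S^\circ$ back to an $R_S$-chain, which the paper's appeal to intermediate partitions never supplies. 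As a by-product you also establish antisymmetry of $\succeq_S$, which the paper asserts (by calling $\succeq_S$ a partial order) but never proves. The cost is length; the paper's proof is shorter but leaves these gaps to the reader.
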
 
\begin{proof}
 We first show that  $\lambda_r(B) \succ_S \lambda_r(B')$ implies $ \lambda(g(B)) \succ \lambda(g(B'))$.   By construction, $\lambda_r(B) \succ_S \lambda_r(B')$ if there exist $w \in B$ and $w'\in B'$ such that $(w,w',1)\in S^o$ (see the definition of the relation in \eqref{eq:relation-rank}). 
Because $(w,w',1)\in S^o$, let $q= \delta(q_0, w) $ and $q' =\delta(q_0,w')$, it holds that $\lambda(q)\succ \lambda(q')$. 

Thus, to show $\lambda_r(B) \succ_S \lambda_r(B')$ implies $ \lambda(g(B)) \succ \lambda(g(B'))$. We only need to show that $g(B)= q$ and $g(B')= q'$. Recall the definition of function $g$ is such that $g(B) = \delta(q_0,\eta(B))$ where $\eta(B) = \min_{<L}\{ u \in B\}$. According to Lemma~\ref{lma:reachable} and Corollary~\ref{col:reachable}, for any $w\in B$,  $\delta(q_0, \eta(B)) = \delta(q_0, w)$. Thus, $q= \delta(q_0,w) =  \delta(q_0, \eta(B)) = g(B)$. The similar reasoning allows us to show $q' = \delta(q_0,w')=g(B')$.   

To show $\lambda(g(B)) \succ\lambda(g(B'))$ implies $\lambda_r(B) \succ_S \lambda_r(B')$, it is observed that if  $\lambda(g(B)) \succ\lambda(g(B'))$, then there must exists $(w,w',1)\in S^o$. Thus $\lambda_i (B(w,\pi_i))\succ_S  \lambda_i(B(w',\pi_i))$ is satisfied for all intermediate partitions $\pi_i$, $i=1,\ldots, n$ in Algorithm~\ref{alg:euclid}. As a result, $\pi_n = \pi_r$ is the partition obtained upon convergence. Thereby $B = B(w,\pi_r) $,  $ B(w',\pi_r)=B'$, and $\lambda_r (B(w,\pi_i))\succ_S  \lambda_r (B(w',\pi_i))$ is derived.
\end{proof}


\begin{figure}[t]
		\centering
		\includegraphics[width=\linewidth]{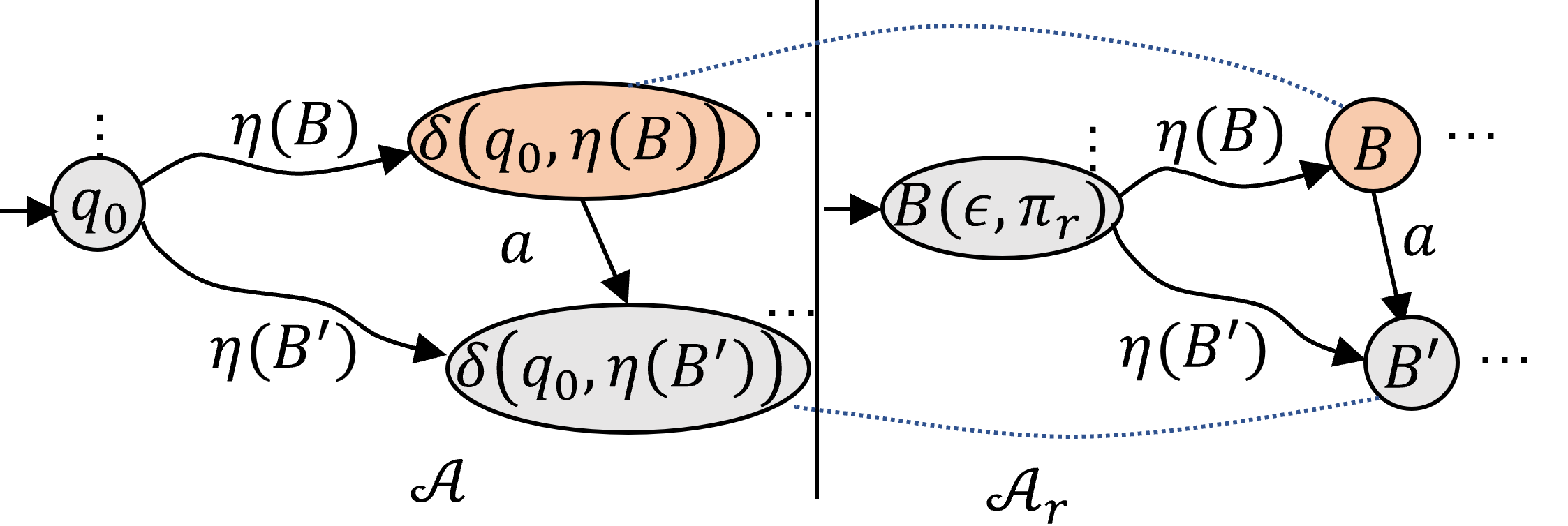}
		\caption{
		      The isomorphism between states of the learned \ac{pdfa} by our algorithm and the states of the canonical \ac{pdfa} the characteristic sample is drawn from. Words $B$ and $B'$ belong to the shortest prefixes of the canonical \ac{pdfa}. 
.		}
		\label{fig:algorithm_automata_isomorphic}
	\end{figure}

\begin{figure}[t]
		\centering
		\includegraphics[width=\linewidth]{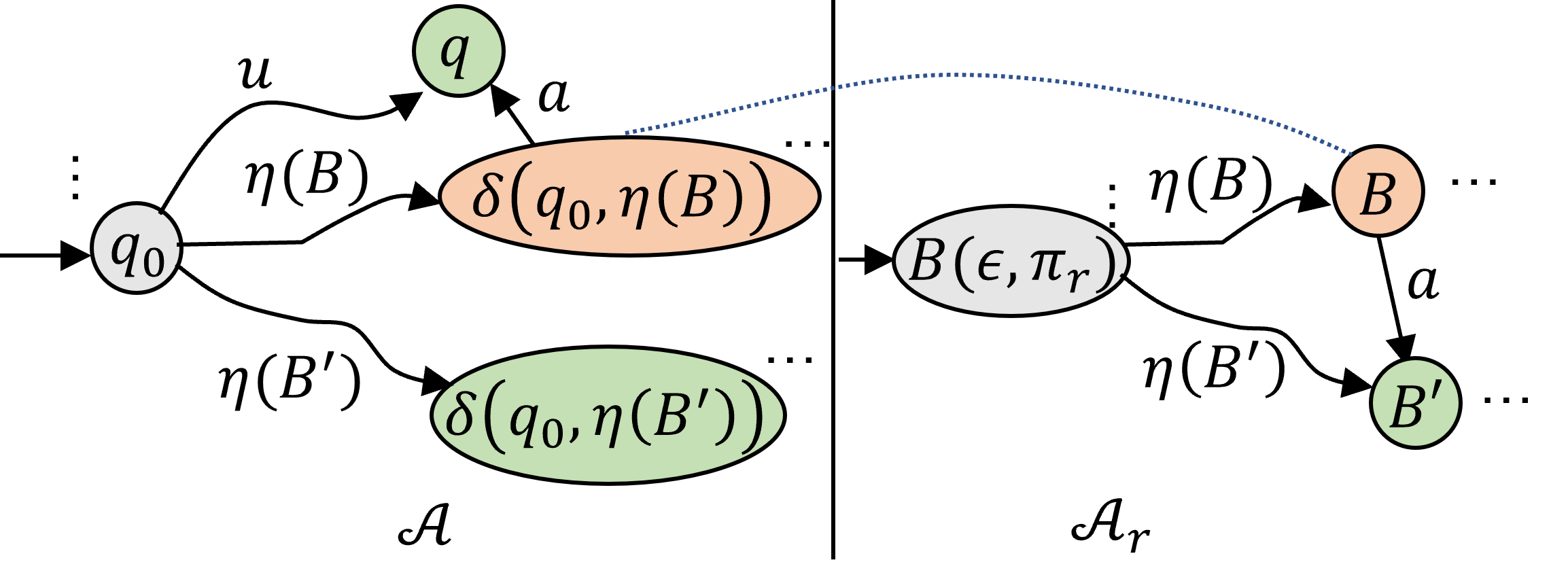}
		\caption{
		      The contradiction that $B'$ in $\pdfa_r$ is not mapped to $\delta(q_0, \eta(B'))$ in $\pdfa_0$ in the proof of Lemma~\ref{lem:learned_pdfa_isomorphic}.
.		}
		\label{fig:pdfa_isomorphic_contradiction}
	\end{figure}
 
\begin{theorem}
\label{thr:learn_pdfa}
If the sample $S$ in the \LPM is characteristic, then our algorithm computes the canonical presentation of the \ac{pdfa} from which $S$ is drawn in time 
$\bigO(m^3|\Sigma|^2+|S|^3)$, in which   $m =\sum_{w \in W_S} |w|$.
\end{theorem}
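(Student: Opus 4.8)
The plan is to split the statement into a correctness claim—that the returned $\calA_r$ is a canonical \ac{pdfa} equivalent to $\pdfa$—and a running-time claim, and to obtain the former almost entirely by assembling the lemmas already established. The heavy conceptual work (state bijection, transition isomorphism, output isomorphism) is done in Lemmas~\ref{lem:state_mapping}--\ref{lma:isomorphic-output}; what remains for the theorem is to package these into a single equivalence statement, verify that $\calA_r$ is a well-formed \ac{pdfa}, argue minimality, and then sum the costs of the algorithm's phases.

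For correctness I would first recall the bijection $g:\pi_r\to Q$, $g(B)=\delta(q_0,\eta(B))$, from Lemma~\ref{lem:state_mapping}, and note that Lemma~\ref{lem:deterministic_trans_func} makes $\delta_r$ deterministic and total, so $\calA_r$ has a genuine deterministic and complete transition function. I would also check that every block of $\pi_r$ carries a defined rank: since $g$ is onto $Q$ and, by Condition~\ref{itm:char_pref_graph} of Definition~\ref{def:char_set}, every state of $\pdfa$ is reached by some $w\in W_S$, Corollary~\ref{col:reachable} places such a $w$ in the block $B$ with $g(B)=\delta(q_0,w)$, whence $\lambda_r(B)\neq\varnothing$; hence $\calA_r$ is a \ac{pdfa}. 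Next I would lift the single-step transition isomorphism of Lemma~\ref{lem:learned_pdfa_isomorphic} to all words by induction: because $\eta(B_0)=\epsilon$ gives $g(B_0)=q_0$, it follows that $g(\delta_r(B_0,w))=\delta(q_0,w)$ for every $w\in\Sigma^\ast$. Writing $q_i=\delta(q_0,w_i)$ and $B_i=\delta_r(B_0,w_i)$ so that $g(B_i)=q_i$, the three conditions defining \ac{pdfa} equivalence then follow: the two strict-preference conditions are exactly Lemma~\ref{lma:isomorphic-output}, while the indifference condition $\lambda_r(B_1)=\lambda_r(B_2)\Longleftrightarrow\lambda(q_1)=\lambda(q_2)$ I would prove directly from the SCC construction. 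Equal ranks in $\calA_r$ mean $B_1$ and $B_2$ contain words lying in one component of $G_\sim$, hence indifferent words, whose $g$-images share a rank by consistency; conversely, $\lambda(q_1)=\lambda(q_2)$ together with Condition~\ref{itm:char_pref}(a) forces a chain of indifferences in $S^\circ$, i.e.\ membership in a single component, giving a single rank in $\calA_r$. This shows $\calA_r$ is equivalent to $\pdfa$; since $\pdfa$ is canonical and $g$ is a bijection, $\lvert\pi_r\rvert=\lvert Q\rvert$, so $\calA_r$ is canonical as well.

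For the running time, let $n=\lvert\Pref(W_S)\rvert$ be the number of states of $\PTree_S$; since each word of length $\ell$ contributes at most $\ell+1$ prefixes, $n=\bigO(m)$. I would bound the phases separately. Building $G_\sim$ and its components costs $\bigO(\lvert S\rvert)$; forming $R_S$ and closing it into the partial order $\succeq_S$ (equivalently the transitive closure underlying $S^\circ$) is the cubic bottleneck $\bigO(\lvert S\rvert^3)$ via Floyd--Warshall, as noted after Definition~\ref{def:closureS}; and constructing $\PTree_S$ is $\bigO(m)$. The merge loop of Algorithm~\ref{alg:euclid} performs $\bigO(n)$ outer iterations, each attempting to merge the current block with $\bigO(n)$ smaller blocks, for $\bigO(n^2)=\bigO(m^2)$ attempts; each attempt invokes a determinization that cascades through at most $\bigO(n)=\bigO(m)$ induced block-joins, and detecting and resolving the nondeterministic successors at each join scans the $\bigO(\lvert\Sigma\rvert)$ outgoing letters together with their successor sets at cost $\bigO(\lvert\Sigma\rvert^2)$, giving $\bigO(m\lvert\Sigma\rvert^2)$ per attempt. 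Multiplying gives $\bigO(m^3\lvert\Sigma\rvert^2)$ for the merging, and adding the closure cost yields the stated $\bigO(m^3\lvert\Sigma\rvert^2+\lvert S\rvert^3)$.

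The most delicate parts are not the isomorphism assembly—routine given the lemmas—but two bookkeeping points. First, I must supply the indifference (equal-rank) half of the output isomorphism, since it is not isolated as its own lemma and must be read off from the SCC construction and Condition~\ref{itm:char_pref}(a) rather than from Lemma~\ref{lma:isomorphic-output}. Second, I must pin down the per-determinization cost so that the worst-case cascade of induced joins, combined with the per-letter successor bookkeeping, multiplies out to exactly $\bigO(m^3\lvert\Sigma\rvert^2)$ rather than a larger polynomial. Care is also needed to keep the proof consistent with the ``merge-then-determinize-at-the-end'' implementation invoked in Lemma~\ref{lem:state_mapping}, so that the reachability facts of Lemma~\ref{lma:reachable} and Corollary~\ref{col:reachable} underpinning the word-level bijection remain valid throughout.
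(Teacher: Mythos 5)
Your proposal is correct and follows essentially the same route as the paper: correctness is obtained by assembling Lemma~\ref{lem:state_mapping} (state bijection), Lemma~\ref{lem:deterministic_trans_func}, Lemma~\ref{lem:learned_pdfa_isomorphic} (transition isomorphism), and Lemma~\ref{lma:isomorphic-output} (ranking isomorphism), and the running time is bounded phase by phase exactly as in the paper ($\bigO(|S|^3)$ for the closure/partial order, $\bigO(m)$ merge iterations times $\bigO(m)$ attempted merges times the determinization cost). In fact you are somewhat more careful than the paper's own terse proof: you explicitly supply the equal-rank (indifference) half of the equivalence, which Lemma~\ref{lma:isomorphic-output} alone does not cover, as well as the well-formedness of $\calA_r$ and the minimality argument $\lvert\pi_r\rvert=\lvert Q\rvert$, all of which the paper leaves implicit.
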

\begin{proof} 
    Lemma~\ref{lem:state_mapping} 
    establishes the bijection between two state spaces, Lemma    ~\ref{lem:learned_pdfa_isomorphic} establishes the bisimilation between two transition sets; and lemma \ref{lma:isomorphic-output} shows the bisimilation between the output ranking relations. With all three, the learned \ac{pnfa} is equivalent to the canonical \ac{pdfa} from which $S$ is drawn.

  The following analysis shows the time complexity of our algorithm.
    Constructing the graph $G_{~}$ takes $\bigO(|S|)$. From this graph it takes $\bigO(|G_\sim|^3)=\bigO(|S|^3)$ to construct $\rankS$ and the sample partial order $\succeq_S$.
    The number of stats within $\pdfa_0 = \PTree(S)$ is $\bigO(m)$, and hence, the time to construct $\pdfa_0$ is $\bigO(m|\Sigma|)$.
    The RPNI extension algorithm iterates $m$ times in the worst case and each iteration $u_i$ is checked with $i-1$ states $B < u_i$ to see if it can be merged with them.
    Checking wether the merging is allowed and accepted requires the determination process, which in the worst case takes $\bigO(m|\Sigma|)$.
    Hence, the running time of our algorithm is $\bigO(m^3|\Sigma|^2+|S|^3)$.
\end{proof}
Thus, our algorithm learns an accurate preference model if the given sample is characteristic.
%

We consider the following variant of our decision problem.
\decproblembox{Minimum Consistent PDFA with Characteristic Sample (\MCPDFAChar)}
        {A sample $S$ that is characteristic for the canonical representation of the preference model from which $S$ is taken, and a positive integer $k$.}
        {\Yes if there exists a \ac{pdfa} with at most $k$ states that is consistent with $S$, and \No otherwise.}
\begin{corollary}
    \MCPDFAChar $\in \P$.
\end{corollary}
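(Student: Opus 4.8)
The plan is to reduce the decision problem to a single run of Algorithm~\ref{alg:euclid} followed by a comparison of two integers. Write $n^\ast$ for the number of states of the canonical \ac{pdfa} $\pdfa$ from which $S$ is drawn. Since $S$ is characteristic, Theorem~\ref{thr:learn_pdfa} guarantees that Algorithm~\ref{alg:euclid} outputs a \ac{pdfa} equivalent to $\pdfa$ with exactly $n^\ast$ states in time $\bigO(m^3|\Sigma|^2+|S|^3)$. I claim that $n^\ast$ is precisely the minimum number of states over all \ac{pdfa}s consistent with $S$. Granting this, the procedure ``run Algorithm~\ref{alg:euclid}, read off $n^\ast$, and answer \Yes iff $n^\ast \le k$'' decides \MCPDFAChar in polynomial time, establishing membership in \P.

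The easy direction of the claim is the upper bound: $\pdfa$ itself is consistent with $S$ (the sample is drawn from it) and has $n^\ast$ states, so the minimum consistent size is at most $n^\ast$.

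The substantive step, and the one I expect to be the main obstacle, is the matching lower bound: no \ac{pdfa} consistent with $S$ can have fewer than $n^\ast$ states. I would argue by contradiction. Suppose $\mathcal{B}$ is consistent with $S$ and has fewer than $n^\ast$ states. First I record a soundness fact: because the preorder encoded by any \ac{pdfa} is reflexive and transitive with indifference an equivalence relation, the preference relation of $\mathcal{B}$ is closed under all the derivation rules of Definition~\ref{def:closureS}; hence consistency of $\mathcal{B}$ with $S$ upgrades to consistency with the closure $S^\circ$. Now feed the $n^\ast$ shortest prefixes $\SPref(\pdfa)$, which reach $n^\ast$ distinct states of $\pdfa$, into $\mathcal{B}$. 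By pigeonhole, two of them, $w = \SPref(q_1)$ and $u = \SPref(q_2)$ with $q_1 \ne q_2$, reach the same state of $\mathcal{B}$. Since $\SPref(\pdfa) \subseteq \NU(\pdfa)$ and $\delta(q_0, w) \ne \delta(q_0, u)$, Condition~\ref{itm:char_sp_and_nu} of Definition~\ref{def:char_set} supplies a suffix $y$ with $wy, uy \in W_S$ and $\{(wy, uy, 1), (uy, wy, 1), (wy, uy, \perp)\} \cap S^\circ \ne \emptyset$. By determinism of $\mathcal{B}$, the words $wy$ and $uy$ reach a common state and therefore share a rank, so $\mathcal{B}$ declares $wy \sim uy$; this contradicts the strict-or-incomparable relationship recorded in $S^\circ$, against which $\mathcal{B}$ was just shown to be consistent.

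Combining the two bounds shows the minimum consistent size equals $n^\ast$, and the polynomial-time decision procedure above follows, giving \MCPDFAChar $\in \P$. The only delicate points to verify in a full write-up are (i) that every derivation rule of $S^\circ$ is sound for the preorder of an arbitrary consistent \ac{pdfa}, so that $S$-consistency entails $S^\circ$-consistency, and (ii) that $\SPref(\pdfa) \subseteq \NU(\pdfa)$, so that Condition~\ref{itm:char_sp_and_nu} applies to pairs of shortest prefixes; both are routine, the latter following because each nonempty shortest prefix is a one-letter extension of another shortest prefix and $\epsilon \in \NU(\pdfa)$.
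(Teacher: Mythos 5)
Your proposal is correct, and it is actually more complete than what the paper provides: the paper states this corollary with no proof at all, leaving the reader to infer it from Theorem~\ref{thr:learn_pdfa} (run Algorithm~\ref{alg:euclid}, which on a characteristic sample produces the canonical \ac{pdfa} in polynomial time, and compare its size to $k$). That implicit argument has exactly the gap you identified: \MCPDFAChar asks whether \emph{some} \ac{pdfa} with at most $k$ states is consistent with $S$, which is a priori weaker than asking whether the canonical \ac{pdfa} of the true model has at most $k$ states --- a distinction the paper itself emphasizes right after defining \MCPDFA (``if the answer to this problem is \Yes, it is still unknown whether the canonical \ac{pdfa} has at most $k$ states or not''). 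Your lower bound closes this gap: the observation that $S$-consistency of any \ac{pdfa} upgrades to $S^\circ$-consistency (since the word preorder induced by any \ac{pdfa} is closed under the rules of Definition~\ref{def:closureS}), combined with the pigeonhole argument on $\SPref(\pdfa)$, the inclusion $\SPref(\pdfa) \subseteq \NU(\pdfa)$, and Condition~\ref{itm:char_sp_and_nu} of Definition~\ref{def:char_set}, correctly shows that no consistent \ac{pdfa} can have fewer states than the canonical one when $S$ is characteristic. Both delicate points you flag do hold (the closure-soundness check goes through rule by rule using antisymmetry of $\succeq$ on ranks, and the inclusion follows since every prefix of a shortest prefix is itself a shortest prefix under the length-lexicographic order). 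In short: the paper's route and yours agree on the polynomial-time procedure, but yours supplies the minimality argument that makes the reduction from the decision problem to that procedure actually valid.
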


\section{Case Studies}
\label{sec:case_studies}
In this section, we report results of our implementation.
We implemented our algorithm in Python and performed our experiments on a Windows 11 installed on a device with a core i$7$, 2.80GHz CPU and a 16GB memory. 
\subsection{Running Example}
We fed the sample in Figure~\ref{fig:sample} to the implementation of our algorithm and let it to learn the \ac{pdfa}.
The average running time over $10$ executions was $0.074$ seconds.
Figure shows the \ac{pdfa} learned by the implementation of our algorithm.
This \ac{pdfa} is equivalent to the same \ac{pdfa}s in Figure~\ref{fig:pdfa} and Figure~\ref{fig:u9_u1}, and provides evidence to the 
accuracy of our algorithm.

\begin{figure}[h]
\centering
\begin{subfigure}[b]{0.25\textwidth}
\begin{tikzpicture}[->, >=stealth, shorten >=1pt, auto, node distance=2cm, semithick, draw=black]

  \node[estate,   initial] (s0) [fill=blue!20,draw] {$s_0$: $2$  };
  \node[estate] (s1) [below left of=s0,fill= orange!20,draw] {$s_1$: $1$  };
  \node[estate] (s2) [below right of=s0, fill= orange!20,draw] {$s_2$: $1$  };
  \node[estate] (s3) [below right of=s1,fill=darkgreen!20,draw] {$s_3$: $0$};

  \path (s0) edge  node {a} (s1)
        (s1) edge [bend left]  node {a} (s0)
        (s0) edge  node [right]   {b} (s2)
        (s2) edge [bend right] node [right] {b} (s0)
        (s1) edge node {b} (s3)
        (s3) edge [bend left] node {b} (s1)
        (s3) edge node {a} (s2)
        (s2) edge [bend left] node {a} (s3);

\end{tikzpicture}
\end{subfigure}
\begin{subfigure}[b]{0.2\textwidth}
\begin{tikzpicture}[->, >=stealth, shorten >=1pt, node distance=1cm, semithick]

  \node[draw, circle, fill=orange!20] (1) {1};
  \node[draw, circle, above left= of 1,fill=darkgreen!20] (0) {0};
  \node[draw, circle, above right=  of 1,fill=blue!20] (2) {2};

  \draw (1) -- (0);
  \draw (1) -- (2);

\end{tikzpicture} 
\end{subfigure}
\caption{The PDFA learned from the sample in Figure~\ref{fig:sample}.}\label{fig:casestudy_1_learned_pdfa}
\end{figure}
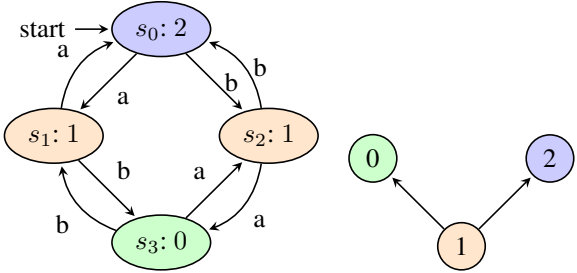 
            This \ac{pdfa} is isomorphic to the running example \ac{pdfa}, the one in Figure~\ref{fig:pdfa}.
            Under this isomorphism, states $s_0$, $s_1$, $s_2$ and $s_3$ of this \ac{pdfa} are respectively mapped to states $00$, $10$, $01$, and $11$ of the \ac{pdfa} in Figure~\ref{fig:pdfa}. It also maps the outputs $0$, $1$, and $2$ of the \ac{pdfa} in this figure respectively to outputs `green', `orange', and `blue' of the \ac{pdfa} in Figure~\ref{fig:pdfa}.

\subsection{Experiment validation: Bee Robot Motion Planning}
In this example, we provide results of our algorithm for a robotic planning problem described in Section~\ref{sec:intro}.

 In our implementation, we generated samples of words   with different sizes, $50$, $100$, $200$, ..., and $700$, as shown in Table~\ref{fig:table_1}. 
 For each number of words, we generated 10 samples of the same size.
 The generation of the words for a sample followed a random process where starting from an empty string, a word is extended by letting each of the letter $t$, $d$, and $o$ to have a probability of $0.25$ to be appended to that word and with probability $0.25$ it was opted to not extending the word.

 We did two experiments to generate comparisons between words.
 In the first experiment, each word of the sample is compared with one third of the words within the sample, while in the second experiment, each word of the sample is compared with half of the words within the sample.
 After generating the set, we verified if the sample is characteristic or not, using the condition in Def.~\ref{def:char_set}.

 Table~\ref{fig:table_1} shows results of the first experiment.
 The first column shows the number of words within the sample.
 The second column shows for each row, how many of the $10$ randomly samples were characteristic.
 According to this table, none of the samples with $50$ words were characteristic, and $8$ of the samples with $700$ words were characteristic. 
 The table also provides, for each row and for each of the four conditions outlined in Definition~\ref{def:char_set} for a sample to be characteristic,   the frequency of violations for each condition.
 %

 The $7$'th column, titled Canonical, indicates for each row, how many times the algorithm learned an equivalent canonical \ac{pdfa}.
The results validated that if a sample is characteristic, then the learned \ac{pdfa} is equivalent to the true \ac{pdfa}, proving the correctness of our algorithm. 
 Additionally, even for some samples that were not characteristic, the algorithm was still able to learn the correct \ac{pdfa}.
 For example, out of the $10$ samples with $100$ words each, none were characteristic, but the algorithm was able to learn the true canonical \ac{pdfa} for $3$ of them.

 For the second experiment, we used the same processes of generating the samples, but this time each word of the sample
 is compared with half of the words.
 Results of this experiment is shown in Table~\ref{fig:table_2}.
 While in the previous experiment $8$ of the $10$ randomly generated samples with $800$ words were characteristic, in this experiment, each of the scenarios for $200$, $400$, $500$, $600$, and $700$ words had more than $8$ characteristic samples.
 This proves that increasing the number of comparisons between the words of a sample, increases the likelihood of the sample to be characteristic.

\begin{table}[t]   
    \setlength{\tabcolsep}{4pt}
\renewcommand{\arraystretch}{1.5}
        \centering
        \small
        \begin{tabular}{lccccccc}
            \toprule
            \multicolumn{1}{l}{Words}   &\multicolumn{1}{c}{Characteristic}    &\multicolumn{4}{c}{Violated Condition}  &\multicolumn{1}{c}{Canonical} &\multicolumn{1}{c}{Time}  \\ 
            \cmidrule(lr){3-6} 
                                   &    & {1} & {2} & {3} & {4} &  & \\
            \midrule

            50 	&0 	&8 &0 	&0 	&4 	&1  &	0.047 \\
100  &	0 &	3 &	10 &	0 &	5 &	3 &	0.200 \\
200 & 	3 &	0 &	4 &	0 &	4 &	6 &	0.551 \\
300 & 	3 &	0 &	0 &	0 &	7 &	3 &	2.169 \\
400 & 	4 &	0 &	0 &	0 &	6 &	4 &	4.688 \\
500 & 	4 &	0 &	0 &	0 &	6 &	4 &	10.318 \\
600 & 	8 &	0 &	0 &	0 &	2 &	8 &	18.641 \\
700 & 	8 &	0 &	0 &	0 &	2 &	9 &	29.296 \\


            \bottomrule
        \end{tabular}
     \caption{Results of our implementation for learning the \ac{pdfa} in Figure~\ref{fig:garden_pdfa}.
        Each row shows results for $10$ randomly generated samples. 
        Each word is compared against one third of the words randomly. 
        }
        \label{fig:table_1}
        \end{table}

            \begin{table}[t] 
    \setlength{\tabcolsep}{4pt}
\renewcommand{\arraystretch}{1.5}
        \centering
        \small
         \begin{tabular}{lccccccc} 
            \toprule
            \multicolumn{1}{l}{Words}   &\multicolumn{1}{c}{Characteristic}    &\multicolumn{4}{c}{Violated Condition}  &\multicolumn{1}{c}{Canonical} &\multicolumn{1}{c}{Time}  \\ 
            \cmidrule(lr){3-6} 
                                   &    & {1} & {2} & {3} & {4} &  & \\
            \midrule

            50 &	0 &	8 &	10 & 0 &	0 &	1 &	0.047 \\
100 &	0 &	2 & 	10 & 	0 &	3 &	3 & 	0.142 \\
200 &	9 &	0 &	0 &	0 &	1 &	9 &	0.695 \\
300 & 	7 &	0 & 	1 & 	0 &	3 &	7 &	3.316 \\
400 &	10 &	0 &	0 &	0 &	0 &	10 &	9.064 \\
500 &	9 &	0 &	0 &	0 &	1 &	9 &	15.832 \\
600 &	9 &	0 &	0 &	0 &	1 &	9 &	25.172 \\
700 &	10 &	0 &	0 &	0 &	0 &	10 &	40.362 \\
            \bottomrule
        \end{tabular}
       \caption{Results of our implementation for learning the \ac{pdfa} in Figure~\ref{fig:garden_pdfa}.
        Each row shows results for $10$ randomly generated samples. 
        Each word is compared against one half of the words randomly. 
        }
        \label{fig:table_2}
        \end{table}

\section{Conclusions and Future Work}
\label{sec:conc}
In this paper, we consider the problem of  learning a preference deterministic finite automaton, representing  a user preference over a set of temporally extended goals. The learner is presented with a sample containing a collection of pairwise comparisons between temporal sequences/words (trajectories).
We demonstrate that this problem is computationally challenging and propose an inference algorithm based on a modification of the RPNI automata learning algorithm \cite{oncina1992identifying}. The proposed algorithm incorporates similar state-merging operations as RPNI but uses distinct decision rules to determine whether a merge is accepted or not.

We define the concept of characteristic samples and prove that, given a characteristic sample for the preference automaton, our algorithm successfully learns the minimal \ac{pdfa} equivalent to the true \ac{pdfa} from which the sample is drawn. The correctness and effectiveness of our algorithm are validated through several case studies, including scenarios with both characteristic and non-characteristic samples.

%
Future work will explore an active learning of preference,  where the learner can query a limited number of words and solicit human feedback on pairwise comparisons of specifically selected pairs based on the current dataset. In this context, the learner must strategically select pairs to minimize the number of queries required from the human. It is noted that our proposed algorithm is a passive, batch learner: If a new datum is added, then the learner must be restarted from scratch. 
Incremental learning algorithms could also be desired, where the learner receives data sequentially and incrementally updates the learned model with newly seen data. Additionally, future research will investigate integrating preference learning with preference-based decision-making in online planning setups. This approach aims to enable sample-efficient learning of control strategies directly from human feedback.

%


\section*{Acknowledgement}
The author(s) disclosed receipt of the following financial support for the research, authorship, and/or publication of this article: This work was supported by   the Air Force Office of Scientific Research under award number FA9550-21-1-0085 and in part by NSF under award numbers 2024802.

\bibliographystyle{plain}        
\bibliography{references}           



\end{document}